\documentclass[sigconf]{aamas} 

\usepackage{balance} 



\setcopyright{ifaamas}
\acmConference[AAMAS '22]{Proc.\@ of the 21st International Conference
on Autonomous Agents and Multiagent Systems (AAMAS 2022)}{May 9--13, 2022}
{Online}{P.~Faliszewski, V.~Mascardi, C.~Pelachaud,
M.E.~Taylor (eds.)}
\copyrightyear{2022}
\acmYear{2022}
\acmDOI{}
\acmPrice{}
\acmISBN{}
\usepackage{microtype}
\usepackage{graphicx}
\usepackage{booktabs} 
\usepackage{url}

\newtheorem{theorem}{Theorem}

\newtheorem{lemma}[theorem]{Lemma}
\newtheorem{corollary}[theorem]{Corollary}

\setlength{\textfloatsep}{2pt}
\newtheorem{definition}{Definition}[section]
\usepackage{algorithm}%
\usepackage{algorithmic}
\usepackage{graphicx}
\usepackage{textcomp}
\usepackage{xcolor}

\setlength{\textfloatsep}{2pt}
\usepackage{multirow}
\usepackage{subcaption}
\usepackage[font={small}]{caption}

\usepackage[inline]{enumitem}
\usepackage{amsmath}


\acmSubmissionID{350}


\title[AAMAS-2022 Formatting Instructions]{Characterizing Attacks on Deep Reinforcement Learning}


\author{Xinlei Pan$^{*}$}
\affiliation{
  \institution{UC Berkeley}
  \country{United States}
}
\email{xinleipan@berkeley.edu}

\author{Chaowei Xiao$^{*}$}
\affiliation{
  \institution{NVIDIA, ASU}
  \country{United States}
}
\email{xiaocw@asu.edu}

\author{Warren He}
\affiliation{
  \institution{UC Berkeley}
  \country{United States}
}
\email{_w@eecs.berkeley.edu}
  
\author{Shuang Yang}
\affiliation{
  \institution{Alibaba}
    \country{P.R.China}
}
\email{shuang.yang@antfin.com}

\author{Jian Peng}
\affiliation{
  \institution{UIUC}
    \country{United States}
}
\email{jianpeng@illinois.edu}
  
\author{Mingjie Sun}
\affiliation{
  \institution{Carnegie Mellon University}
  \country{United States}
 }
\email{mingjies@cs.cmu.edu}

\author{Jinfeng Yi}
\affiliation{
  \institution{JD AI Research}
  \country{P.R.China}
}
\email{yijinfeng@jd.com}

\author{Zijiang Yang}
\affiliation{
  \institution{Xi'an Jiao Tong University}
    \country{P.R.China}
}
\email{zijiang@xjtu.edu.cn}

\author{Mingyan Liu}
\affiliation{
  \institution{University of Michigan, Ann Arbor}
  \country{United States}
}
\email{mingyan@umich.edu}

\author{Bo Li}
\affiliation{
  \institution{UIUC}
  \country{United States}
}
\email{lbo@illinois.edu}
  
\author{Dawn Song}
\affiliation{
  \institution{UC Berkeley}
  \country{United States}
}
\email{dawnsong@cs.berkeley.edu}
\thanks{$^*$indicates equal contribution}

\begin{abstract}
Recent studies show that Deep Reinforcement Learning (DRL) models are vulnerable to adversarial attacks, which attack DRL models by adding small perturbations to the observations. However, some attacks assume full availability of the victim model, and some require a huge amount of computation, making them less feasible for real world applications. In this work, we make further explorations of the vulnerabilities of DRL by studying other aspects of attacks on DRL using realistic and efficient attacks. First, we adapt and propose efficient black-box attacks when we do not have access to DRL model parameters. Second, to address the high computational demands of existing attacks, we introduce efficient online sequential attacks that exploit temporal consistency across consecutive steps. Third, we explore the possibility of an attacker perturbing other aspects in the DRL setting, such as the environment dynamics. Finally, to account for imperfections in how an attacker would inject perturbations in the physical world, we devise a method for generating a robust physical perturbations to be printed. The attack is evaluated on a real-world robot under various conditions. We conduct extensive experiments both in simulation such as Atari games, robotics and autonomous driving, and on real-world robotics, to compare the effectiveness of the proposed attacks with baseline approaches. To the best of our knowledge, we are the first to apply adversarial attacks on DRL systems to physical robots.
\end{abstract}
\keywords{Adversarial Machine Learning; Reinforcement Learning; Robotics}
\newcommand{\BibTeX}{\rm B\kern-.05em{\sc i\kern-.025em b}\kern-.08em\TeX}

\begin{document}
\pagestyle{fancy}
\fancyhead{}

\maketitle 

\begin{figure}[t!]
  \centering
    \includegraphics[width=\linewidth]{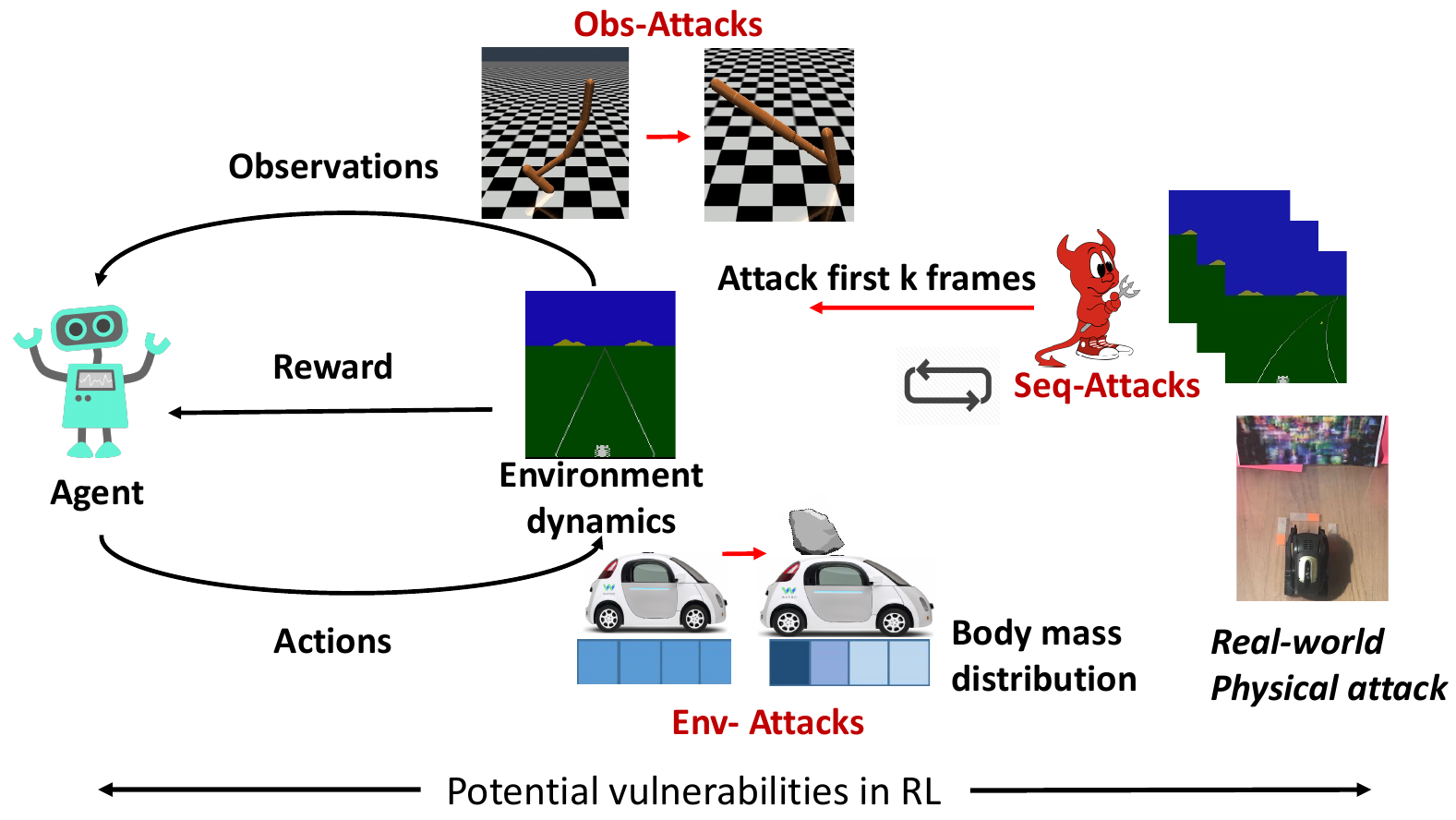}
    \caption{Analysis of adversarial attacks on DRL. RL environments are usually modeled as a Markov Decision Process (MDP), and its \textit{observation} space, and \textit{environment (transition) dynamics} may be attacked. We propose adversarial attacks that have improved computational efficiency by using the sequential nature of MDP, with attacks in both digital and physical environments.}
    \label{fig:taxonomy}
\end{figure}

\section{Introduction}
With recent progress of DRL in various applications, such as computer games~\cite{ghory2004reinforcement,mnih2013playing,mnih2016asynchronous},
autonomous navigation~\cite{dai2005approach,pan2019spc} and robotics~\cite{levine2015end}, the safety and robustness of DRL models
are becoming a major concern, especially on real world robotics tasks~\cite{levine2015end,tan2021systems,pan2020zero}. Recently, adversarial attacks that have imposed challenges to general deep neural network (DNN) models~\cite{goodfellow2014explaining,li2014feature,xiao2019meshadv,xiao2018spatially,xiao2018generating,qiu2019semanticadv,xiao2018characterizing} started to challenge the robustness of DRL models. However, the adversarial attacks on DRL models can be different from attacks on DNN models: DRL models focus on sequential decision-making problems while DNN models mostly work on individual prediction problems with no temporal continuity. Existing white-box attacks on DRL models assume almost full access to the victim policy~\cite{huang2017adversarial}. Some black-box attacks assume partial knowledge of the victim policy~\cite{huang2017adversarial,gleave2019adversarial} but attack each observation individually. These black-box attacks can be computationally intensive especially on tasks with high dimensional inputs. Moreover, most of these attacks have only been evaluated in simulated environments. It remains unclear of their effects on real world DRL models.

In this work, we seek to expand our understanding of the vulnerabilities of DRL systems. To achieve this goal, we propose a set of realistic attacks with improvement on the computational efficiency. To make less assumptions on the victim model, we focus on black-box attack. Based on the components of MDP (shown in Figure~\ref{fig:taxonomy}), we categorize attacks into different types, including attacks on observation space and on environment dynamics. To improve the attack efficiency on multiple high dimensional inputs, we propose to reduce the amount of computation by utilizing the sequential nature of MDP tasks. Finally, to validate the feasibility of adversarial attacks on real world DRL tasks, we perform real world physical attack on a real robot control task. We summarize the proposed attacks on DRL models and our contribution as follows.

\textbf{Advanced black-box attacks.}
An attacker may not have access to the DRL system's internals,  making white-box attacks infeasible in this case.
For DNN models, black-box attacks that take advantage of query access to the victim model have emerged~\cite{bhagoji2017exploring}. Based on this progress, we adapt and improve this method for attacking DRL systems and demonstrate the effectiveness. Specifically, starting from the finite difference (FD) based attack~\cite{bhagoji2017exploring}, we propose an improvement, named {\textit{adaptive sampling FD (SFD)}},  that reduces the amount of computation by adaptively sampling the input dimensions for gradient estimation.
We provide a theoretic analysis of SFD method and prove its efficiency and estimation error bound. 

\textbf{High throughput perturbation.}
The adaptive SFD method is still inefficient since the attacker would need to generate adversarial examples on each individual frame. To make further improvement, we propose an {\textit{online sequential FD attack}} based on the fact that DRL tasks have temporal continuity, where temporal consecutive frames tend to correlate with each other. We hypothesize that the attacks generated on a small group of \textit{selected frames} can be applied globally to other similar and temporally close frames. When limiting the number of frames to be selected, it is important to find the best set of frames for generating such attacks. We observe that not all action decisions (frames) are critical within a trajectory, and hypothesize that attacks that are generated based on a small group of important and critical frames can be more effective. Based on this, we propose an {\textit{optimal frame selection}} approach to select the most important frames to generate the attack. We provide mathematical analysis of this approach. 

\textbf{Perturbations on components other than the observation.}
As shown in Figure~\ref{fig:taxonomy}, the observation space is not the only component in DRL systems. Besides,  environment dynamic is another  important component in the DRL systems. 
Thus, we propose another attacks that perturb the environment transition dynamics by changing physical properties of the environment rather than changing the input observation to the victim model.

\textbf{Physical attacks.}
While we have shown the digital adversarial examples on DRL systems, it is still unclear about the possibility  to generate   physical adversarial examples to attack the physical DRL system.
We bridge this gap by using a printed adversarial patch and a toy robot visual navigation task.
We show that the overall end-to-end attack is effective even under effects such as manufacturing inaccuracy and varied viewing angles.

We conduct extensive experiments on the above proposed attacks and compare them with existing white-box (which could be viewed as the performance upper bound) and black-box attacks both in simulation and on a real robot. We show that it is feasible to attack real-world DRL systems with our proposed approaches.

\begin{table*}
    \centering
    \caption{Summary of the adversarial attacks on DRL systems,
    categorized based on our proposal.
    The name reflects the category of the attack method. For example, \textsf{obs-fgsm-wb} means attack
    on observation using fast gradient sign method based white-box attack and \textsf{obs-fgsm-bb} means attack on observation using fast gradient sign method based black-box attack. The attack methods we proposed are
    highlighted using bold text. ``Arch.,'' ``Param.,'' and ``Query'' indicate whether the attack requires knowledge of the policy network's architecture, parameters and whether it needs to query the policy network.}
    \label{tab:attacks}
    \vspace{-10pt}
    \resizebox{\linewidth}{!}{%
    \begin{tabular}{llcccccccc}
        \toprule
        Attack             &  MDP Component & \multicolumn{4}{c}{Attacker Knowledge} & Real-time & Physical & Temporal Dependency \\
        \cmidrule(r){1-1} \cmidrule(rl){2-2} \cmidrule(lr){3-6}   \cmidrule(l){7-9} 
        & & White/Black-Box & Arch. & Param. & Query &  & & \\
        \cmidrule(lr){3-6}
        \textsf{obs-fgsm-wb}       & Observation   & White-box     & Yes & Yes & Yes & Yes      & No  & Independent \\ 
        \textsf{obs-fgsm-bb}      & Observation  & 
        Black-box & No & No & No &  Yes & No  & Independent \\ 
        \textbf{\textsf{obs-fd-bb}}       & Observation   & 
        Black-box & No & No & Yes & Too slow & No  & Independent \\
        \textbf{\textsf{obs-sfd-bb}}       & Observation   &  Black-box & No & No & Yes & slow & No  & Independent \\
        \cmidrule(lr){3-6}
        \textbf{\textsf{obs-seq-fgsm-wb}}   & Observation   & White-box       & Yes & Yes & Yes & Yes      & No  & Sequential  \\
        \textbf{\textsf{obs-seq-fd-bb}}     & Observation   & 
        Black-box & No & No & Yes & Yes      & No  & Sequential  \\
        \textbf{\textsf{obs-seq-sfd-bb}}    & Observation   & 
        Black-box & No & No & Yes & Yes      & No  & Sequential  \\
        \cmidrule(l){2-9}
        \textbf{\textsf{env-search-bb}}     & Transition Dynamics & Black-box     & No & No & Yes & N/A  & Yes & N/A         \\
        \bottomrule
    \end{tabular}}
    \vspace{-10pt}
\end{table*}

\section{Related work}

\label{sec:relatedwork}

\textbf{Adversarial attacks on machine learning models}.
Our attacks draw inspirations from previously proposed attacks.
~\citeauthor{goodfellow2014explaining}~\cite{goodfellow2014explaining} describes the fast gradient sign method (FGSM) of generating adversarial perturbations in a white-box setting.
~\citeauthor{carlini2016towards}~\cite{carlini2016towards} describe additional methods based on optimization, which results in smaller perturbations.
~\citeauthor{moosavi2016universal}~\cite{moosavi2016universal} demonstrates a way to generate a ``universal'' perturbation that is effective on multiple inputs. ~\citeauthor{xiao2019meshadv}~\cite{xiao2019meshadv} generates adversarial examples in 3D world by changing the shape and texture information respectively. 
~\citeauthor{evtimov2017robust}~\cite{evtimov2017robust} shows that adversarial examples can be robust to natural lighting conditions and viewing angles using real world examples.
Furthermore, black-box attacks without providing victim model's training algorithms are also proposed for general machine learning models~\cite{papernot2017practical,chen2017zoo}.

\textbf{Adversarial attacks on DRL}.
DRL methods train a policy that maps state observations to action decisions. Examples include Deep Q-Learning (DQN)~\cite{mnih2015human} for discrete control, and Deep Deterministic Policy Gradient (DDPG)~\cite{lillicrap2015continuous} for continuous control. Proximal Policy Optimization (PPO)~\cite{schulman2017proximal}, and Soft Actor-Critic (SAC)~\cite{haarnoja2018soft} are also proposed recently.
We select DQN and DDPG as our target victim algorithms, but our attacks can apply to other RL methods.

Recently, ~\citeauthor{huang2017adversarial} demonstrates an attack that uses FGSM to perturb observation frames in a DRL setting~\cite{huang2017adversarial}. However, the white-box setting in this work requires knowing the full victim model and the preferred action. They also propose a black-box attack method based on transferability, where the surrogate models are trained to obtain attacks and the generated attacks are then applied on the victim models. We build on FD-based attacks that do not rely on transferability. ~
\citeauthor{lin2017tactics}~\cite{lin2017tactics} designs
an algorithm to achieve targeted attack for DRL models, and they propose a method to select optimal frames for attacks based on the preference of the policy on the best action over the worst action. We provide related theoretical proofs to demonstrate the optimality of frame selections. ~\citeauthor{behzadan2017vulnerability}~\cite{behzadan2017vulnerability} propose a black-box attack method that trains another DQN network
to minimize the expected return using FGSM. ~\citeauthor{gleave2019adversarial}~\cite{gleave2019adversarial} proposes to train another agent to interact and modify the environment so as to indirectly attack the victim model. The black-box attacks in these related works are all evaluated on simulated environments. We provide real world experiments validating the effectiveness of our proposed attacks. For adversarial
attacks on environment dynamics, ~\citeauthor{pan2019you}~\cite{pan2019you} propose
to use candidate inference attack to infer possible dynamics used for training a candidate policy, posing potential privacy-related risk
to deep RL models. 

\textbf{Robust RL via adversarial training}.
Safety and generalization in various robotics and autonomous driving applications
have drawn significant attention for training robust models~\cite{packer2018assessing,pinto2017robust,pan2019risk}. Knowing 
how DRL models can be attacked
is beneficial for training robust DRL agent.
~\citeauthor{pinto2017robust} proposes to train 
a RL agent to provide adversarial attack during training 
so that the agent can be robust against
dynamics variations~\cite{pinto2017robust}. 
However, since they manually selected the perturbations on
environment dynamics, the attack provided in their work may not be able to generalize to broader RL systems.
Additionally, their method relies on an accurate modeling of the environment
dynamics, which may not be available for real world tasks such as robotics systems.
\vspace{-5pt}

\section{ Threat Model On DRL}
\label{sec:taxonomy}
Our victim models are trained by interacting with environments that are Markov Decision Processes (MDPs), which include several components: the state space $\mathcal{S}$, the action space $\mathcal{A}$, the transition dynamics $\mathcal{T}$ and the reward function $\mathcal{R}$. The goal of the DRL model is to learn a policy $\pi$ so as to maximize the agent's future expected return $\mathbb{E}_{\pi}[\sum_t \gamma^tr_t]$, where $\gamma$ is a discount factor. In this work, we provide methods for adversarial attack for trained DRL policies, including discrete control and continuous control methods. We select two representative algorithms: DQN~\cite{mnih2015human} for discrete control and DDPG~\cite{lillicrap2015continuous} for continuous control.

We aim to attack well-trained DRL models without accessing the victim model's parameters and only querying the victim model to get model output. The goal of the attacker is to minimize the agent's future expected return.  We do not assume these attackers have full control over the agent nor the robotics system. They are weaker than, for example, an attacker that could generally compromise the robot's software. To avoid trivial detection, an adversary needs to constrain the magnitude of perturbation. In this work, 
we bound the $L_{\inf}$ norm of the added perturbation during the evaluation of the digital attacks. 
For real world physical attacks, we follow the common settings in the literature~\cite{evtimov2017robust,athalye2017synthesizing}.

\section{Adversarial Attacks on DRL} 
\label{sec:attacks}

In this section we develop several concrete attacks to improve the attack feasibility and efficiency. We first introduce some baseline attacks and then describe our new attacks in detail. 
Table~\ref{tab:attacks} summarizes these attacks, where we categorize them based on their attack components (attack observation or transition dynamics), attacker's knowledge, the computational efficiency of the attack (real-time), whether the attack requires physically changing the environment (physical), and whether the attack is based on temporally dependency of consecutive frames (independent or sequential). 
\vspace{-0.3cm}

\subsection{Baseline Attacks}
We discuss both white-box and black-box baseline attack methods. 

\textbf{White-box attacks}.
In this setting, we assume that the attacker can access the agent's policy 
network $\pi(a|s)$ where $a$ refers to the action and $s$ refers to 
the state.
\citeauthor{huang2017adversarial}\cite{huang2017adversarial} has previously introduced one attack in this category that applies the FGSM method to generate white-box perturbation purely 
on observations.
We reproduce this experiment with our \textbf{\textsf{obs-fgsm-wb}} attack. This attack's application scenario is
when we know the policy network's architecture and parameters. 

\textbf{Black-box attacks}.
There are different scenarios depending on the attacker's knowledge. In one scenario, the attacker does not have any information about the model architecture, or parameters, and it can't query the model either. In this case, the attacker can perform a ``transferability" based attack by attacking a surrogate model (of which it has complete knowledge) and then transfer the perturbation to the victim model.
\citeauthor{huang2017adversarial}\cite{huang2017adversarial} introduced a black-box variant of the FGSM attack using transferability, which we denote as \textbf{\textsf{obs-fgsm-bb}}.
Their experiments assume that the attacker has access to the same environment, knows the training algorithm of the targeted agent, and uses the same algorithm to train the victim model.
\vspace{-0.3cm}

\subsection{Advanced Black-Box Attacks}
In an alternative black-box setting, the attacker has access to query the model, obtaining the model's outputs on given inputs (while still not knowing the model architecture or parameters).
This setting represents a realistic real-world deployment, for example where an executable copy of the agent is shipped to customers, or where an online service is offered. In this setting, we propose more advanced black-box attacks, which take advantage of this query access.

\textbf{Black-box finite difference (FD) based attack}.
Baseline black-box attack \textbf{\textsf{obs-fgsm-bb}} requires retraining a surrogate policy. Previous work~\cite{bhagoji2017exploring} applies the finite difference (FD) method in attacking classification models.
We extend the FD method to DRL systems in \textbf{\textsf{obs-fd-bb}} which doesn't require retraining a new policy. This attack works in
the setting where we don't have the policy network's
architecture or parameters and the training algorithms, but can query the network.
FD based attack on DRL uses FD to estimate gradient with respect to 
the input observations. It then generates perturbations on the input observations by using the estimated gradient.
The key step in FD is to estimate the gradient. Denote the loss function as $L$ and state input
as $\mathbf{s}\in\mathbb{R}^d$. The
canonical basis vector
$\mathbf{e}_i$ is defined as an $d$ dimension vector with 1 only in the $i$-th
component and 0 otherwise. The FD method estimates gradients
via the following equation
\begin{equation}
\begin{split}
    \nabla_{\mathbf{s}}L(\mathbf{s})=\mathbf{FD}(L(\mathbf{s}), \delta) = & \bigg[\frac{L(\mathbf{s}+\delta\mathbf{e}_1)-L(\mathbf{s}-\delta\mathbf{e}_1)}{2\delta}, \\
    & \cdots, \frac{L(\mathbf{s}+\delta\mathbf{e}_d)-L(\mathbf{s}-\delta\mathbf{e}_d)}{2\delta}\bigg]^\intercal,
    \end{split}
\end{equation}
where $\delta$ is a parameter to control estimation accuracy. The loss function $L$ depends on the actual RL algorithm we use, and since we have the model output, we can select an action with a minimal value. Define the state-action value function as $Q(s,a)$, then given a state $s$, we can obtain a bad target action $a_{t}$ from the model output as
$a_t = \arg\min_{a}Q(s,a)$.
Then we define $L$ to induce the model to select that bad action as follows. Denote the actor of the RL algorithm as $\pi$; the loss function for continuous control is:
    $L(\mathbf{s}) = \|\pi(\mathbf{s})-a_t\|_2^2,$
and the loss function for discrete control is,
$    L(\mathbf{s}) = \text{CELoss}(\pi(\mathbf{s}), a_t)$,
where CELoss() is the cross-entropy loss for classification. In the DQN setting, there is no actor, but we can define an action probability distribution as $$
    \pi(\mathbf{s}) = \arg\max_{a} \frac{\exp(Q(\mathbf{s},a))}{\sum_{a_i\in\mathcal{A}}\exp(Q(\mathbf{s},a_i))}.$$
For $d$ dimensional input, the finite difference method
would require $2d$ queries to obtain the estimation, which is 
computationally intensive for high dimensional inputs such as images .
Therefore, we propose a sampling technique to mitigate this computational cost.

\textbf{Adaptive sampling based FD (SFD)}. Many deep learning models extract features
from inputs patch-wise and have sparse activation map~\cite{bau2017network}.
Based on the this observation, we propose a method for estimating gradients that exploits this spatial structure.
In this method, we first estimate the gradient with respect to some randomly sampled pixels, then iteratively, we identify pixels where the gradient has a high magnitude and estimate the gradient with respect to surrounding pixels.

Given a function $f(\cdot;w): \mathbb{R}^d\rightarrow \mathbb{R}^1$, where
$w$ is the model parameter (we omit this for conciseness below),
our goal is to estimate the gradient of $f$
with respect to an input $x \in  \mathbb{R}^d$: $\nabla_x\hat{f}(x)$.
We define the nontrivial dimension of the gradient of $f$ at $x$ as $\{j\in\{1,2,\cdots,d\}; |\nabla_jf(x)|\geq\theta\}$,
i.e., the dimensions with gradient absolute value greater or equal to a threshold value $\theta>0$.
To estimate nontrivial dimension of the gradient, first,
we randomly sample $k$ dimensions from $\{1, \cdots, d\}$,
to get a set of dimensions $S=\{S_1, S_2, \cdots, S_k\}$,
and use FD to estimate the gradients
for dimensions in $S$. Then we select a
set of dimensions $S'=\{j\in S;|\nabla_jf(X;w)|\geq\theta\}$, and use FD to estimate the gradients of the neighbors (a set $S''$)
of dimensions in $S'$, if these gradients haven't been estimated (for dimension $i$ within a $d$ dimensional vector, the neighbor dimension is defined as dimension $i+1$ if it exists). 
Then we select dimensions with absolute gradients no less than $\theta$ from $S''$ and find their neighbors to estimate gradients. We repeat this process for multiple iterations. 
By exploring the sparse large gradients this way, we can adaptively
sample dimensions to estimate gradients, which can significantly
reduce the number of queries. We give the full attack algorithm of \textbf{\textsf{obs-sfd-bb}}.
in our appendix. We denote \textbf{\textsf{obs-s[n]fd-bb}} as the attack of \textsf{obs-sfd-bb} with $n$ iterations.

To better understand the benefits of SFD, here we provide an analysis of this
algorithm and estimate the amount of nontrivial dimension of the gradient
that can be estimated using our method in
Lemma~\ref{co2}. The basic idea of this lemma is to prove that by using SFD, we can sample more of the nontrivial dimensions of the gradient than by using random sampling. We also provide an error bound for the estimated gradient with SFD in Theorem~\ref{theorem2}.

\theoremstyle{definition}
\begin{definition}[Neighbor Dimension's Gradient]
$\forall i,j\in\{1,2,\cdots,d\}$ and $j=i+1$, we define the neighbor dimension's gradient as $\nabla_if(x)^N = \nabla_{j=i+1}f(x)$. 
Note that $j=i+1$ is equivalent to $j=i-1$, and to be general we choose 
the first one to obtain the definition. 
\end{definition}

\theoremstyle{definition}
\begin{definition}[Non-trivial Gradient Dimension]
Given a positive gradient threshold $\theta$, an input data instance $x\in\mathbb{R}^d$, and a loss function $f: \mathbb{R}^d\rightarrow\mathbb{R}^1$, for any dimension $i\in\{1,\cdots, d\}$,
if $|\nabla_if(x)|\geq\theta$, then we define this gradient as non-trivial gradient and the corresponding
dimension $i$ as non-trivial gradient dimension. On the other hand, if $|\nabla_if(x)|<\theta$,
then we define this gradient as trivial gradient and the corresponding dimension $i$ as trivial
gradient dimension.
\end{definition}

\theoremstyle{definition}
\begin{definition}[Gradient Sampling Probability]
Given a selected threshold $\theta>0$ in Algorithm SFD, for any $x\in\mathbb{R}^d$, define the non-trivial gradient sampling probability as,
\begin{equation}
P_A(\theta) = \frac{1}{|D_A|}\sum_{i \in D_A}1(|\nabla_if(x)|\geq\theta),
\end{equation}
where $D_A$ represents the set of dimensions selected by algorithm $A$. Therefore, the gradient sampling probability of SFD and random sampling are $P_\textit{SFD}(\theta)$ and $P_\textit{random}(\theta)$ respectively. Some further definitions on neighbor gradient distribution probability are as following:

\begin{itemize}
    \item If $|\nabla_if(x)|\geq\beta+\theta$, then define
    \begin{equation}
        \begin{split}
            p(|\nabla_if(x)^N|\in[\theta, \beta+\theta]) & = q \\
            p(|\nabla_if(x)^N|\in[\beta+\theta,\infty)) & = 1-q. \\
        \end{split}
    \end{equation}
    \item If $|\nabla_if(x)|\in[\theta, \beta+\theta]$, then define
    \begin{equation}
        \begin{split}
            p(|\nabla_if(x)^N|\in[0, \theta]) & = p_1 \\
            p(|\nabla_if(x)^N|\in[\theta, \beta+\theta]) & = p_2 \\
        p(|\nabla_if(x)^N|\in[\beta+\theta, \infty)) & = p_3.
        \end{split}
    \end{equation}
\end{itemize}
\end{definition}

Based on the above assumption that these distribution $p_1,p_2,p_3$ and $q$ are defined over
all possible dimensions in one image (over $i$) and these distribution works throughout the entire gradient estimation iteration process, we have the following lemma. 

\begin{lemma}
\label{co2}
We make
the following assumptions on $f$:  $\exists\beta>0$, s.t.
$|\nabla_if(x)-\nabla_if(x)^N|\leq\beta, \forall i\in\{1,\cdots, d-1\}, \forall x\in\mathbb{R}^d$. 
For dimension $i$ whose gradient $|\nabla_if(x)|\in [\theta, \beta+\theta]$, the probability that the gradient magnitude of its neighborhood pixel is in $[0, \theta]$ is $p_1$. We conclude,
as long as $p_1 < 1 - P_\textit{random}(\theta)$, 
we have $P_\textit{SFD}(\theta)>P_\textit{random}(\theta)$.
\end{lemma}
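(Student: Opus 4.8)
The plan is to reduce the comparison to a single convex-combination inequality. Write $\rho := P_\textit{random}(\theta)$ for the overall fraction of non-trivial dimensions; since a uniformly random draw is unbiased, the non-trivial fraction among a random sample equals $\rho$ in expectation. The goal is to show that the non-trivial fraction $P_\textit{SFD}(\theta)$ among the dimensions selected by Algorithm~\ref{alg:sfd} strictly exceeds $\rho$ whenever $p_1 < 1 - \rho$. I would assume throughout that $\rho > 0$, so that the initial random sample contains at least one non-trivial dimension and the neighbor-sampling step is actually triggered.

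First I would bound the probability that the neighbor of a non-trivial dimension is itself non-trivial. Condition on a dimension $i$ with $|\nabla_i f(x)| \geq \theta$ and split into the two regimes of the Gradient Sample Probability definition. If $|\nabla_i f(x)| \geq \beta + \theta$, the smoothness assumption $|\nabla_i f(x) - \nabla_i f(x)^N| \leq \beta$ forces $|\nabla_i f(x)^N| \geq |\nabla_i f(x)| - \beta \geq \theta$, so the neighbor is non-trivial with probability $1$ (consistent with $q + (1-q) = 1$). If instead $|\nabla_i f(x)| \in [\theta, \beta + \theta]$, the neighbor is trivial with probability exactly $p_1$, hence non-trivial with probability $1 - p_1$. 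Taking the worse of the two cases, the neighbor of any non-trivial dimension is non-trivial with probability at least $1 - p_1$.

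Next I would aggregate over the sampled set. The set selected by SFD decomposes into the initial random block $P$, whose non-trivial fraction is $\rho$, and the neighbor block $P''$ of the non-trivial dimensions $P'$, whose non-trivial fraction is at least $1 - p_1$ by the previous step. Hence $P_\textit{SFD}(\theta)$ is a convex combination of a quantity equal to $\rho$ and a quantity at least $1 - p_1$, with a strictly positive weight on the latter because $\rho > 0$ guarantees $P'$ is nonempty. The hypothesis $p_1 < 1 - \rho$ is precisely $1 - p_1 > \rho$, so the neighbor block contributes a strictly larger fraction than $\rho$; a convex combination of $\rho$ and something strictly larger than $\rho$ is strictly larger than $\rho$, giving $P_\textit{SFD}(\theta) > \rho = P_\textit{random}(\theta)$. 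Because the stationarity assumption on $p_1, p_2, p_3, q$ makes the same per-neighbor bound hold at every iteration, repeating the expansion only appends further blocks with non-trivial fraction at least $1 - p_1 > \rho$, which cannot lower the aggregate below $\rho$.

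The main obstacle, I expect, is the case analysis in the second step: correctly using the $\beta$-smoothness bound to collapse the $|\nabla_i f(x)| \geq \beta + \theta$ regime to probability $1$, and identifying $1 - p_1$ as the uniform worst-case lower bound across the mixture of the two regimes. A secondary subtlety is that $P_\textit{SFD}(\theta)$ is an empirical fraction over a set that grows across iterations, so the argument must be phrased in expectation (or under the stated stationarity of the neighbor distributions), and the convex-combination inequality must be verified to hold independently of the relative sizes of the initial and neighbor blocks.
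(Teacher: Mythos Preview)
Your proposal is correct and follows essentially the same approach as the paper. Both arguments reduce to the key observation that the neighbor of a non-trivial dimension is itself non-trivial with probability at least $1-p_1$: you obtain this by case analysis (using $\beta$-smoothness to force probability $1$ in the $|\nabla_i f(x)|\geq\beta+\theta$ regime and $1-p_1$ in the boundary regime), while the paper obtains the identical bound by tracking the two subpopulations through an explicit recursion $A_t,B_t$ and simplifying $R_t=(A_{t-1}+(1-p_1)B_{t-1})/(A_{t-1}+B_{t-1})>1-p_1$. Your convex-combination aggregation is just a repackaging of the paper's per-iteration ratio argument that each $R_t>P_{\textit{random}}(\theta)$ once $t\geq 1$.
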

The intuitive understanding of this lemma is that 
when nontrivial gradients (of magnitude no less than $\theta$) dimensions are spatially concentrated ($p_1$ is small, then $p_1 < 1 - P_\textit{random}(\theta)$), our method will be more sample efficient than random sample method. We provide the proof of this Lemma in our appendix. Next we give another theorem about the upper bound for the gradient estimation error and include the proof for this theorem in our appendix.

\begin{theorem}
\label{theorem2}
Suppose we sample all nontrivial dimensions of the gradient and estimate the gradient with perturbation strength $\delta$, the estimation error of the
 gradients is upper bounded by the following inequality,
 \begin{equation}
     \|\nabla\hat{f}(x)-\nabla f(x)\|_{1} \leq S_{\theta}C{\delta}^2+(d-S_{\theta})\theta,
 \end{equation}
for constant $C>0$, $S_\theta= \sum_{i=1}^d1(|\nabla_if(x)|\geq\theta)$, and $\nabla\hat{f}(x)$ is the estimated gradient of $f$ with respect
to $x$. 
\end{theorem}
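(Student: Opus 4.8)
The plan is to decompose the $\ell_1$ error coordinate by coordinate and split the dimensions into the nontrivial ones ($|\nabla_i f(x)| \geq \theta$) and the trivial ones. Because the SFD procedure initializes the estimate to $0$ and, under the hypothesis that all nontrivial dimensions are sampled, overwrites exactly the $S_\theta$ nontrivial coordinates with their central finite-difference estimates, the estimate $\nabla\hat{f}(x)$ coincides with the finite-difference value on those coordinates and is identically $0$ on the remaining $d - S_\theta$ trivial coordinates. This motivates writing
$$\|\nabla\hat{f}(x)-\nabla f(x)\|_1 = \sum_{i:\,|\nabla_i f(x)|\geq\theta}\big|\nabla_i\hat{f}(x)-\nabla_i f(x)\big| + \sum_{i:\,|\nabla_i f(x)|<\theta}\big|\nabla_i f(x)\big|$$
and then bounding the two groups separately.

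For the trivial group I would simply invoke the definition of a trivial gradient dimension: each such term equals $|\nabla_i f(x)| < \theta$, so the second sum is at most $(d - S_\theta)\theta$. This supplies the second term in the claimed bound and exactly quantifies the error incurred by dropping these coordinates.

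For the nontrivial group the key step is a Taylor expansion of the central-difference quotient used in Algorithm~\ref{alg:sfd}. Assuming $f$ is three-times continuously differentiable with uniformly bounded third partial derivatives (the standard regularity condition that produces the constant $C$), I expand $f(x \pm \delta \mathbf{e}_i)$ to third order. The even-order terms cancel in the difference while the first-order terms add, so after dividing by $2\delta$ the residual is $\tfrac{\delta^2}{6}\,\nabla_i^3 f(\xi_i)$ for some intermediate point $\xi_i$ by Taylor's theorem with remainder. Bounding $|\nabla_i^3 f|$ by a uniform constant then gives $|\nabla_i\hat{f}(x)-\nabla_i f(x)| \leq C\delta^2$ for each nontrivial coordinate, and summing over the $S_\theta$ of them yields $S_\theta C \delta^2$. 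Combining this with the trivial-coordinate bound gives the stated inequality.

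The main obstacle is less a computational difficulty than the need to make the smoothness hypothesis explicit: the constant $C$ is only meaningful once the third derivatives of the loss $f$ are assumed uniformly bounded, which is the standing assumption that every finite-difference error analysis requires. Granting that, the remainder is a routine cancellation in the central-difference expansion together with a direct appeal to the definition of (non)trivial dimensions.
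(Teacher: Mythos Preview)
Your proposal is correct and follows essentially the same approach as the paper: split the $\ell_1$ error into nontrivial coordinates (bounded via the $O(\delta^2)$ Taylor remainder of the central-difference quotient) and trivial coordinates (bounded directly by $\theta$ since the estimate defaults to $0$), then sum. The only cosmetic difference is that the paper writes out the full Taylor series assuming $f\in C^\infty$ before truncating, whereas you invoke the Lagrange-form remainder directly under a bounded-third-derivative assumption; both yield the same constant $C$ and the same bound.
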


\subsection{High Throughput Attacks}
\sloppy

A DRL system operates on a sequence of consecutive frames. To develop a useful attack method against real-time DRL systems, we must consider the computational costs. Therefore, in this section, we propose a method for generating perturbations efficiently: an online sequential attack.

In a DRL setting, consecutive observations are not
i.i.d.---instead, they are highly correlated and sometimes the consecutive observations do not change too much.
It's then possible to perform an attack with less computation than performing the attack independently on each state.
Considering real-world cases, for example, an autonomous robot would take a real-time video as input to help make decisions, an attacker is motivated to generate perturbations only based on previous states and apply it to future states, which we refer to as an \textit{online sequential attack}.
We hypothesize that a perturbation generated this way is effective on subsequent states.

Therefore, we propose online sequential attacks \textbf{\textsf{obs-seq-fgsm-wb}} in whitebox setting and \textbf{\textsf{obs-seq-fd-bb}}, \textbf{\textsf{obs-seq-sfd-bb}} in blackbox setting.
In these attacks, we first collect $k$ observation frames and generate a single perturbation using the averaged gradient on these frames (or estimated gradients using FD or SFD, in the case of \textbf{\textsf{obs-seq-fd-bb}} and \textbf{\textsf{obs-seq-sfd-bb}}).
Then, we apply that perturbation to all subsequent frames. We denote them as \textbf{\textsf{obs-seq[Fk]-fgsm-wb}, \textsf{obs-seq[Fk]-fd-wb},\textsf{obs-seq[Fk]-sfd-wb}}.

Next, instead of using all the $k$ frames, we further improve the above attack by finding the the set of frames that are important and using the gradients from those frames to perform attack.
With this, we hope to maintain attack effectiveness while reducing the number of queries needed.
We propose to select a subset of frames within the first
$k$ frames based on the variance of their $Q$ values.
Then, in all subsequent frames, the attack applies a perturbation generated from
the averaged gradient. 
We select an optimal set of important frames with highest value
variance to generate the perturbations. We denote them as \textbf{\textsf{obs-seq[Lk]-fgsm-wb}, \textsf{obs-seq[Lk]-sfd-wb}, \textsf{obs-seq[Lk]-sfd-wb}}.
We give a proof in Corollary~\ref{co1} below for why attacking these important frames is more effective to reduce the overall expected return. We include the proof in our appendix.

\begin{corollary}
\label{co1}
Let the state and state-action value be $V(s)$ and $
Q(s,a)$ respectively for a policy $\pi$ with time horizon $H$. We conclude that $\forall t_1,t_2\in{1,2,\cdots, H}$, if $\textrm{Var}(Q(s_{t_1},\cdot))\geq \textrm{Var}(Q(s_{t_2},\cdot))$,
then
$
    \mathbb{E}_{\pi}\left[\sum_{t=0}^H\gamma^tr_t|do(s_{t_1}=\hat{s}_{t_1})\right]\le
    \mathbb{E}_{\pi}\left[\sum_{t=0}^H\gamma^tr_t|do(s_{t_2}=\hat{s}_{t_2})\right],
$
where $do(s_{t_1}=\hat{s}_{t_1})$ means the observation at time $t_1$ is changed from $s_{t_1}$ to $\hat{s}_{t_1}$.
\end{corollary}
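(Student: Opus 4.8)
The plan is to collapse the horizon-length return under a single-frame observation swap into a per-frame quantity governed by the $Q$-function, and then to show that the maximal achievable drop in that quantity is controlled by $\textrm{Var}(Q(s_t,\cdot))$. First I would fix the realized trajectory up to the attacked time $t$ and use the definition of the state-action value to write the conditional return as an attack-independent prefix plus a single discounted term: $\mathbb{E}_\pi[\sum_{t'=0}^H\gamma^{t'}r_{t'}\mid do(s_t=\hat{s}_t)] = C_t + \gamma^t\,\mathbb{E}_{a\sim\pi(\cdot\mid\hat{s}_t)}[Q(s_t,a)]$, where $C_t$ collects the rewards before time $t$ and $Q(s_t,a)$ already absorbs all downstream dynamics because only the observation at time $t$, and not the true state, is altered. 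Since the clean (no-attack) return $C_t+\gamma^tV(s_t)=G_0$ is the same constant for every choice of $t$, the entire effect of the attack reduces to the per-frame \emph{damage} $\mathrm{dmg}_t = V(s_t)-\mathbb{E}_{a\sim\pi(\cdot\mid\hat{s}_t)}[Q(s_t,a)]$, and the attacked return equals $G_0-\gamma^t\,\mathrm{dmg}_t$.

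The key step is to tie this damage to the variance of the $Q$-values. Writing $\Delta\pi(a)=\pi(a\mid s_t)-\pi(a\mid\hat{s}_t)$ and using $\sum_a\Delta\pi(a)=0$, I would recenter and express the damage as $\mathrm{dmg}_t=\sum_a\Delta\pi(a)\,(Q(s_t,a)-V(s_t))$. A weighted Cauchy--Schwarz inequality with respect to the measure $\pi(\cdot\mid s_t)$ then factors this as a policy-shift term times $\sqrt{\sum_a\pi(a\mid s_t)(Q(s_t,a)-V(s_t))^2}=\sqrt{\textrm{Var}(Q(s_t,\cdot))}$, and this bound is tight when $\Delta\pi\propto (Q(s_t,\cdot)-V(s_t))$. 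Hence, if the perturbation budget induces policy shifts of a fixed size $\rho$ in this norm, the optimal attack attains damage exactly $\rho\sqrt{\textrm{Var}(Q(s_t,\cdot))}$, so per-frame attack effectiveness is monotone in the $Q$-value variance. Comparing $t_1$ and $t_2$ under the hypothesis $\textrm{Var}(Q(s_{t_1},\cdot))\ge\textrm{Var}(Q(s_{t_2},\cdot))$ then yields a larger attainable reduction at $t_1$, which is the stated inequality.

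The main obstacle is that, read literally, the statement compares returns under two fixed perturbations and the multiplicative discount factors $\gamma^{t_1}$ and $\gamma^{t_2}$ do not cancel even though the prefixes do. The Cauchy--Schwarz step only orders the \emph{optimal} damages, so to obtain the inequality I would have to compare bound-saturating attacks and assume a common policy-shift budget $\rho$ across the two frames; additionally, I would need the timing factor $\gamma^{t}$ not to reverse the ordering (e.g.\ by comparing frames at comparable times, or by stating the result per frame before discounting). Making these assumptions explicit — that the attacker steers the perturbed action distribution as far as the budget allows, that the induced shift magnitude is frame-independent, and that the variance term dominates the per-frame comparison — is where the real care is needed, since variance alone does not strictly order the two interventions once the discount is reinstated.
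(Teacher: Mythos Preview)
Your decomposition of the attacked return into a common prefix plus a single discounted $Q$-term is exactly the first move the paper makes: it writes the return after attacking step $m$ as $Q(s_0,a_0)-\gamma^m Q(s_m,a_m)+\gamma^m Q(s_m,\hat a_m)$ and subtracts the analogous expression for step $n$. From that point on, however, the two arguments diverge.

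The paper does not use Cauchy--Schwarz or any inner-product bound. It works in the deterministic-action (DQN) setting, where the attack replaces the greedy action $a_m$ by some $\hat a_m$, and simply \emph{assumes} that larger $\textrm{Var}(Q(s_m,\cdot))$ implies a larger range of $Q$-values and hence a larger gap $Q(s_m,a_m)-Q(s_m,\hat a_m)$. With that heuristic in hand, and taking $m<n$ without loss of generality, the discount factor actually helps in the favorable case $\textrm{Var}(Q(s_m,\cdot))>\textrm{Var}(Q(s_n,\cdot))$: since $\gamma^{n-m}<1$, the chain $Q(s_m,a_m)-Q(s_m,\hat a_m)>Q(s_n,a_n)-Q(s_n,\hat a_n)>\gamma^{n-m}(Q(s_n,a_n)-Q(s_n,\hat a_n))$ directly gives $Q'-Q''<0$. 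In the reverse case (higher variance at the later time) the paper explicitly imposes an extra condition on $n-m$ or on the size of $Q(s_n,a_n)-Q(s_n,\hat a_n)$ to make the inequality go through.

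Your route via the centered inner product and Cauchy--Schwarz is more principled: it yields an explicit upper bound $\mathrm{dmg}_t\le \rho\sqrt{\textrm{Var}(Q(s_t,\cdot))}$ that is tight at the optimal policy shift, and it applies to stochastic policies rather than only greedy ones. The price is that your bound scales with the standard deviation rather than the variance, and you correctly note that it orders only the \emph{optimal} damages under a common shift budget $\rho$. You are also right that the discount factors $\gamma^{t_1},\gamma^{t_2}$ do not cancel in general; the paper sidesteps this only in the case where the higher-variance frame is earlier, and otherwise adds assumptions. So your identification of the needed extra hypotheses is sharper than the paper's own proof, which folds the ``variance $\Rightarrow$ range $\Rightarrow$ larger drop'' step in as an assumption without isolating it.
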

\vspace{-10pt}

\subsection{Attacks on Components Other than the Observation}
Besides attacking the observation space, the attacker can have access to the testing environments 
of the victim model. Therefore, potentially the attacker can modify the environment such as changing
the physical properties of the environment to perform the attack. In this case, the observations
of the victim model are not perturbed but the environment transition dynamics will be perturbed.

\textbf{RL based attacks on environment dynamics}.
In this case, the attacker will perturb the environment transition model (dynamics). Such transition model is usually non-differentiable with respect to the policy. Therefore, we propose a novel reinforcement learning based method to attack environment dynamics. We describe a
targeted attack (in which the agent will fail in a specific way, e.g. a Hopper turn over, or a self driving car drive off the road and hit obstacles.) where the attacker changes the environment dynamics (e.g. by changing the mass of the car). The algorithm is as follows. 

Define the environment dynamics as $\mathcal{M}$, the agent's policy
as $\pi$, the agent's state at step $t$ following the current policy
under current dynamics as $s_t$, and define a mapping
from $\pi,\mathcal{M}$ to $s_t$:$s_t \sim f(s_t|\pi,\mathcal{M}, s_0)$,
which outputs the state at time step $t$: $s_t$ given initial
state $s_0$, policy $\pi$, and environment dynamics $\mathcal{M}$. 
The task of attacking environment dynamics is to find another
dynamics $\mathcal{M'}$ such that the agent will reach a target
state $s_t'$ at step $t$: $\mathcal{M'} = \arg\min_{\mathcal{M}}\|s_t'-\mathbb{E}_{s_t\sim f(s_t|\pi, \mathcal{M}, s_0)}[s_t]\|$.

We consider the following two algorithms for generating this attack:
First, \textbf{Random dynamics search}.
A naive way to find the target dynamics, which we demonstrate in \textbf{\textsf{env-rand-bb}}, is to use random search. Specifically, we randomly propose a new dynamics and see whether,
under this dynamics, the agent will reach $s_t'$. This
method works in the setting where we don't need to 
have access to the policy network's architecture and parameters, but just need to query the network. Second, \textbf{RL based adversarial dynamics search}. We design a more systematic
algorithm based on RL to search for a dynamics to attack and call this method \textbf{\textsf{env-search-bb}}. The algorithm is included in appendix. At each time step, an attacker proposes a change to the current environment dynamics with some perturbation $\Delta\mathcal{M}$, where $\|\Delta\mathcal{M}/\mathcal{M}\|$ is 
bounded by some constant $\epsilon$,
and we find the new state $s_{t,\mathcal{M'}}$ at time step $t$ following the current policy under dynamics $\mathcal{M'}=\mathcal{M}+\Delta\mathcal{M}$, then the
attacker agent will get reward $\tilde{r}=1/\|s_{t,\mathcal{M'}}-s_t'\|$. 
We demonstrate this in \textsf{env-search-bb} using DDPG \cite{lillicrap2015continuous} to train the attacker. In order
to show that this method works better than random search, we also
compare with the random dynamics search method, and keep 
the bound of maximum perturbation $\|\Delta\mathcal{M}/\mathcal{M}\|$
the same.

\subsection{ Physical Attacks}
\label{sec:physattack}
\sloppy
We discuss how to apply previous proposed FD attack algorithms to generate adversarial perturbations on real world DRL models. We choose visual navigation robots as our experiment platform. There are several challenges to perform real-world attacks in this task. 
(1) Imperfect camera. Images observed by the robot usually are captured by cameras, and the computed perturbation may not be directly applicable on real world objects due to camera sensor noise and color shift.
(2) Imperfect fabrication process. Using a printer to make the adversarial image patch limits the attacker to printable colors, and there exists a color gap between input image to the printer and output paper from the printer. 
(3) Imperfect alignment of the patch. 
Mounting the adversarial image patch at an exact position with a particular orientation is hard. Thus, the attack should be robust to variations of relative position/orientation of the image patch to the robot position. 
In order to overcome the above challenges, we adapt our algorithm to generate an adversarial patch that is robust against these imperfections. We select a discrete control task for visual navigation as our real world victim model (see details in experiment section).

In order to improve the robustness of the generated patch against various mounting positions, we randomly sample multiple binary masks $\{K_i\}_{i=1}^{n_{1}}$ and apply the masks to the state (an image) $I$, such that any part of the patch can be used for attack. One mask is consisted of a rectangular region with value 1 and all other regions with value 0. Denote the generated perturbed image as $I'$, then the masked frame $I^\textit{masked}_i$ with mask $K_i$ could be defined as:
  $  I^\textit{masked}_i = I \odot (1-K_i) + I' \odot K_i,$
where $\odot$ is the element-wise product. 
To further improve robustness against imperfect printer and variability of mounting orientation, we randomly generate a set of transformations $T=\{T_j\}_{j=1}^{n_2}$ including contrast, brightness adjustments, random rotation adjustments  on $I^\textit{masked}$ and get the final image to be optimized: $I_\textit{final}=T(I^\textit{masked})$. 
Define the function of generating the final image $I_\textit{final}$ as $u$: $I^\textit{final}_{ij} = T(I^\textit{masked}_i)_j =  u(I'; I, K_i, T_j)$. 
Given a trained policy $\pi$ with parameters $\theta$, the pristine optimal action output is $a^* = \pi_{\theta}(I)$. We select a target action $a'$, which should have smaller return than $a^*$. The objective function is shown as follows:
$ I' = \arg\min_{I'}\sum_{i=1,j=1}^{i=n_1,j=n_2} CE(\pi_{\theta}(u(I'; I, K_i, T_j )), a'),
$ where $CE$ denotes the cross entropy loss. We then apply online-sequential method and use sampling based FD-based method \textsf{obs-seq-sfd-bb} to estimate the perturbation.

\begin{table*}[h!]
    \centering
    \caption{Cumulative reward  of the first 500 frames among different attack methods on Torcs}
    \vspace{-10pt}
    \label{tab:result-torcs}
        \begin{Huge}\resizebox{\linewidth}{!}{%
    \begin{tabular}{c|c|ccc|cc|cc|cc|cc|cc|cc|cc}
         \toprule
        \multirow{1}{*}{\shortstack{non-adv}} &      \multirow{1}{*}{\shortstack{$\epsilon$}} & 
             \multirow{1}{*}{\shortstack{\textsf{ obs-fgsm-wb}}}  &
             \multirow{1}{*}{\shortstack{\textsf{ obs-fgsm-bb}}}  &
             \multirow{1}{*}{\textsf{\bf obs-fd-bb}}  &  \multicolumn{2}{c|}{\textsf{\bf obs-s[n]fd-bb} } & \multicolumn{10}{c|}{\textsf{online sequential attack} } & 
              \multicolumn{2}{c}{\textsf{\bf obs-seq[L60]-s[n]fd-bb}} \\
              
              & & & & & & &  \multicolumn{2}{c|}{\textsf{ obs-seq[$k$]-rand-bb}} &\multicolumn{2}{c|}{\textsf{   obs-seq[F$k$]-fgsm-wb}} &\multicolumn{2}{c|}{\textsf{ \bf obs-seq[F$k$]-fd-bb}} &\multicolumn{2}{c|}{\textsf{  obs-seq[S$k$]-fd-bb}} &\multicolumn{2}{c|}{\textsf{\bf  obs-seq[L$k$]-fd-bb}}  & \\
              \midrule
              \multirow{8}{*}{$571.4$} & \multirow{4}{*}{ 0.05}  &\multirow{4}{*}{ 5.8} & \multirow{4}{*}{45.2}& \multirow{4}{*}{ 11.9 }& n=10& 220.6& \multirow{2}{*}{k=10} & \multirow{2}{*}{581.74} &\multirow{2}{*}{ k=10} & \multirow{2}{*}{8.57} & \multirow{2}{*}{k=10} & \multirow{2}{*}{9.40} & \multirow{2}{*}{k=10} & \multirow{2}{*}{571.63} &\multirow{2}{*}{ k=10} & \multirow{2}{*}{400.82}  & n=10 & 492.2\\
              & & & & & \multirow{2}{*}{n=20}&\multirow{2}{*}{89.8} & & & & & & & & & & & n=20 & 483.0  \\
               & & & & &  & & \multirow{2}{*}{k=60}& \multirow{2}{*}{604.18} &\multirow{2}{*}{k=60}& \multirow{2}{*}{22.62} &\multirow{2}{*}{k=60}& \multirow{2}{*}{27.41}  &\multirow{2}{*}{k=60}& \multirow{2}{*}{302.52} &\multirow{2}{*}{{\bf k=60}}& \multirow{2}{*}{{\bf 30.27} } & n=40 & 334.4 \\
                & & & & &  {\bf n=40}& {\bf 43.2}  & & & & & & & & & & & {\bf n=100} &{\bf 28.7}\\
                  \cmidrule(l){2-19}

             & \multirow{4}{*}{0.10} &\multirow{4}{*}{5.9} & \multirow{4}{*}{18.6}& \multirow{4}{*}{14.9} &\multirow{1}{*}{n=10} &\multirow{1}{*}{22.3} & \multirow{2}{*}{k=10} &\multirow{2}{*}{583.59} & \multirow{2}{*}{ k=10}&\multirow{2}{*}{8.58} &\multirow{2}{*}{k=10}&\multirow{2}{*}{8.47} &\multirow{2}{*}{ k=10}&\multirow{2}{*}{525.53} &\multirow{2}{*}{k=10}& \multirow{2}{*}{386.61} & n=10 &433.2 \\
              &  & & &  &\multirow{2}{*}{n=20} & \multirow{2}{*}{23.6}  & & & & & & & & & & &n=20 & 391.2 \\
               &  & & &  & &  &\multirow{2}{*}{k=60} &\multirow{2}{*}{603.43} &\multirow{2}{*}{k=60} &\multirow{2}{*}{17.88} & \multirow{2}{*}{k=60} &\multirow{2}{*}{24.60} &\multirow{2}{*}{ k=60} &\multirow{2}{*}{271.73}& \multirow{2}{*}{{\bf k=60}} &\multirow{2}{*}{{\bf22.62}} & n=40 & 136.7  \\
               
                & & & & & {\bf n=40} & {\bf18.9}  & & & & & & & & & & & {\bf n=100 }& {\bf 28.2}\\
         \bottomrule
    \end{tabular}}
    \end{Huge}
    \vspace{-10pt}
\end{table*}

\section{Experiments}
\label{sec:evaluation}
We design our experiments to answer the following questions: \begin{enumerate*}
\item Can our proposed black-box method achieve similar or better performance compared with existing white-box and black-box methods?\item How does the adaptive SFD method perform compared with FD? \item How much improvement of sample efficiency does online sequential attack obtain? \item Does attacking the most important frame selection work better than attacking other frames? \item Does the RL based environment dynamics attack perform better than random search?
\item Does the real robot attack work in the visual navigation task?
\end{enumerate*}
To answer these questions, we first introduce the environments we use, and then introduce our baselines and evaluations on all methods.

\textbf{Experiment environments and victim RL models.}
We attack several agents trained for five different simulated RL environments:
Atari games including Pong and Enduro \cite{bellemare2013arcade},
HalfCheetah and Hopper in MuJoCo \cite{todorov2012mujoco}, and
the driving simulation TORCS \cite{pan2017virtual}.
We train DQN~\cite{mnih2015human} on Pong, Enduro and TORCS, and
 train DDPG~\cite{lillicrap2015continuous} on HalfCheetah and Hopper.
We report the cummulative reward on the first 500 frames. The reward function for TORCS comes from~\cite{pan2017virtual} and DQN network architecture comes from \cite{mnih2015human}. 
The network for continuous control
using DDPG comes from~\cite{dhariwal2017openai}.

\textbf{Baselines.}
 We compare the agents' performance under all attacks with their performance under no attack, denoted as \textsf{non-adv}.  We test the white-box attacks with FGSM~\cite{goodfellow2014explaining} (\textsf{obs-fgsm-wb}) and blackbox attacks with \textsf{obs-fgsm-bb}~\cite{huang2017adversarial} which the attacker leverages the transferability to perform attacks by  training a surrogate model to generate adversarial examples.
We test the attacks on observation under $L_\infty$ perturbation bounds of $\epsilon=0.005$ and $\epsilon=0.01$ on the Atari games and MuJoCo simulations and $\epsilon=0.05$ and $\epsilon=0.1$ on TORCS. (Observation values are normalized to [0,1].)

\textbf{Evaluating FD methods.}
We evaluate the finite difference method \textsf{obs-fd-bb} and test \textsf{obs-s[n]fd-bb} under different numbers $n$ of SFD iterations, for $n\in\{10,20\}$. Here we denote the attack that uses $n$ iterations as \textsf{obs-s[$n$]fd-bb}.
For online sequential attacks, we test
under conditions \textsf{obs-seq[F$k$]-fgsm-wb} and \textsf{obs-seq[F$k$]-fd-bb} (\textsf{F} for ``first''), where we use \textit{all} of the first $k$ frames to compute the gradient for generating a perturbation  and then apply the perturbation to the subsequent frames. We report the cumulative rewards of the first $k$ frames and the final cumulative reward among the first 500 frames by applying the perturbation after the first $k$ frames. We implement a baseline method \textsf{obs-seq[F$k$]-rand-bb} by using random noise as the perturbation to evaluate the effectiveness of our algorithms. To increase the attack efficiency,  we also evaluate the \textsf{obs-seq[L$k$]-fd-bb} (\textsf{L} for ``largest''), in which we select $20\%$ of the first $k$ frames that have the \textit{largest} $Q$ value variance to generate the universal perturbation, and  \textsf{obs-seq[S$k$]-fd-bb} (\textsf{S} for ``smallest''), in which we select 20\% of the first $k$ the frames that have the \textit{smallest} $Q$ value variance to generate the universal perturbation, as baseline. 

\textbf{Perturbations on components other than the observations}. For attacks on environment dynamics, we test \textsf{env-rand-bb} and \textsf{env-search-bb} on MuJoCo and TORCS. 
In the tests on MuJoCo, we perturb the body mass and body inertia vector, which are in $\mathbb{R}^{32}$ for HalfCheetah and $\mathbb{R}^{20}$ for Hopper. 
In the tests on TORCS, we perturb the road friction coefficient and bump size, which is in $\mathbb{R}^{10}$. The perturbation strength is within 10\% of the original magnitude of the dynamics being perturbed.

\textbf{Real robot experiment}.
We conduct physical attack experiments on an Anki Vector robot~\cite{anki}. For training the DRL policy, we design a discrete control task for the robot in a closed playground. The robot has a discrete action space of going forward, turning left, and turning right. It receives a positive reward of $3$ for moving forward (in any direction) and a reward of $-10$ for colliding with anything. The task ends if the robot collides with the wall. We train DQN policy until convergence. We set the maximum episode length to be 200 steps to shorten the training time. We use the attack method in Section~\ref{sec:attacks}'s real robot attack method to generate perturbation patches that are robust to the imperfections throughout the attack. For the target action $a'$ we choose the worst action under the original input $I$ for attack.
We print out the perturbed image $I'$ and crop a random patch from $I'$ and mount it in the robot's current field of view. In order to test the robustness of the attack algorithm, we mount the patch at different positions and put the robot at different viewing angles towards the patch.

\subsection{Experimental Results}

The first 6 columns of Table~\ref{tab:result-torcs} shows the results of the attacks on TORCS, including baseline attacks: \textsf{obs-fgsm-wb}, 
 and \textsf{obs-fgsm-bb}, and finite difference based methods (
\textsf{obs-fd-bb}, \textsf{obs-s[n]fd-bb}) on black-box settings. 
It shows that \textsf{obs-fd-bb} could achieve similar performance compared with whitebox attack (\textsf{obs-fgsm-wb}) and slightly better than the baseline blackbox attack ($\textsf{obs-fgsm-bb}$). Moreover, for \textsf{obs-s[$i$]fd-bb}, it shows the effectiveness and with $n$ increase, it will increase the computation cost but the attack effectiveness increases as well. For the results, we could observe that when $n=40$, it could achieve comparable attack effectiveness compared to $\textsf{obs-fd-bb}$, therefore, in the following experiments, we select  $n=40$ when we report $\textsf{obs-sfd-bb}$.  
In Table~\ref{tab:sfd-queries},  we show the number of queries for \textsf{obs-s[n]fd-bb} with different iteration parameter $n$ and the number of queries for \textsf{obs-fd-bb}.
The results show that \textsf{obs-sfd-bb} uses significantly fewer queries (around 1000 to 6000)
than \textsf{obs-fd-bb} (around 14,000) but achieves similar attack performance.

\begin{table}[h]
    \centering
    \vspace{-10pt}
    \caption{ Number of queries for SFD on each image among different settings. (14112 would be needed for FD.) }
    \label{tab:sfd-queries}
    \begin{tabular}{ccccc}
         \toprule
            $\epsilon$ & 10 iter. & 20 iter. & 40 iter. & 100 iter.  \\ \cmidrule(r){1-1} \cmidrule(l){2-5}
            
            0.05 & $1233\pm50$ & $2042 \pm 77$ & $3513\pm 107$ & $5926 \pm 715$ \\
            0.10 & $1234\pm41$ &$ 2028\pm 87$ & $3555 \pm 87$ & $6093 \pm 399$ \\
         \bottomrule
    \end{tabular}
    \vspace{-10pt}
\end{table}

Besides those, we  evaluate the performance of online sequential attacks (\textsf{obs-seq[F$k$]-fgsm-wb},
\textsf{obs-seq[F$k$]-fd-bb}, \textsf{obs-seq[L$k$]-fd-bb}, and baselines(
\textsf{obs-seq[S$k$]-fd-bb}, \textsf{obs-seq[F$k$]-rand-bb}) 
in the columns 8-17 of Table~\ref{tab:result-torcs} with different $L_{\infty}$ norm bound ($\epsilon=0.05$ and $0.1$).  

We could observe that the baseline \textsf{obs-seq[F$k$]-rand-bb} is not effective, while \textsf{obs-seq[F$k$]-fd-bb} achieves attack performance close to its white-box counterpart \textsf{obs-seq[F$k$]-fgsm-wb} and to non-online sequential attack \textsf{obs-fd-bb}. Even when $k=10$, the performance is still good. 
The 14-17 columns of Table~\ref{tab:result-torcs} shows the results of optimal frame selection.  We could observe that when we 
select a set of states with the largest Q value variance (\textsf{obs-seq[L$k$]-fd-bb}) to
estimate the gradient, the attack is more effective than selecting states with the  smallest Q value variance (\textsf{obs-seq[S$k$]-fd-bb}). It also empirically verifies corollary 1. When $k$ is very small ($k=10$), the estimated universal perturbation may be not strong enough to apply to the following frames while when $k=60$, the attack performance is reasonably good and similar to $\textsf{obs-seq[F60]-fd-bb}$. Therefore, in the following settings, we select $\textsf{obs-seq[L60]-fd-bb}$ as our default setting.

\begin{figure*}[t!]
    \centering
    \includegraphics[width=1.0\linewidth]{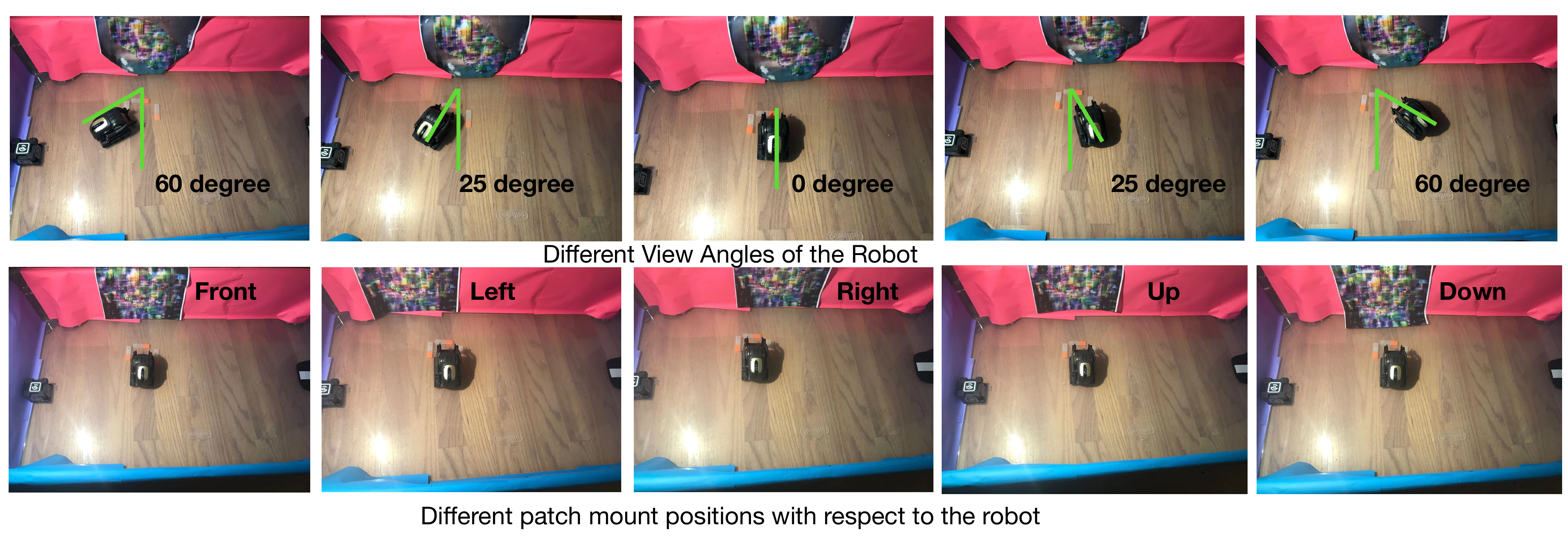}
    \vspace{-15pt}
    \caption{
    First row: example varied robot states with different viewing angles of the robot towards the patch. (The patch is generated using white-box method)
    Second row: example varied patching mounting positions with respect to the robot position. (The patch is generated using black-box method) We varied 50 different view angles from
    the left to the right to evaluate the robustness of the attack.
    }
    \label{fig:real_robot_location}
\end{figure*}

Finally, we combine online-sequential attack and sampling based finite difference together to evaluate the performance. We show the results of \textsf{obs-seq[L60]-s[$i$]fd-bb} by selecting the 20\% of frames with the largest Q value variance within the first 60 frames to estimate the gradient and using SFD with $i$ iterations.  From Table~\ref{tab:result-torcs}, we could observe that it is clear that with more iterations; we are able to get more accurate
estimation of the gradients and thus achieve better attack performance.
When $i=100$, it could achieve comparable attack effectiveness. By looking at Table~\ref{tab:sfd-queries},  we could find that  when  $i=100$, the number of queries for SFD is around 6k, which is still significantly smaller than needed for FD, which takes 14k
queries to estimate the gradient on an image of size $84\times 84$
(14112 = $84 \times 84 \times 2$). 

\begin{table}[h]
    \scriptsize
    \centering
    \vspace{-10pt}
    \caption{Results of different attacks on other environments. We report the cumulative reward within first 500 frames.}
    \label{tab:other-envs}
        \begin{small}\resizebox{\linewidth}{!}{%
    \begin{tabular}{c|c|ccc|cccc}
         \toprule
            \multicolumn{1}{c|}{\multirow{1}{*}{\shortstack{Env}}}  &  $\epsilon$ & \multirow{1}{*}{\shortstack{non-adv}} &
             \multirow{1}{*}{\shortstack{obs-fgsm-wb}}  &
                 \multirow{1}{*}{\shortstack{obs-fgsm-bb}}  &   
             \multirow{1}{*}{\bf obs-fd-bb}  &   \multirow{1}{*}{\bf obs-seq[L60]-s[100]fd-bb}  \\
            \midrule
            \multirow{2}{*}{Pong} &0.005 & \multirow{2}{*}{ 6.0} &-14.0 & -14.0&-13.0 &-9.0 \\
            & 0.01 &  & -14.0& -14.0 & -13.0 & -9.0\\
            \hline
            \multirow{2}{*}{Enduro} &0.005 & \multirow{2}{*}{ 43.0} &2.0 & 5.0&2.0 &27.0\\
            & 0.01 & & 2.0 & 5.0& 2.0 & 11.0 \\
            \hline 
            \multirow{2}{*}{HalfCheetah} &0.005 & \multirow{2}{*}{ 8257.1} &3447.6 &  2149.7&  6173.8 & 8263.6\\
            & 0.01 &  & 980.1&  1021.5& 1273.6 & 1498.4\\
            \hline
            \multirow{2}{*}{Hopper} &0.005 & \multirow{2}{*}{ 3061.4} &1703.0 & 1736.8& 1731.2 & 1843.5\\
            & 0.01 &  & 1687.2&1694.4 & 1711.3 &1718.8 \\
         \bottomrule
    \end{tabular}}
    \vspace{-10pt}
    \end{small}
\end{table}
We provide the results of attack applied on observation
space in other environments in Table~\ref{tab:other-envs}. These environments include Atari games Pong and Enduro, and MuJoCo robotics simulation environments HalfCheetah and Hopper. The results further instantiates the effectiveness of the set of proposed FD methods.

\paragraph{\textbf{Perturbations on components other than the observations}}
\textbf{Attacks on environment dynamics}.
In Table~\ref{tab:dyn_attack}, we show our results for 
performing targeted adversarial environment dynamics attack. The results are the $L_2$ distance to the target state (the smaller the better). 
The results show that random 
search method performs worse than RL based search method in terms of reaching
a specific state after certain steps. 
The quality of the attack can be
qualitatively evaluated by observing the sequence of 
states when the agent is being attacked and see whether
the target state has been achieved. The results are shown in figure ~\ref{fig:env-torcs}. We could observe that the final stages of our RL based method (\textsf{env-search-bb}) are similar to the targeted state among different games. We include example trajectories of agents under dynamics attack in our appendix.

\begin{figure}[h]
\begin{minipage}{0.50\textwidth}
\centering
\vspace{-5pt}
    \captionsetup{type=table}
\caption{Results of environment dynamics based attacks showing our proposed \textsf{env-search-bb} outperforms baseline \textsf{env-rand-bb}. Shown are the L2 distance to the target state, the smaller the better.}
\vspace{-5pt}
    \label{tab:dyn_attack}
\begin{small}
    \begin{tabular}{ccc}
         \toprule
         Environment & \textsf{env-rand-bb} & \textsf{env-search-bb} \\
         \cmidrule(r){1-1} \cmidrule(l){2-3}
         HalfCheetah & 7.91 & \textbf{5.76} \\
         Hopper & 1.89 & \textbf{0.0017}\\
         TORCS & 25.02 & \textbf{22.75}\\
         \bottomrule
    \end{tabular}
    \end{small}
    \vspace{-10pt}
\end{minipage}
\end{figure}

\begin{figure}[t!]
    \vspace{-2pt}
    \centering
    \begin{subfigure}{\linewidth}
    \centering
    \includegraphics[width=0.12\textwidth]{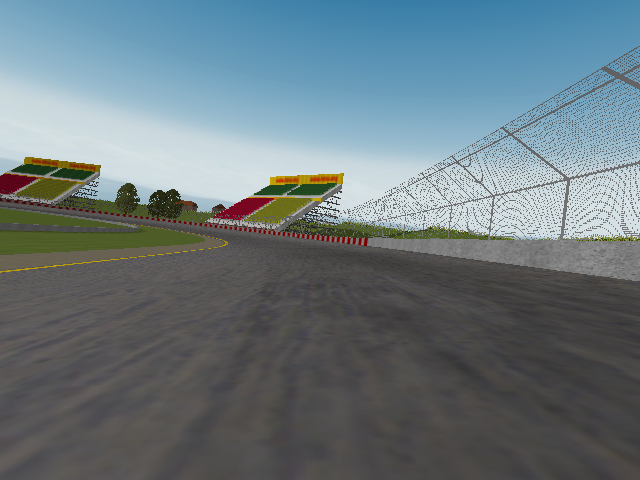}
    \includegraphics[width=0.12\textwidth]{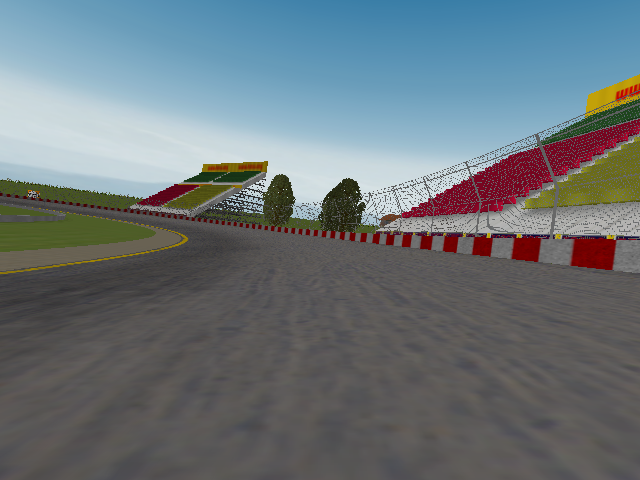}
    \includegraphics[width=0.12\textwidth]{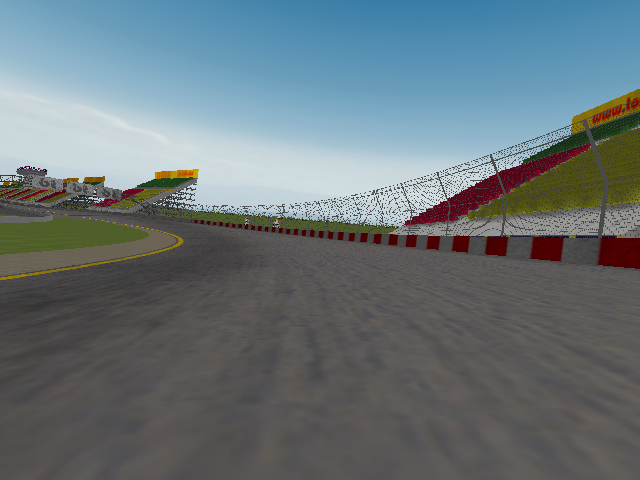}
    \includegraphics[width=0.12\textwidth]{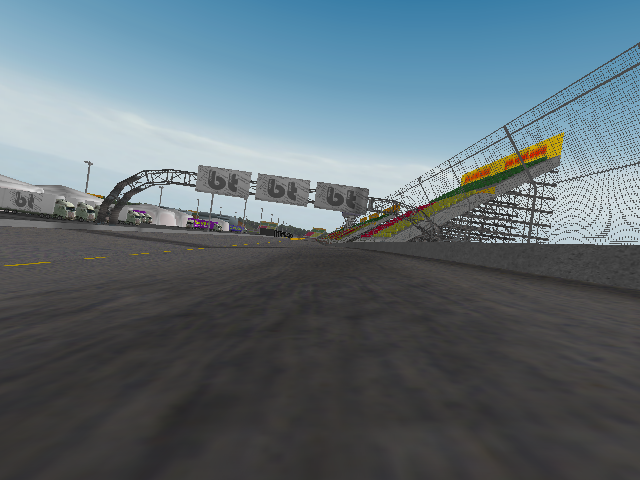}
    \includegraphics[width=0.12\textwidth]{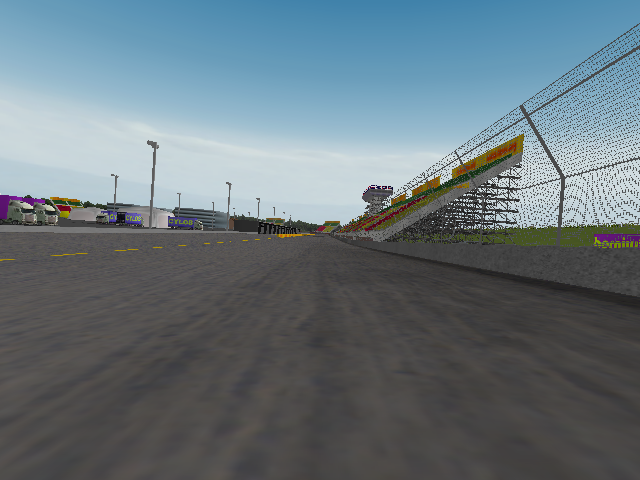}
    \includegraphics[width=0.12\textwidth]{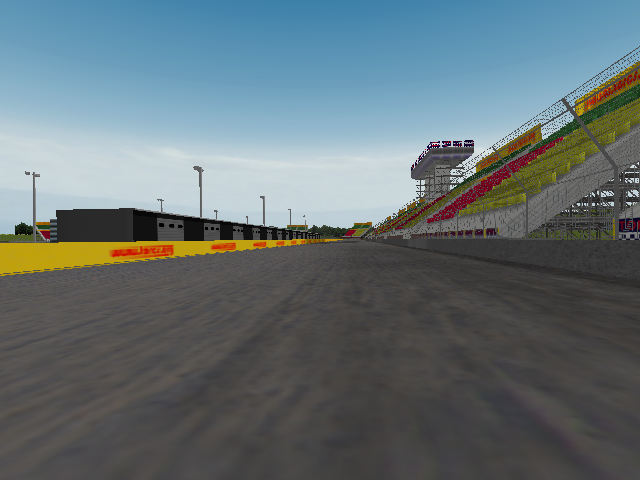}
    \includegraphics[width=0.12\textwidth]{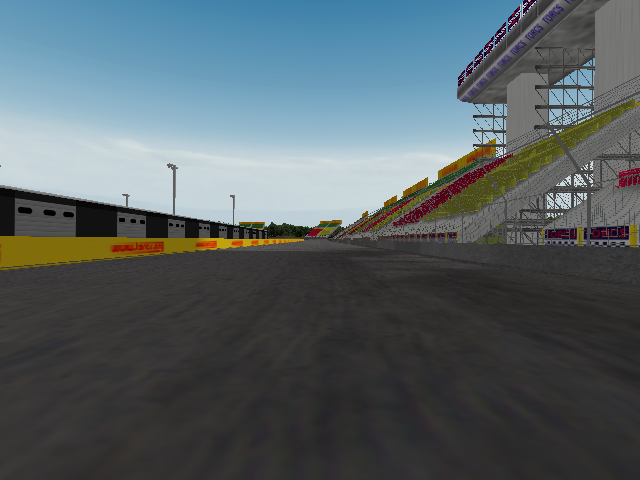}
    \caption{Agent's behavior under normal dynamics}
    \end{subfigure}
    \begin{subfigure}{\linewidth}
    \centering
    \includegraphics[width=0.12\textwidth]{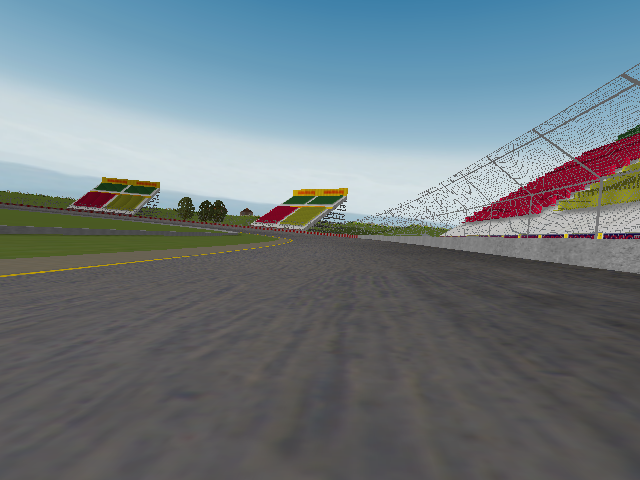}
    \includegraphics[width=0.12\textwidth]{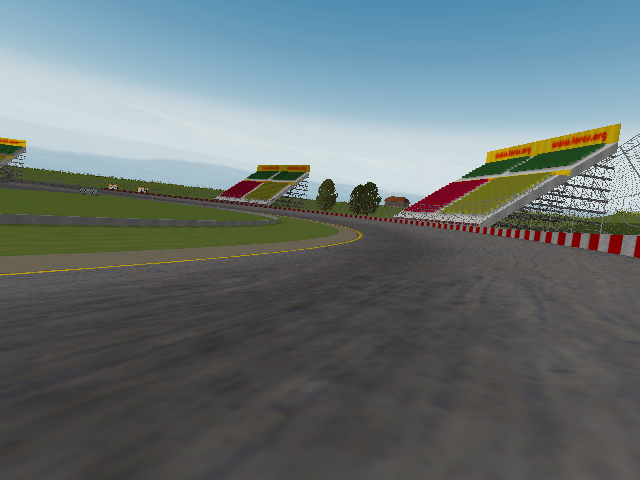}
    \includegraphics[width=0.12\textwidth]{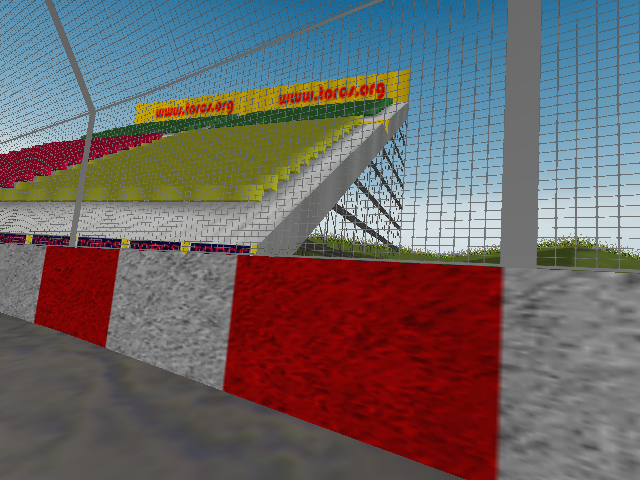}
    \includegraphics[width=0.12\textwidth]{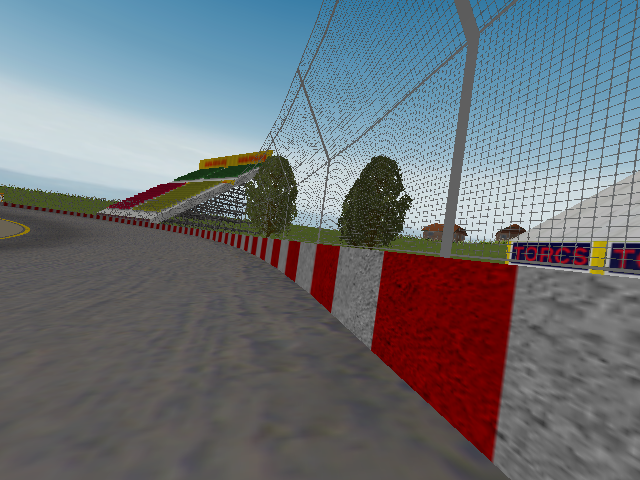}
    \includegraphics[width=0.12\textwidth]{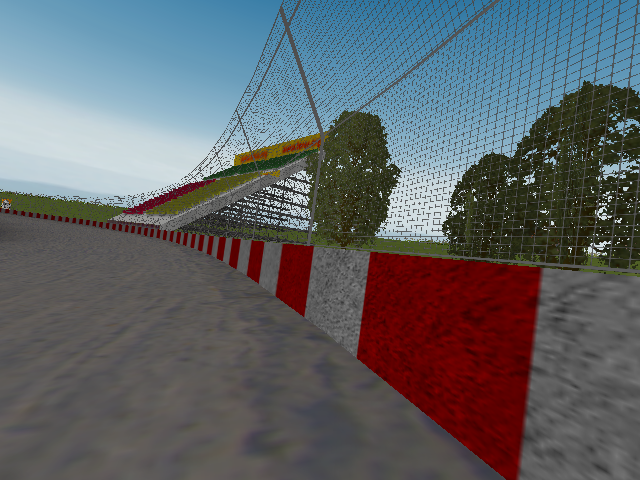}
    \includegraphics[width=0.12\textwidth]{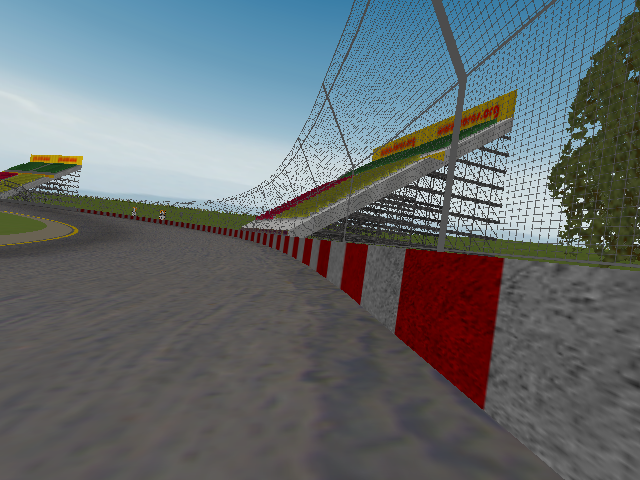}
    \includegraphics[width=0.12\textwidth]{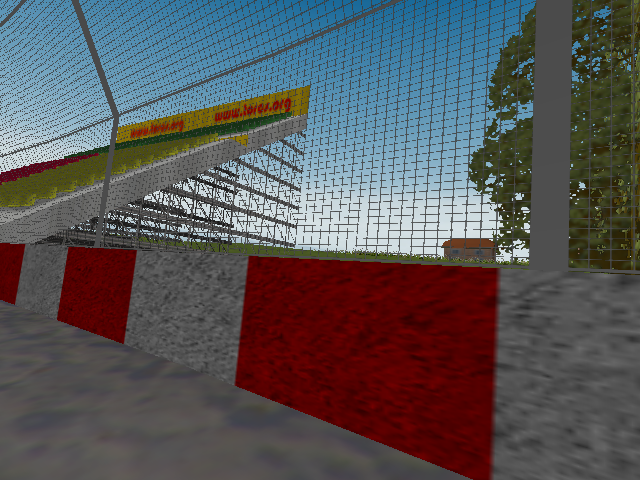}
    \caption{Agent's behavior under abnormal dynamics}
    \end{subfigure}
    \begin{subfigure}{\linewidth}
    \centering
    \includegraphics[width=0.12\textwidth]{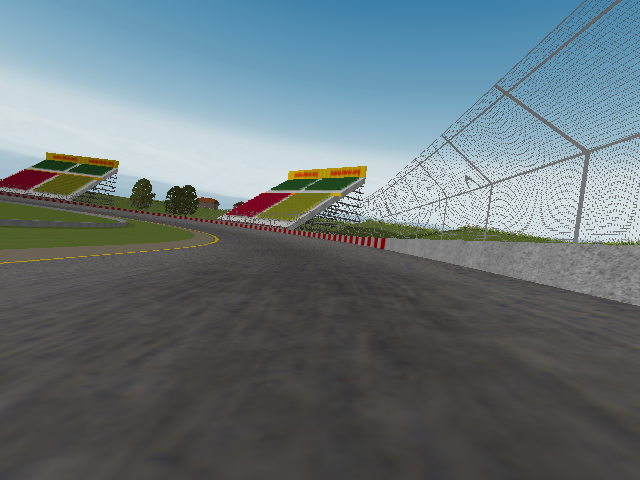}
    \includegraphics[width=0.12\textwidth]{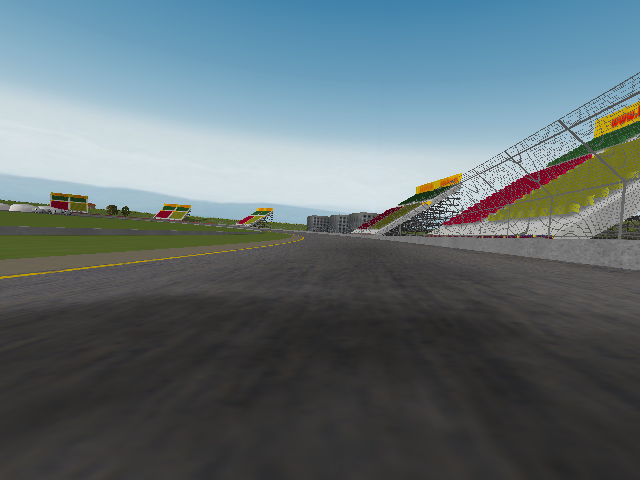}
    \includegraphics[width=0.12\textwidth]{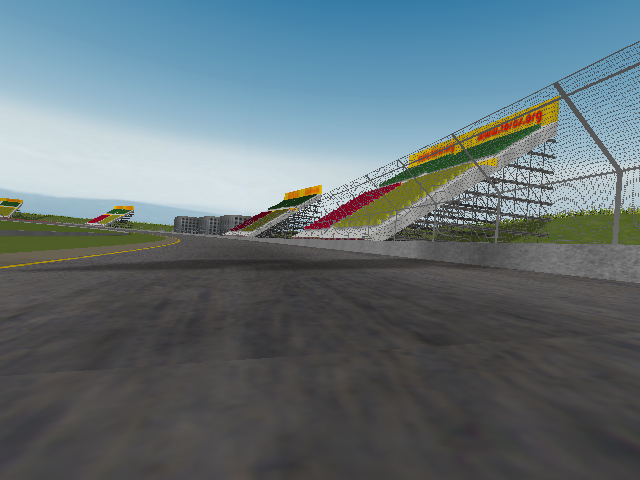}
    \includegraphics[width=0.12\textwidth]{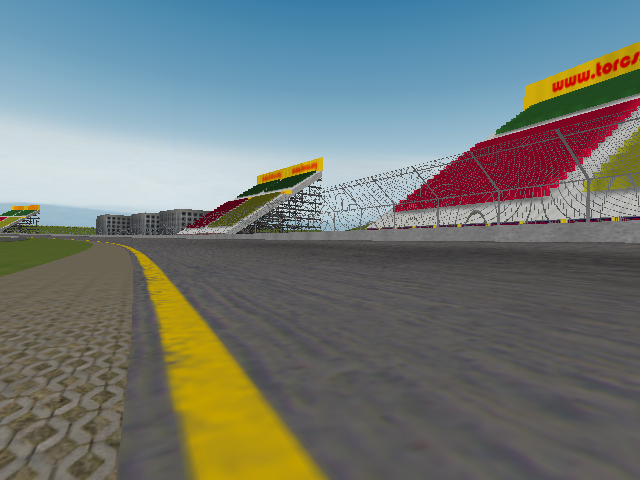}
    \includegraphics[width=0.12\textwidth]{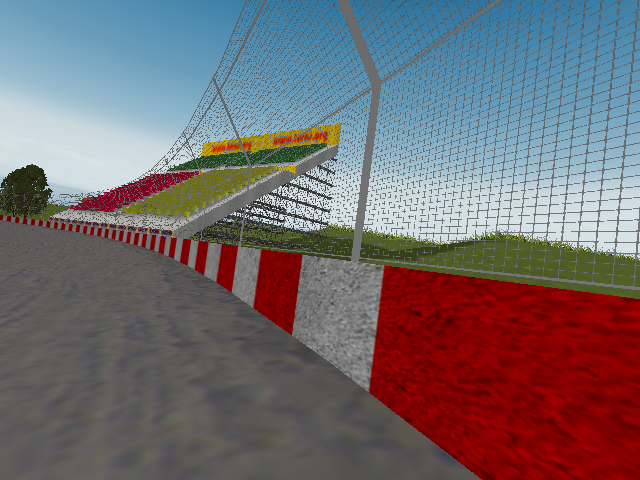}
    \includegraphics[width=0.12\textwidth]{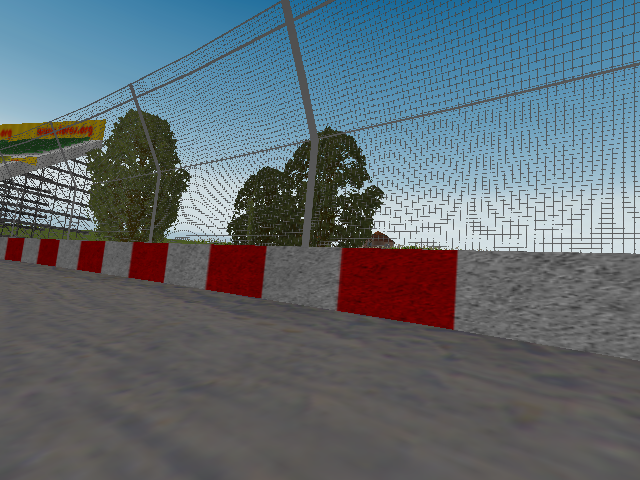}
    \includegraphics[width=0.12\textwidth]{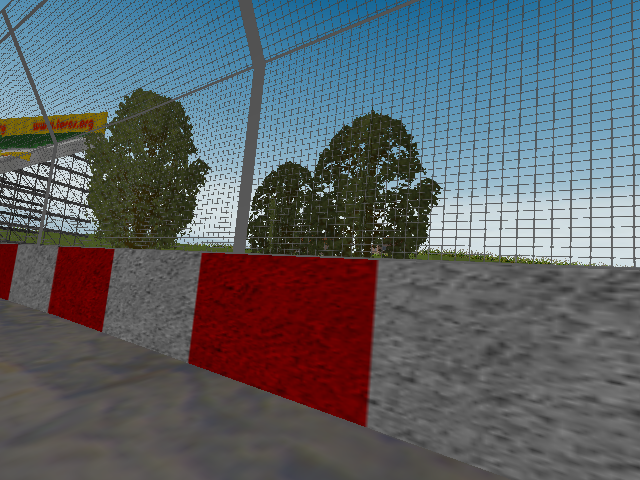}
    \caption{Agent's behavior under attacked dynamics using RL}
    \end{subfigure}
    \begin{subfigure}{\linewidth}
    \centering
    \includegraphics[width=0.12\textwidth]{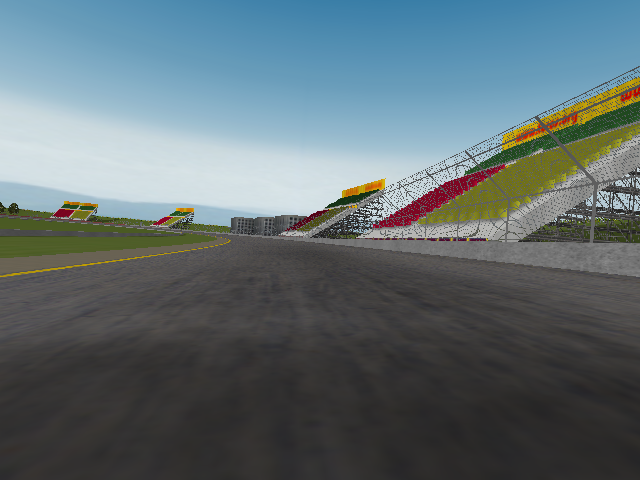}
    \includegraphics[width=0.12\textwidth]{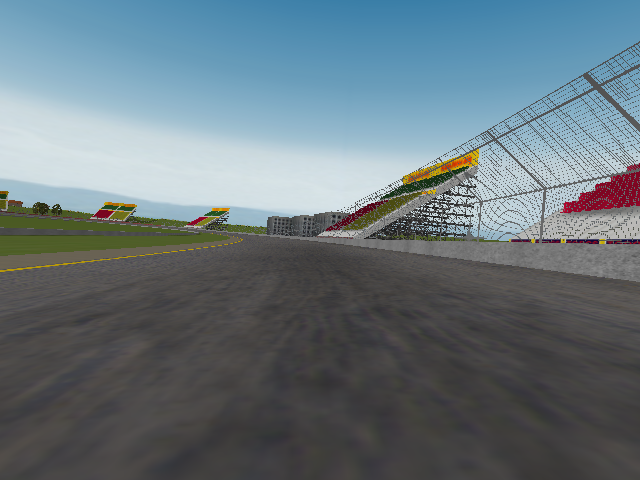}
    \includegraphics[width=0.12\textwidth]{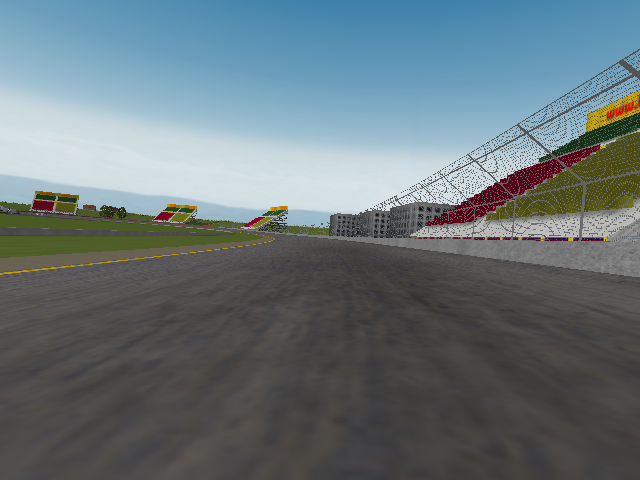}
    \includegraphics[width=0.12\textwidth]{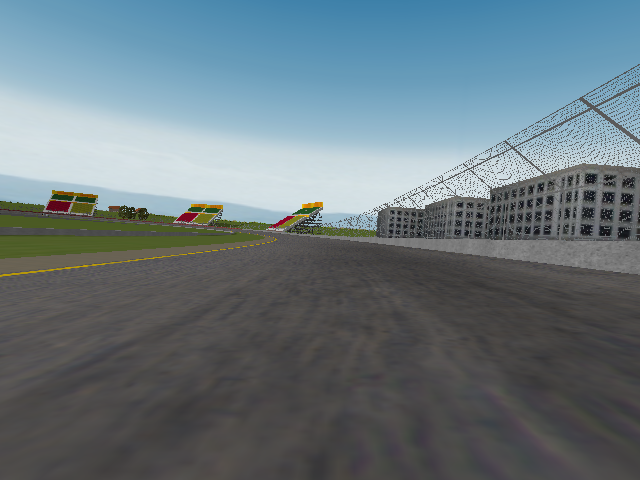}
    \includegraphics[width=0.12\textwidth]{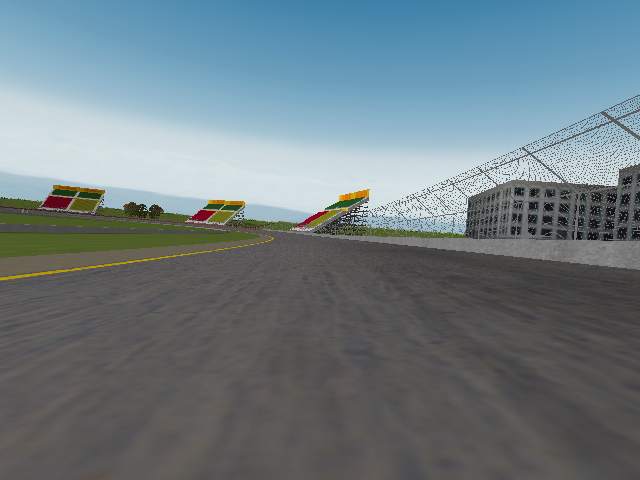}
    \includegraphics[width=0.12\textwidth]{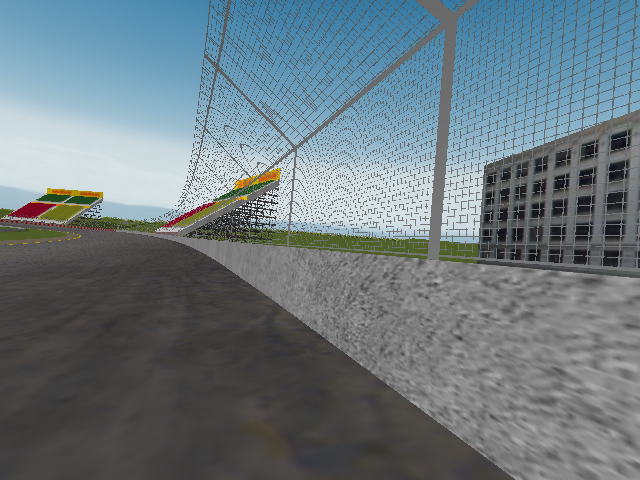}
    \includegraphics[width=0.12\textwidth]{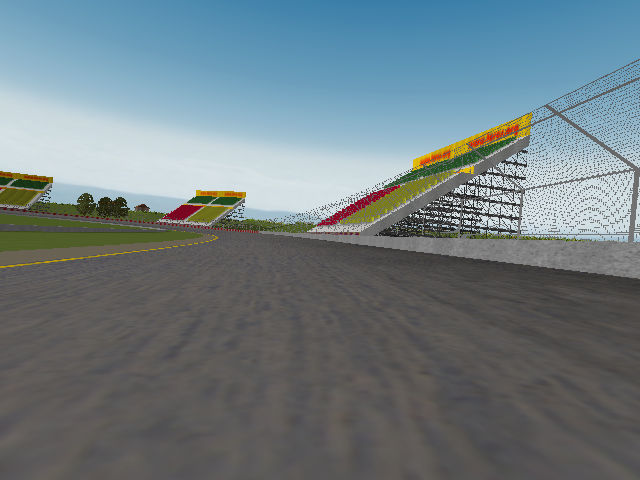}
    \caption{Agent's behavior under attacked dynamics using random search}
    \end{subfigure}
    \vspace{-0.4cm}
    \caption{Results for Dynamics Attack on TORCS}
    \label{fig:env-torcs}
\end{figure}

\textbf{Real robot policy attack}
To evaluate the performance of the adversarial patch in the physical world, we print it out and mount it at five different positions (front, left, right, up and below of the robot position) and put the robot at 50 different viewing angles from the adversarial patch (varied from -60$^{\circ}$ to 60$^{\circ}$). 
We select a target action for attack, which results in suboptimal behavior such as a collision. The physical experiment settings are included in Figure~\ref{fig:real_robot_location}.
The results of attack success rate are shown in Table~\ref{tab:real_robot}.
The results show that by mounting the patch just in front of the robot results in the best attack success rate both in white-box attack and black-box attack. When there are inaccuracies of patch mounting, such as mounting the patch to the slight left, right, up or below of the robot field of view, the attack success rate goes down.
\vspace{-5pt}

\begin{table}[h!]
    \centering
    \caption{Attack success rate over all viewing angles by mounting the adversarial image patch with different selected positions.}
    \vspace{-5pt}
    \begin{tabular}{l|ccccc}
    \toprule
    \multirow{2}{*}{Method } & \multicolumn{4}{c}{ Positions}\\
     & Front & Left & Right & Up & Below \\
    \midrule
    \textsf{obs-seq-fgsm-wb} & 55\% & 33\% & 37\% & 42\% & 48\% \\
    \textsf{obs-seq-sfd-bb} & 50\% & 29\% & 31\% & 33\% & 38\%  \\
    \bottomrule
    \end{tabular}
    \label{tab:real_robot}
\end{table}

\section{Discussion and Conclusions}
\label{sec:conclusion}
We propose and evaluate a set of black-box adversarial attacks on DRL models using finite-difference methods. The sample efficiency of FD based methods are further improved by using the adaptive sampling method, online sequential attack and attack frame selection. Most importantly, we provide the first example of adversarial attacks on real world visual navigation robot. Studying the adversarial attack on real robot can help to find potential defense approaches to improve the robustness of DRL models. From the perspective of training robust RL policies, it is important to know the severeness of the risk related with the proposed attacks. Among the proposed attacks, the environment dynamics attack can be a more realistic potential risk to consider than the attacks based on observations. The reason is that this attack does not require access to modify the policy network software system and only requires access to modify environment dynamics, and by modifying the environment dynamics parameters such as changing road condition in autonomous driving, we see from our experiments that the agent tends to fail with the original policy. The observation space based attack, especially the black-box attacks, are also important to defend against, since an attacker can definitely query the network and may have access to change the observations. The real world physical attack experiment by modifying the environment poses a real concern for DRL policies that may be potentially affected. 
We hope our exploratory work and the analysis of attacks help form a more complete view for what threats should be considered in ongoing research in robust reinforcement learning.

\begin{acks}
\vspace{-5pt}
 This work is partially supported by the NSF grant No.1910100, NSF CNS 20-46726 CAR, Berkeley Deep
Drive (BDD), Berkeley Artificial Intelligence Research, Open Philanthropy and Amazon Research Award. We thank for discussions from the Biomimetic Millisystems Group at UC Berkeley.
\end{acks}

\bibliographystyle{ACM-Reference-Format}
\balance
\bibliography{adrl}


\begin{thebibliography}{40}


\ifx \showCODEN    \undefined \def \showCODEN     #1{\unskip}     \fi
\ifx \showDOI      \undefined \def \showDOI       #1{#1}\fi
\ifx \showISBNx    \undefined \def \showISBNx     #1{\unskip}     \fi
\ifx \showISBNxiii \undefined \def \showISBNxiii  #1{\unskip}     \fi
\ifx \showISSN     \undefined \def \showISSN      #1{\unskip}     \fi
\ifx \showLCCN     \undefined \def \showLCCN      #1{\unskip}     \fi
\ifx \shownote     \undefined \def \shownote      #1{#1}          \fi
\ifx \showarticletitle \undefined \def \showarticletitle #1{#1}   \fi
\ifx \showURL      \undefined \def \showURL       {\relax}        \fi
\providecommand\bibfield[2]{#2}
\providecommand\bibinfo[2]{#2}
\providecommand\natexlab[1]{#1}
\providecommand\showeprint[2][]{arXiv:#2}

\bibitem[\protect\citeauthoryear{{Anki}}{{Anki}}{2019}]%
        {anki}
\bibfield{author}{\bibinfo{person}{{Anki}}.} \bibinfo{year}{2019}\natexlab{}.
\newblock \bibinfo{title}{Anki | We create robots that move you}.
\newblock \bibinfo{howpublished}{\texttt{https://anki.com/en-us.html}}.
\newblock


\bibitem[\protect\citeauthoryear{Athalye, Engstrom, Ilyas, and Kwok}{Athalye
  et~al\mbox{.}}{2017}]%
        {athalye2017synthesizing}
\bibfield{author}{\bibinfo{person}{Anish Athalye}, \bibinfo{person}{Logan
  Engstrom}, \bibinfo{person}{Andrew Ilyas}, {and} \bibinfo{person}{Kevin
  Kwok}.} \bibinfo{year}{2017}\natexlab{}.
\newblock \showarticletitle{Synthesizing robust adversarial examples}.
\newblock \bibinfo{journal}{\emph{arXiv preprint arXiv:1707.07397}}
  (\bibinfo{year}{2017}).
\newblock


\bibitem[\protect\citeauthoryear{Bau, Zhou, Khosla, Oliva, and Torralba}{Bau
  et~al\mbox{.}}{2017}]%
        {bau2017network}
\bibfield{author}{\bibinfo{person}{David Bau}, \bibinfo{person}{Bolei Zhou},
  \bibinfo{person}{Aditya Khosla}, \bibinfo{person}{Aude Oliva}, {and}
  \bibinfo{person}{Antonio Torralba}.} \bibinfo{year}{2017}\natexlab{}.
\newblock \showarticletitle{Network dissection: Quantifying interpretability of
  deep visual representations}. In \bibinfo{booktitle}{\emph{Computer Vision
  and Pattern Recognition (CVPR), 2017 IEEE Conference on}}. IEEE,
  \bibinfo{pages}{3319--3327}.
\newblock


\bibitem[\protect\citeauthoryear{Behzadan and Munir}{Behzadan and
  Munir}{2017}]%
        {behzadan2017vulnerability}
\bibfield{author}{\bibinfo{person}{Vahid Behzadan} {and}
  \bibinfo{person}{Arslan Munir}.} \bibinfo{year}{2017}\natexlab{}.
\newblock \showarticletitle{Vulnerability of Deep Reinforcement Learning to
  Policy Induction Attacks}.
\newblock \bibinfo{journal}{\emph{arXiv preprint arXiv:1701.04143}}
  (\bibinfo{year}{2017}).
\newblock


\bibitem[\protect\citeauthoryear{Bellemare, Naddaf, Veness, and
  Bowling}{Bellemare et~al\mbox{.}}{2013}]%
        {bellemare2013arcade}
\bibfield{author}{\bibinfo{person}{Marc~G Bellemare}, \bibinfo{person}{Yavar
  Naddaf}, \bibinfo{person}{Joel Veness}, {and} \bibinfo{person}{Michael
  Bowling}.} \bibinfo{year}{2013}\natexlab{}.
\newblock \showarticletitle{The arcade learning environment: An evaluation
  platform for general agents}.
\newblock \bibinfo{journal}{\emph{Journal of Artificial Intelligence Research}}
   \bibinfo{volume}{47} (\bibinfo{year}{2013}), \bibinfo{pages}{253--279}.
\newblock


\bibitem[\protect\citeauthoryear{Bhagoji, He, Li, and Song}{Bhagoji
  et~al\mbox{.}}{2017}]%
        {bhagoji2017exploring}
\bibfield{author}{\bibinfo{person}{Arjun~Nitin Bhagoji},
  \bibinfo{person}{Warren He}, \bibinfo{person}{Bo Li}, {and}
  \bibinfo{person}{Dawn Song}.} \bibinfo{year}{2017}\natexlab{}.
\newblock \showarticletitle{Exploring the Space of Black-box Attacks on Deep
  Neural Networks}.
\newblock \bibinfo{journal}{\emph{arXiv preprint arXiv:1712.09491}}
  (\bibinfo{year}{2017}).
\newblock


\bibitem[\protect\citeauthoryear{Carlini and Wagner}{Carlini and
  Wagner}{2017}]%
        {carlini2016towards}
\bibfield{author}{\bibinfo{person}{Nicholas Carlini} {and}
  \bibinfo{person}{David Wagner}.} \bibinfo{year}{2017}\natexlab{}.
\newblock \showarticletitle{Towards Evaluating the Robustness of Neural
  Networks}. In \bibinfo{booktitle}{\emph{IEEE Symposium on Security and
  Privacy, 2017}}.
\newblock


\bibitem[\protect\citeauthoryear{Chen, Zhang, Sharma, Yi, and Hsieh}{Chen
  et~al\mbox{.}}{2017}]%
        {chen2017zoo}
\bibfield{author}{\bibinfo{person}{Pin-Yu Chen}, \bibinfo{person}{Huan Zhang},
  \bibinfo{person}{Yash Sharma}, \bibinfo{person}{Jinfeng Yi}, {and}
  \bibinfo{person}{Cho-Jui Hsieh}.} \bibinfo{year}{2017}\natexlab{}.
\newblock \showarticletitle{Zoo: Zeroth order optimization based black-box
  attacks to deep neural networks without training substitute models}. In
  \bibinfo{booktitle}{\emph{ACM Workshop on Artificial Intelligence and
  Security}}. \bibinfo{pages}{15--26}.
\newblock


\bibitem[\protect\citeauthoryear{Dai, Li, and Rad}{Dai et~al\mbox{.}}{2005}]%
        {dai2005approach}
\bibfield{author}{\bibinfo{person}{Xiaohui Dai}, \bibinfo{person}{Chi-Kwong
  Li}, {and} \bibinfo{person}{Ahmad~B Rad}.} \bibinfo{year}{2005}\natexlab{}.
\newblock \showarticletitle{An approach to tune fuzzy controllers based on
  reinforcement learning for autonomous vehicle control}.
\newblock \bibinfo{journal}{\emph{IEEE Transactions on Intelligent
  Transportation Systems}} \bibinfo{volume}{6}, \bibinfo{number}{3}
  (\bibinfo{year}{2005}), \bibinfo{pages}{285--293}.
\newblock


\bibitem[\protect\citeauthoryear{Dhariwal, Hesse, Plappert, Radford, Schulman,
  Sidor, and Wu}{Dhariwal et~al\mbox{.}}{2017}]%
        {dhariwal2017openai}
\bibfield{author}{\bibinfo{person}{Prafulla Dhariwal},
  \bibinfo{person}{Christopher Hesse}, \bibinfo{person}{Matthias Plappert},
  \bibinfo{person}{Alec Radford}, \bibinfo{person}{John Schulman},
  \bibinfo{person}{Szymon Sidor}, {and} \bibinfo{person}{Yuhuai Wu}.}
  \bibinfo{year}{2017}\natexlab{}.
\newblock \bibinfo{title}{Openai baselines}.
\newblock
\newblock


\bibitem[\protect\citeauthoryear{Evtimov, Eykholt, Fernandes, Kohno, Li,
  Prakash, Rahmati, and Song}{Evtimov et~al\mbox{.}}{2018}]%
        {evtimov2017robust}
\bibfield{author}{\bibinfo{person}{Ivan Evtimov}, \bibinfo{person}{Kevin
  Eykholt}, \bibinfo{person}{Earlence Fernandes}, \bibinfo{person}{Tadayoshi
  Kohno}, \bibinfo{person}{Bo Li}, \bibinfo{person}{Atul Prakash},
  \bibinfo{person}{Amir Rahmati}, {and} \bibinfo{person}{Dawn Song}.}
  \bibinfo{year}{2018}\natexlab{}.
\newblock \showarticletitle{Robust Physical-World Attacks on Machine Learning
  Models}. In \bibinfo{booktitle}{\emph{Computer Vision and Pattern Recognition
  (CVPR), 2018 IEEE Conference on}}. IEEE.
\newblock


\bibitem[\protect\citeauthoryear{Ghory}{Ghory}{2004}]%
        {ghory2004reinforcement}
\bibfield{author}{\bibinfo{person}{Imran Ghory}.}
  \bibinfo{year}{2004}\natexlab{}.
\newblock \showarticletitle{Reinforcement learning in board games}.
\newblock \bibinfo{journal}{\emph{Department of Computer Science, University of
  Bristol, Tech. Rep}} (\bibinfo{year}{2004}).
\newblock


\bibitem[\protect\citeauthoryear{Gleave, Dennis, Wild, Kant, Levine, and
  Russell}{Gleave et~al\mbox{.}}{2019}]%
        {gleave2019adversarial}
\bibfield{author}{\bibinfo{person}{Adam Gleave}, \bibinfo{person}{Michael
  Dennis}, \bibinfo{person}{Cody Wild}, \bibinfo{person}{Neel Kant},
  \bibinfo{person}{Sergey Levine}, {and} \bibinfo{person}{Stuart Russell}.}
  \bibinfo{year}{2019}\natexlab{}.
\newblock \showarticletitle{Adversarial policies: Attacking deep reinforcement
  learning}.
\newblock \bibinfo{journal}{\emph{arXiv preprint arXiv:1905.10615}}
  (\bibinfo{year}{2019}).
\newblock


\bibitem[\protect\citeauthoryear{Goodfellow, Shlens, and Szegedy}{Goodfellow
  et~al\mbox{.}}{2015}]%
        {goodfellow2014explaining}
\bibfield{author}{\bibinfo{person}{Ian~J Goodfellow}, \bibinfo{person}{Jonathon
  Shlens}, {and} \bibinfo{person}{Christian Szegedy}.}
  \bibinfo{year}{2015}\natexlab{}.
\newblock \showarticletitle{Explaining and harnessing adversarial examples}. In
  \bibinfo{booktitle}{\emph{International Conference on Learning
  Representations}}.
\newblock


\bibitem[\protect\citeauthoryear{Haarnoja, Zhou, Abbeel, and Levine}{Haarnoja
  et~al\mbox{.}}{2018}]%
        {haarnoja2018soft}
\bibfield{author}{\bibinfo{person}{Tuomas Haarnoja}, \bibinfo{person}{Aurick
  Zhou}, \bibinfo{person}{Pieter Abbeel}, {and} \bibinfo{person}{Sergey
  Levine}.} \bibinfo{year}{2018}\natexlab{}.
\newblock \showarticletitle{Soft actor-critic: Off-policy maximum entropy deep
  reinforcement learning with a stochastic actor}.
\newblock \bibinfo{journal}{\emph{arXiv preprint arXiv:1801.01290}}
  (\bibinfo{year}{2018}).
\newblock


\bibitem[\protect\citeauthoryear{Huang, Papernot, Goodfellow, Duan, and
  Abbeel}{Huang et~al\mbox{.}}{2017}]%
        {huang2017adversarial}
\bibfield{author}{\bibinfo{person}{Sandy Huang}, \bibinfo{person}{Nicolas
  Papernot}, \bibinfo{person}{Ian Goodfellow}, \bibinfo{person}{Yan Duan},
  {and} \bibinfo{person}{Pieter Abbeel}.} \bibinfo{year}{2017}\natexlab{}.
\newblock \showarticletitle{Adversarial attacks on neural network policies}.
\newblock \bibinfo{journal}{\emph{arXiv preprint arXiv:1702.02284}}
  (\bibinfo{year}{2017}).
\newblock


\bibitem[\protect\citeauthoryear{Levine, Finn, Darrell, and Abbeel}{Levine
  et~al\mbox{.}}{2016}]%
        {levine2015end}
\bibfield{author}{\bibinfo{person}{Sergey Levine}, \bibinfo{person}{Chelsea
  Finn}, \bibinfo{person}{Trevor Darrell}, {and} \bibinfo{person}{Pieter
  Abbeel}.} \bibinfo{year}{2016}\natexlab{}.
\newblock \showarticletitle{End-to-end training of deep visuomotor policies}.
\newblock \bibinfo{journal}{\emph{The Journal of Machine Learning Research}}
  \bibinfo{volume}{17}, \bibinfo{number}{1} (\bibinfo{year}{2016}),
  \bibinfo{pages}{1334--1373}.
\newblock


\bibitem[\protect\citeauthoryear{Li and Vorobeychik}{Li and
  Vorobeychik}{2014}]%
        {li2014feature}
\bibfield{author}{\bibinfo{person}{Bo Li} {and} \bibinfo{person}{Yevgeniy
  Vorobeychik}.} \bibinfo{year}{2014}\natexlab{}.
\newblock \showarticletitle{Feature cross-substitution in adversarial
  classification}. In \bibinfo{booktitle}{\emph{Advances in Neural Information
  Processing Systems}}. \bibinfo{pages}{2087--2095}.
\newblock


\bibitem[\protect\citeauthoryear{Lillicrap, Hunt, Pritzel, Heess, Erez, Tassa,
  Silver, and Wierstra}{Lillicrap et~al\mbox{.}}{2016}]%
        {lillicrap2015continuous}
\bibfield{author}{\bibinfo{person}{Timothy~P Lillicrap},
  \bibinfo{person}{Jonathan~J Hunt}, \bibinfo{person}{Alexander Pritzel},
  \bibinfo{person}{Nicolas Heess}, \bibinfo{person}{Tom Erez},
  \bibinfo{person}{Yuval Tassa}, \bibinfo{person}{David Silver}, {and}
  \bibinfo{person}{Daan Wierstra}.} \bibinfo{year}{2016}\natexlab{}.
\newblock \showarticletitle{Continuous control with deep reinforcement
  learning}. In \bibinfo{booktitle}{\emph{International Conference on Learning
  Representations}}.
\newblock


\bibitem[\protect\citeauthoryear{Lin, Hong, Liao, Shih, Liu, and Sun}{Lin
  et~al\mbox{.}}{2017}]%
        {lin2017tactics}
\bibfield{author}{\bibinfo{person}{Yen-Chen Lin}, \bibinfo{person}{Zhang-Wei
  Hong}, \bibinfo{person}{Yuan-Hong Liao}, \bibinfo{person}{Meng-Li Shih},
  \bibinfo{person}{Ming-Yu Liu}, {and} \bibinfo{person}{Min Sun}.}
  \bibinfo{year}{2017}\natexlab{}.
\newblock \showarticletitle{Tactics of Adversarial Attack on Deep Reinforcement
  Learning Agents}. In \bibinfo{booktitle}{\emph{26th International Joint
  Conference on Artificial Intelligence}}.
\newblock


\bibitem[\protect\citeauthoryear{Mnih, Badia, Mirza, Graves, Lillicrap, Harley,
  Silver, and Kavukcuoglu}{Mnih et~al\mbox{.}}{2016}]%
        {mnih2016asynchronous}
\bibfield{author}{\bibinfo{person}{Volodymyr Mnih},
  \bibinfo{person}{Adria~Puigdomenech Badia}, \bibinfo{person}{Mehdi Mirza},
  \bibinfo{person}{Alex Graves}, \bibinfo{person}{Timothy Lillicrap},
  \bibinfo{person}{Tim Harley}, \bibinfo{person}{David Silver}, {and}
  \bibinfo{person}{Koray Kavukcuoglu}.} \bibinfo{year}{2016}\natexlab{}.
\newblock \showarticletitle{Asynchronous methods for deep reinforcement
  learning}. In \bibinfo{booktitle}{\emph{International Conference on Machine
  Learning}}. \bibinfo{pages}{1928--1937}.
\newblock


\bibitem[\protect\citeauthoryear{Mnih, Kavukcuoglu, Silver, Graves, Antonoglou,
  Wierstra, and Riedmiller}{Mnih et~al\mbox{.}}{2013}]%
        {mnih2013playing}
\bibfield{author}{\bibinfo{person}{Volodymyr Mnih}, \bibinfo{person}{Koray
  Kavukcuoglu}, \bibinfo{person}{David Silver}, \bibinfo{person}{Alex Graves},
  \bibinfo{person}{Ioannis Antonoglou}, \bibinfo{person}{Daan Wierstra}, {and}
  \bibinfo{person}{Martin Riedmiller}.} \bibinfo{year}{2013}\natexlab{}.
\newblock \showarticletitle{Playing atari with deep reinforcement learning}.
\newblock \bibinfo{journal}{\emph{arXiv preprint arXiv:1312.5602}}
  (\bibinfo{year}{2013}).
\newblock


\bibitem[\protect\citeauthoryear{Mnih, Kavukcuoglu, Silver, Rusu, Veness,
  Bellemare, Graves, Riedmiller, Fidjeland, Ostrovski, et~al\mbox{.}}{Mnih
  et~al\mbox{.}}{2015}]%
        {mnih2015human}
\bibfield{author}{\bibinfo{person}{Volodymyr Mnih}, \bibinfo{person}{Koray
  Kavukcuoglu}, \bibinfo{person}{David Silver}, \bibinfo{person}{Andrei~A
  Rusu}, \bibinfo{person}{Joel Veness}, \bibinfo{person}{Marc~G Bellemare},
  \bibinfo{person}{Alex Graves}, \bibinfo{person}{Martin Riedmiller},
  \bibinfo{person}{Andreas~K Fidjeland}, \bibinfo{person}{Georg Ostrovski},
  {et~al\mbox{.}}} \bibinfo{year}{2015}\natexlab{}.
\newblock \showarticletitle{Human-level control through deep reinforcement
  learning}.
\newblock \bibinfo{journal}{\emph{Nature}} \bibinfo{volume}{518},
  \bibinfo{number}{7540} (\bibinfo{year}{2015}), \bibinfo{pages}{529}.
\newblock


\bibitem[\protect\citeauthoryear{Moosavi-Dezfooli, Fawzi, Fawzi, and
  Frossard}{Moosavi-Dezfooli et~al\mbox{.}}{2017}]%
        {moosavi2016universal}
\bibfield{author}{\bibinfo{person}{Seyed-Mohsen Moosavi-Dezfooli},
  \bibinfo{person}{Alhussein Fawzi}, \bibinfo{person}{Omar Fawzi}, {and}
  \bibinfo{person}{Pascal Frossard}.} \bibinfo{year}{2017}\natexlab{}.
\newblock \showarticletitle{Universal adversarial perturbations}. In
  \bibinfo{booktitle}{\emph{Computer Vision and Pattern Recognition (CVPR),
  2017 IEEE Conference on}}. IEEE, \bibinfo{pages}{1765--1773}.
\newblock


\bibitem[\protect\citeauthoryear{Packer, Gao, Kos, Kr{\"a}henb{\"u}hl, Koltun,
  and Song}{Packer et~al\mbox{.}}{2018}]%
        {packer2018assessing}
\bibfield{author}{\bibinfo{person}{Charles Packer}, \bibinfo{person}{Katelyn
  Gao}, \bibinfo{person}{Jernej Kos}, \bibinfo{person}{Philipp
  Kr{\"a}henb{\"u}hl}, \bibinfo{person}{Vladlen Koltun}, {and}
  \bibinfo{person}{Dawn Song}.} \bibinfo{year}{2018}\natexlab{}.
\newblock \showarticletitle{Assessing Generalization in Deep Reinforcement
  Learning}.
\newblock \bibinfo{journal}{\emph{arXiv preprint arXiv:1810.12282}}
  (\bibinfo{year}{2018}).
\newblock


\bibitem[\protect\citeauthoryear{Pan, Chen, Cai, Canny, and Yu}{Pan
  et~al\mbox{.}}{2019a}]%
        {pan2019spc}
\bibfield{author}{\bibinfo{person}{Xinlei Pan}, \bibinfo{person}{Xiangyu Chen},
  \bibinfo{person}{Qizhi Cai}, \bibinfo{person}{John Canny}, {and}
  \bibinfo{person}{Fisher Yu}.} \bibinfo{year}{2019}\natexlab{a}.
\newblock \showarticletitle{Semantic Predictive Control for Explainable and
  Efficient Policy Learning}.
\newblock \bibinfo{journal}{\emph{IEEE International Conference on Robotics and
  Automation (ICRA)}} (\bibinfo{year}{2019}).
\newblock


\bibitem[\protect\citeauthoryear{Pan, Seita, Gao, and Canny}{Pan
  et~al\mbox{.}}{2019b}]%
        {pan2019risk}
\bibfield{author}{\bibinfo{person}{Xinlei Pan}, \bibinfo{person}{Daniel Seita},
  \bibinfo{person}{Yang Gao}, {and} \bibinfo{person}{John Canny}.}
  \bibinfo{year}{2019}\natexlab{b}.
\newblock \showarticletitle{Risk averse robust adversarial reinforcement
  learning}. In \bibinfo{booktitle}{\emph{2019 International Conference on
  Robotics and Automation (ICRA)}}. IEEE, \bibinfo{pages}{8522--8528}.
\newblock


\bibitem[\protect\citeauthoryear{Pan, Wang, Zhang, Li, Yi, and Song}{Pan
  et~al\mbox{.}}{2019c}]%
        {pan2019you}
\bibfield{author}{\bibinfo{person}{Xinlei Pan}, \bibinfo{person}{Weiyao Wang},
  \bibinfo{person}{Xiaoshuai Zhang}, \bibinfo{person}{Bo Li},
  \bibinfo{person}{Jinfeng Yi}, {and} \bibinfo{person}{Dawn Song}.}
  \bibinfo{year}{2019}\natexlab{c}.
\newblock \showarticletitle{How You Act Tells a Lot: Privacy-Leakage Attack on
  Deep Reinforcement Learning}.
\newblock \bibinfo{journal}{\emph{arXiv preprint arXiv:1904.11082}}
  (\bibinfo{year}{2019}).
\newblock


\bibitem[\protect\citeauthoryear{Pan, You, Wang, and Lu}{Pan
  et~al\mbox{.}}{2017}]%
        {pan2017virtual}
\bibfield{author}{\bibinfo{person}{Xinlei Pan}, \bibinfo{person}{Yurong You},
  \bibinfo{person}{Ziyan Wang}, {and} \bibinfo{person}{Cewu Lu}.}
  \bibinfo{year}{2017}\natexlab{}.
\newblock \showarticletitle{Virtual to real reinforcement learning for
  autonomous driving}. In \bibinfo{booktitle}{\emph{British Machine Vision
  Conference (BMVC)}}.
\newblock


\bibitem[\protect\citeauthoryear{Pan, Zhang, Ichter, Faust, Tan, and Ha}{Pan
  et~al\mbox{.}}{2020}]%
        {pan2020zero}
\bibfield{author}{\bibinfo{person}{Xinlei Pan}, \bibinfo{person}{Tingnan
  Zhang}, \bibinfo{person}{Brian Ichter}, \bibinfo{person}{Aleksandra Faust},
  \bibinfo{person}{Jie Tan}, {and} \bibinfo{person}{Sehoon Ha}.}
  \bibinfo{year}{2020}\natexlab{}.
\newblock \showarticletitle{Zero-shot imitation learning from demonstrations
  for legged robot visual navigation}. In \bibinfo{booktitle}{\emph{2020 IEEE
  International Conference on Robotics and Automation (ICRA)}}. IEEE,
  \bibinfo{pages}{679--685}.
\newblock


\bibitem[\protect\citeauthoryear{Papernot, McDaniel, Goodfellow, Jha, Celik,
  and Swami}{Papernot et~al\mbox{.}}{2017}]%
        {papernot2017practical}
\bibfield{author}{\bibinfo{person}{Nicolas Papernot}, \bibinfo{person}{Patrick
  McDaniel}, \bibinfo{person}{Ian Goodfellow}, \bibinfo{person}{Somesh Jha},
  \bibinfo{person}{Z~Berkay Celik}, {and} \bibinfo{person}{Ananthram Swami}.}
  \bibinfo{year}{2017}\natexlab{}.
\newblock \showarticletitle{Practical Black-Box Attacks against Machine
  Learning}. In \bibinfo{booktitle}{\emph{Proceedings of the 2017 ACM on Asia
  Conference on Computer and Communications Security}}. ACM,
  \bibinfo{pages}{506--519}.
\newblock


\bibitem[\protect\citeauthoryear{Pinto, Davidson, Sukthankar, and Gupta}{Pinto
  et~al\mbox{.}}{2017}]%
        {pinto2017robust}
\bibfield{author}{\bibinfo{person}{Lerrel Pinto}, \bibinfo{person}{James
  Davidson}, \bibinfo{person}{Rahul Sukthankar}, {and} \bibinfo{person}{Abhinav
  Gupta}.} \bibinfo{year}{2017}\natexlab{}.
\newblock \showarticletitle{Robust Adversarial Reinforcement Learning}. In
  \bibinfo{booktitle}{\emph{{ICML}}} \emph{(\bibinfo{series}{Proceedings of
  Machine Learning Research}, Vol.~\bibinfo{volume}{70})}.
  \bibinfo{publisher}{{PMLR}}, \bibinfo{pages}{2817--2826}.
\newblock


\bibitem[\protect\citeauthoryear{Qiu, Xiao, Yang, Yan, Lee, and Li}{Qiu
  et~al\mbox{.}}{2019}]%
        {qiu2019semanticadv}
\bibfield{author}{\bibinfo{person}{Haonan Qiu}, \bibinfo{person}{Chaowei Xiao},
  \bibinfo{person}{Lei Yang}, \bibinfo{person}{Xinchen Yan},
  \bibinfo{person}{Honglak Lee}, {and} \bibinfo{person}{Bo Li}.}
  \bibinfo{year}{2019}\natexlab{}.
\newblock \showarticletitle{SemanticAdv: Generating Adversarial Examples via
  Attribute-conditional Image Editing}.
\newblock \bibinfo{journal}{\emph{arXiv preprint arXiv:1906.07927}}
  (\bibinfo{year}{2019}).
\newblock


\bibitem[\protect\citeauthoryear{Schulman, Wolski, Dhariwal, Radford, and
  Klimov}{Schulman et~al\mbox{.}}{2017}]%
        {schulman2017proximal}
\bibfield{author}{\bibinfo{person}{John Schulman}, \bibinfo{person}{Filip
  Wolski}, \bibinfo{person}{Prafulla Dhariwal}, \bibinfo{person}{Alec Radford},
  {and} \bibinfo{person}{Oleg Klimov}.} \bibinfo{year}{2017}\natexlab{}.
\newblock \showarticletitle{Proximal policy optimization algorithms}.
\newblock \bibinfo{journal}{\emph{arXiv preprint arXiv:1707.06347}}
  (\bibinfo{year}{2017}).
\newblock


\bibitem[\protect\citeauthoryear{Tan, Ha, Zhang, Pan, Ichter, and Faust}{Tan
  et~al\mbox{.}}{2021}]%
        {tan2021systems}
\bibfield{author}{\bibinfo{person}{Jie Tan}, \bibinfo{person}{Sehoon Ha},
  \bibinfo{person}{Tingnan Zhang}, \bibinfo{person}{Xinlei Pan},
  \bibinfo{person}{Brian~Andrew Ichter}, {and} \bibinfo{person}{Aleksandra
  Faust}.} \bibinfo{year}{2021}\natexlab{}.
\newblock \bibinfo{title}{Systems and Methods for Training a Machine Learned
  Model for Agent Navigation}.
\newblock
\newblock
\newblock
\shownote{US Patent App. 16/717,471.}


\bibitem[\protect\citeauthoryear{Todorov, Erez, and Tassa}{Todorov
  et~al\mbox{.}}{2012}]%
        {todorov2012mujoco}
\bibfield{author}{\bibinfo{person}{Emanuel Todorov}, \bibinfo{person}{Tom
  Erez}, {and} \bibinfo{person}{Yuval Tassa}.} \bibinfo{year}{2012}\natexlab{}.
\newblock \showarticletitle{Mujoco: A physics engine for model-based control}.
  In \bibinfo{booktitle}{\emph{Intelligent Robots and Systems (IROS), 2012
  IEEE/RSJ International Conference on}}. IEEE, \bibinfo{pages}{5026--5033}.
\newblock


\bibitem[\protect\citeauthoryear{Xiao, Deng, Li, Yu, Liu, and Song}{Xiao
  et~al\mbox{.}}{2018a}]%
        {xiao2018characterizing}
\bibfield{author}{\bibinfo{person}{Chaowei Xiao}, \bibinfo{person}{Ruizhi
  Deng}, \bibinfo{person}{Bo Li}, \bibinfo{person}{Fisher Yu},
  \bibinfo{person}{Mingyan Liu}, {and} \bibinfo{person}{Dawn Song}.}
  \bibinfo{year}{2018}\natexlab{a}.
\newblock \showarticletitle{Characterizing adversarial examples based on
  spatial consistency information for semantic segmentation}. In
  \bibinfo{booktitle}{\emph{Proceedings of the European Conference on Computer
  Vision (ECCV)}}. \bibinfo{pages}{217--234}.
\newblock


\bibitem[\protect\citeauthoryear{Xiao, Li, Zhu, He, Liu, and Song}{Xiao
  et~al\mbox{.}}{2018b}]%
        {xiao2018generating}
\bibfield{author}{\bibinfo{person}{Chaowei Xiao}, \bibinfo{person}{Bo Li},
  \bibinfo{person}{Jun-Yan Zhu}, \bibinfo{person}{Warren He},
  \bibinfo{person}{Mingyan Liu}, {and} \bibinfo{person}{Dawn Song}.}
  \bibinfo{year}{2018}\natexlab{b}.
\newblock \showarticletitle{Generating adversarial examples with adversarial
  networks}. In \bibinfo{booktitle}{\emph{IJCAI}}.
\newblock


\bibitem[\protect\citeauthoryear{Xiao, Yang, Li, Deng, and Liu}{Xiao
  et~al\mbox{.}}{2019}]%
        {xiao2019meshadv}
\bibfield{author}{\bibinfo{person}{Chaowei Xiao}, \bibinfo{person}{Dawei Yang},
  \bibinfo{person}{Bo Li}, \bibinfo{person}{Jia Deng}, {and}
  \bibinfo{person}{Mingyan Liu}.} \bibinfo{year}{2019}\natexlab{}.
\newblock \showarticletitle{Meshadv: Adversarial meshes for visual
  recognition}. In \bibinfo{booktitle}{\emph{Proceedings of the IEEE Conference
  on Computer Vision and Pattern Recognition}}. \bibinfo{pages}{6898--6907}.
\newblock


\bibitem[\protect\citeauthoryear{Xiao, Zhu, Li, He, Liu, and Song}{Xiao
  et~al\mbox{.}}{2018c}]%
        {xiao2018spatially}
\bibfield{author}{\bibinfo{person}{Chaowei Xiao}, \bibinfo{person}{Jun-Yan
  Zhu}, \bibinfo{person}{Bo Li}, \bibinfo{person}{Warren He},
  \bibinfo{person}{Mingyan Liu}, {and} \bibinfo{person}{Dawn Song}.}
  \bibinfo{year}{2018}\natexlab{c}.
\newblock \showarticletitle{Spatially transformed adversarial examples}.
\newblock \bibinfo{journal}{\emph{ICLR 2019}} (\bibinfo{year}{2018}).
\newblock


\end{thebibliography}
\clearpage

\appendix

\section{Adaptive Sampling-based FD Algorithm}
We give the SFD algorithm description in the following Algorithm~\ref{alg:sfd}.

\begin{algorithm}[h!]
  \caption{Adaptive sampling based finite difference (SFD)}
  \label{alg:sfd}
  \begin{algorithmic}
  \STATE{\bfseries Input:} \\
  \quad\quad\quad $\textbf{s}\in\mathbb{R}^d$: state vector\;  \\
  \quad\quad\quad $f(w)$: loss function with parameters $w$\; \\
  \quad\quad\quad $k$: \# of item to estimate gradient at each step \; \\
 \quad\quad\quad $n$: \# of iteration\; \\
 \quad\quad\quad $\theta$: the gradient threshold \\
 \quad\quad\quad $\delta$: finite difference perturbation value \; \\
    {\bfseries Output:} estimated gradient $\nabla_{\textbf{s}}\hat{f}(\textbf{s};w)$ \\
    {\bfseries Initialization:}\\
    \quad $\nabla_{\textbf{s}}\hat{f}(\textbf{s};w)\leftarrow 0$, randomly select $k$ dimensions in $\{1,2,\cdots, d\}$ to form an index set $P$. \\
  \textbf{For} {$t=0$ {\bfseries to} $n$}\\
  \quad\quad \textbf{For} {$j\in P$}\\
  \quad\quad\quad\quad \textbf{If} {$\nabla_j\hat{f}(\textbf{s};w)$hasn't been estimated}\\
  \quad\quad\quad\quad\quad\quad Get $\textbf{v}\in\mathbb{R}^d$ such that $\textbf{v}_j=1$ and $\textbf{v}_i=0, \forall i\neq j$ \\
  \quad\quad\quad\quad\quad\quad obtain $\nabla_j\hat{f}(\textbf{s};w) = \frac{f(\textbf{s}+\delta\textbf{v};w)-f(\textbf{s}-\delta\textbf{v};w)}{2\delta}$ \\
  \quad\quad\quad\quad \textbf{end}\\
  \quad\quad \textbf{end}\\
  \quad\quad $P' = \{k; k\in P,|\nabla_k\hat{f}(\textbf{s};w)|\geq\theta\}$  \\
  \quad\quad $P$ = indexes of neighbors of indexes in $P'$ \\
  end
  \end{algorithmic}
\end{algorithm}

\section{Dynamics Attack Algorithm}

\begin{algorithm}[h!]
  \caption{Dynamics Attack Algorithm}
  \label{alg:dynamics_attack}
  \begin{algorithmic}
  \STATE{\bfseries Input:} \\
  \quad\quad\quad $\pi$: well trained policy\; \\
  \quad\quad\quad $env$: the environment used to train the original policy\; \\
  \quad\quad\quad $T$: total training steps\;\\
  \quad\quad\quad $s_{target}$: target attack state\; \\
  \quad\quad\quad $n$: maximum episodic steps \\
  {\bfseries Initialization:}\\
    \quad $\pi_{attack}$: RL policies for generating dynamics perturbations \\
    \quad $B$: replay buffer \\
    \quad $d_{current}$ = $env.$dynamics\\
  \textbf{For} {$t=0$ {\bfseries to} $T$} \\
    \quad\quad $d_{new}$ = $\pi_{attack}$.sample\_action() \\
    \quad\quad $env.$change\_dynamics($d_{new}$) \\
    \quad\quad reached\_state = evaluate\_policy($env, \pi$) \\
    \quad\quad reward = $1/\|$reached\_state-$s_{target}\|$ \\
    \quad\quad done = episode steps reach $n$ \\
    \quad\quad action = $d_{new}$ \\
    \quad\quad B.append($d_{current}$, $d_{new}$, action, reward, done) \\
    \quad\quad train\_policy($\pi_{attack}$, B) \\
    \quad\quad $d_{current}$ = $d_{new}$ \\
  \end{algorithmic}
  \label{algorl}
\end{algorithm}

\textbf{Details of dynamics attack on Torcs environment.} In  the Torcs self-driving car environment, we modified the road friction parameter and road bump size to generate novel challenging environments. Specifically, parameters within the XML files that defines these road properties will be modified and recompiled each time a new environment is generated.

\section{Proof of Lemma 1}
\begin{proof}
Following the notation in Lemma~\ref{co2}, define
\begin{equation}
\centering
\begin{split}
    P_\textit{random}(\beta+\theta) & = \frac{1}{|S_A|}\sum_{i \in S_A}1(|\nabla_if(x)|\geq\beta+\theta) \\
    P_\textit{random}(\theta) & = \frac{1}{|S_A|}\sum_{i \in S_A}1(|\nabla_if(x)|\geq\theta) \\
    \end{split}
\end{equation}
Note that the randomness is with respect to the dimensions.
These probabilities are, when we randomly choose
$k$ dimensions in a gradient vector of $d$ dimensions,
the chance of sampling some dimension with
gradient absolute value no less than $\beta+\theta$
or $\theta$, respectively.
For $\nabla_if(x)$, its neighbor gradient's absolute value follows the following distribution: 

Define $A_0 = P_\textit{random}(\theta+\beta)k, B_0=[P_\textit{random}(\theta)-P_\textit{random}(\theta+\beta)]k$ as the number 
of nontrivial gradients estimated in iteration 0, and define
\begin{equation}
    \centering
    \begin{split}
        A_t & = (1-q) * A_{t-1} + p_3*B_{t-1} \\
        B_t & = q * A_{t-1} + p_2 * B_{t-1},
    \end{split}
\end{equation}
as the number 
of nontrivial gradients estimated in iteration $t$,
then the ratio 
$$R_t = \frac{A_t+B_t}{A_{t-1}+B_{t-1}}$$
characterizes in every iteration, the ratio of nontrivial gradient estimation over the total number of gradient estimation for $t\geq 1$, while $P_\textit{random}(\theta)$ characterizes nontrivial gradient estimation ratio if using random sampling method. For $t=0$, $R_0=P_\textit{random}(\theta)$. Since $R_t$ is the
ratio of the number non-trivial gradients over
the total number of gradients estimated
in the $t$-th iteration, if we randomly select
the dimensions, $R_t$ will be the same as
$P_\textit{random}(\theta)$. Using our SFD method,
if for every $t>0$, we have $R_t\geq P_\textit{random}(\theta)$,
then overall speaking, the ratio of the number
of non-trivial gradients estimated over the
total number of gradients estimated, $P_\textit{SFD}(\theta)$,
will be greater than $P_\textit{random}(\theta)$. 
The definition of $P_\textit{SFD}(\theta)$ is
\begin{equation}
    P_\textit{SFD}(\theta) = \frac{\sum_{i=1}^nA_t+B_t}{\sum_{i=0}^{n-1}A_t+B_t},
\end{equation}
where $n$ is the number of iterations. 

We now prove that if $p_1\leq 1-P_\textit{random}(\theta)$, then 
for $t>0$,
\begin{equation}
    R_t = \frac{A_t+B_t}{A_{t-1}+B_{t-1}} \geq P_\textit{random}(\theta).
\end{equation}
More specifically, for $t=0$, since we 
perform uniform random sampling, and we
sample $k$ dimensions, we have
\begin{equation}
    R_0 = \frac{P_\textit{random}(\theta)k}{k} = P_\textit{random}(\theta).
\end{equation}
Then if we can prove that for $t\geq 1$,
$R_t \geq P_\textit{random}(\theta)$, and for some $t$, we have $R_t > P_\textit{random}(\theta)$, it in turn proves $P_\textit{SFD}(\theta) > P_\textit{random}(\theta)$. The reason is 
that if for every step in our SFD algorithm the sample efficiency is at least the same as random sampling, and
for some iterations our SFD is more 
efficient, then overall speaking our SFD based gradient estimation is more efficient than random sampling based
gradient estimation.

Now from the above definition, we have
\begin{equation}
    \centering
    \begin{split}
      R_t & = \frac{A_t+B_t}{A_{t-1}+B_{t-1}} \\
      & = \frac{A_{t-1}(1-q)+p_3B_{t-1}+qA_{t-1}+p_2B_{t-1}}{A_{t-1}+B_{t-1}} \\
      & = \frac{A_{t-1}+(1-p_1)B_{t-1}}{A_{t-1}+B_{t-1}} \\
      & = 1 - \frac{p_1B_{t-1}}{A_{t-1}+B_{t-1}} \\
      & > 1 - p_{1}
    \end{split}
\end{equation}
Therefore, when $p_{1} < 1 - P_\textit{random}(\theta)$, $R_t > P_\textit{random}(\theta)$, our sampling algorithm is more efficient than random sampling. 
\end{proof}

\section{Proof of Theorem 2}
\begin{proof}
when $x\in\mathbb{R}^1$, assume function $f$ is $C^{\infty}$,
by Taylor's series we have
\begin{equation}
\begin{split}
    f(x+\delta) & = f(x) + f'(x)\delta + \frac{{\delta}^2}{2}f''(x)+\frac{h^3}{3!}f^{(3)}(x)+\cdots \\
    f(x-\delta) & = f(x) - f'(x)\delta + \frac{{\delta}^2}{2}f''(x)-\frac{{\delta}^3}{3!}f^{(3)}(x)+\cdots. \\
\end{split}
\end{equation}
Combine the two equations we get
\begin{equation}
    \frac{f(x+\delta)-f(x-\delta)}{2\delta} - f'(x) = 
    \sum_{i=1}^{\infty}\frac{\delta^{2i}}{(2i+1)!}f^{(2i+1)}(x),
\end{equation}
which means the truncation error is bounded by $O(\delta^2)$.
Moreover, we have
\begin{equation}
    \big|\frac{f(x+\delta)-f(x-\delta)}{2h}-f'(x)\big|\leq C\delta^2,
\end{equation}
where $C = \sup_{t\in[x-\delta_0,x+\delta_0]}\frac{f^{(3)}(t)}{6}$,
and $0<\delta\leq \delta_0$.

We can regard each dimension as a single variable function
$f(x_i)$, then we have
\begin{equation}
    \big|\frac{f(x_i+\delta)-f(x_i-\delta)}{2\delta}-f'(x_i)\big|\leq C\delta^2.
\end{equation}
Then $\forall i\in\{1,2,\cdots,d\}$, assume we are able to sample all nontrivial gradients with absolute gradient value no less than $\theta$, then we have
\begin{equation}
\begin{split}
    \sum_{i\in\{1,2,\cdots,d\}   ,\nabla_if(x)\geq\theta}\|\nabla_i\hat{f}(x)-\nabla_if(x)\|_1&\leq S_{\theta}C\delta^2 \\
    \sum_{i\in\{1,2,\cdots,d\}   ,\nabla_if(x)<\theta}\|\nabla_i\hat{f}(x)-\nabla_if(x)\|_1&\leq (d-S_{\theta})\theta.
    \end{split}
\end{equation} 

Therefore, the truncation error of
 gradients estimation is upper bounded by the following inequality.
 \begin{equation}
     \|\nabla\hat{f}(x)-\nabla f(x)\|_{1} \leq S_{\theta}C\delta^2+(d-S_{\theta})\theta,
 \end{equation}
for some $C>0$, and $\nabla\hat{f}(x)$ is the estimated gradient of $f$ with respect
to $x$.
\end{proof}

\section{Proof of Corollary 3}
\begin{proof}
Recall the definition of Q value is
\begin{equation}
    Q(s_{\tau},a_{\tau}) = \mathbb{E}_{\pi}[\sum_{t=\tau}^{H-1}\gamma^{t-\tau}r_t|s_{\tau},a_{\tau}].
\end{equation}
The variance of Q value at a state $s$ is defined as
\begin{equation}
    Var(Q(s)) = \frac{1}{|\mathcal{A}|-1}\sum_{i=1}^{|\mathcal{A}|}\big(Q(s,a_i)-\frac{1}{|\mathcal{A}|}\sum_{j=1}^{|\mathcal{A}|}Q(s,a_j)\big)^2,
\end{equation}
where $\mathcal{A}$ is the action space of the MDP, and $|\mathcal{A}|$ denotes the 
number of actions. 
Suppose we are to attack
state $s_m$ and state $s_n$ where the Q value variance of this
two states are $Var(Q(s_m),\cdot)$ and $Var(Q(s_n),\cdot)$, and assume 
$m<n$. 

Denote the state-action pair Q
values after attack are $Q(s_m,\hat{a}_m)$ and $Q(s_n,\hat{a}_n)$, respectively. During the attack, state $s_m$ is modified to $\hat{s}_m$, and state $s_n$ is modified to $\hat{s}_n$, and their action's Q-value also change, so we use $\hat{a}_m$ and $\hat{a}_n$ to denote the actions after the attack. By using $s_m$ and $s_n$ instead of $\hat{s}_m$ and $\hat{s}_n$, we mean that though the observed states are modified by the attack algorithm, but the true states do not change. By using a different action notation, we mean that since the observed states have been modified, the optimal actions at the modified states can be different from the optimal actions at the original observed states.
Then the total discounted expected
return for the entire episode can be expressed as (assume all actions are optimal actions)
\begin{equation}\label{cor21}
\begin{split}
    Q' & = Q(s_0,a_0)-\gamma^mQ(s_m,a_m)+\gamma^mQ(s_m,\hat{a}_m),\\
    Q'' & = Q(s_0, a_0) - \gamma^nQ(s_n, a_n) + \gamma^nQ(s_n, \hat{a}_n). \\
\end{split}
\end{equation}
Since $m<n$, $Q''$ can also be expressed as
\begin{equation}\label{cor22}
\begin{aligned}
    Q'' = &\ Q(s_0, a_0) - \gamma^mQ(s_m, a_m) + \gamma^mQ(s_m, a_m) \\
    & -
    \gamma^nQ(s_n, a_n) + \gamma^nQ(s_n, \hat{a}_n). 
\end{aligned}
\end{equation}
Subtract $Q'$ by $Q''$ we get
\begin{equation}
\begin{aligned}
    Q' - Q''  = &\ \gamma^m(Q(s_m,\hat{a}_m) - Q(s_m,a_m)) \\
        & +\gamma^nQ(s_n,a_n) - \gamma^nQ(s_n,\hat{a}_n) \\
     = &-\gamma^m[Q(s_m,a_m)-Q(s_m,\hat{a}_m) \\
     & - \gamma^{n-m}(Q(s_n,a_n)-Q(s_n,\hat{a}_n))].
\end{aligned}
\end{equation}
According to our claim that states where the variance of Q value
function is small will get better attack effect, suppose 
$Var(Q(s_m))>Var(Q(s_n))$, and assume the range of Q value at
step $m$ is larger than step $n$, then we have
\begin{equation}
\begin{aligned}
    Q(s_m, a_m) - Q(s_m, \hat{a}_m) &> Q(s_n, a_n) - Q(s_n, \hat{a}_n) \\&> \gamma^{n-m}[Q(s_n, a_n) - Q(s_n, \hat{a}_n)].
\end{aligned}
\end{equation}
Therefore $Q'-Q''<0$ which means attack state $m$ the agent will
get less return in expectation. If $Var(Q(s_m))<Var(Q(s_n))$, 
assume the range of Q value at step $m$ is smaller than step $n$, 
then we have
\begin{equation}
    Q(s_m, a_m) - Q(s_m, \hat{a}_m) < Q(s_n, a_n) - Q(s_n, \hat{a}_n).
\end{equation}
If $n-m$ is very small or $Q(s_n, a_n)-Q(s_n,\hat{a}_n)$ is large
enough such that $Q(s_m, a_m) - Q(s_m, \hat{a}_m) < \gamma^{n-m}[Q(s_n, a_n) - Q(s_n, \hat{a}_n)]$, then we have $Q'-Q''>0$ which means attacking state $m$ the agent will get more reward in expectation than attacking
state $n$.
\end{proof}

\begin{figure}[h!]
    \centering
    \begin{subfigure}{\linewidth}
    \centering
    \includegraphics[width=0.13\textwidth]{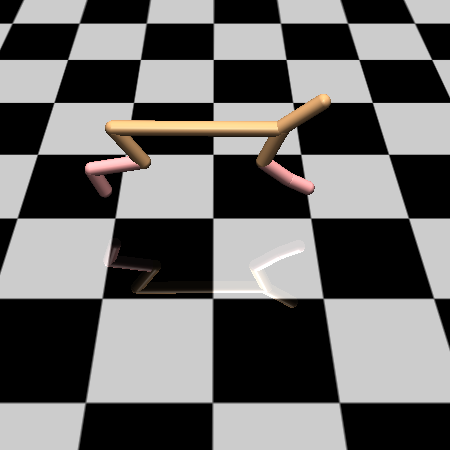}
    \includegraphics[width=0.13\textwidth]{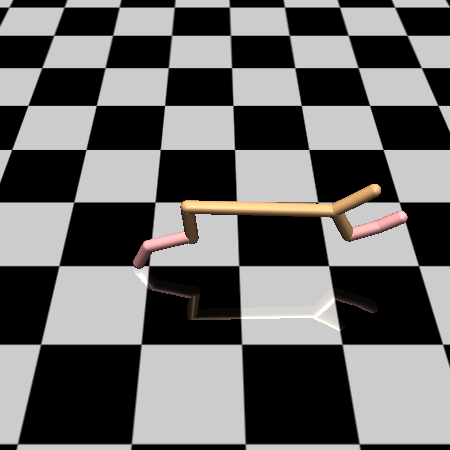}
    \includegraphics[width=0.13\textwidth]{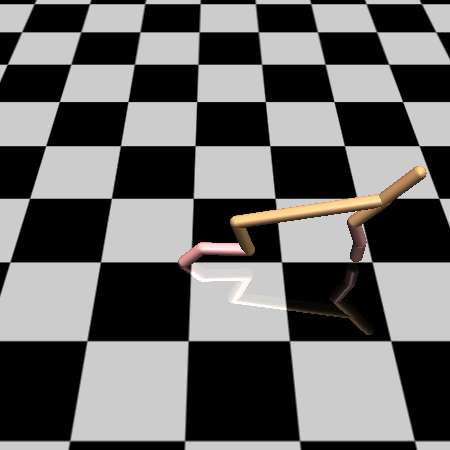}
    \includegraphics[width=0.13\textwidth]{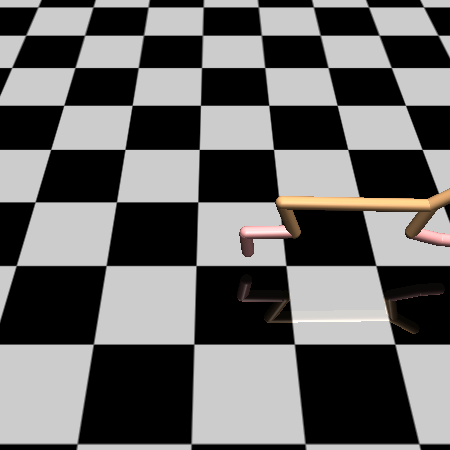}
    \includegraphics[width=0.13\textwidth]{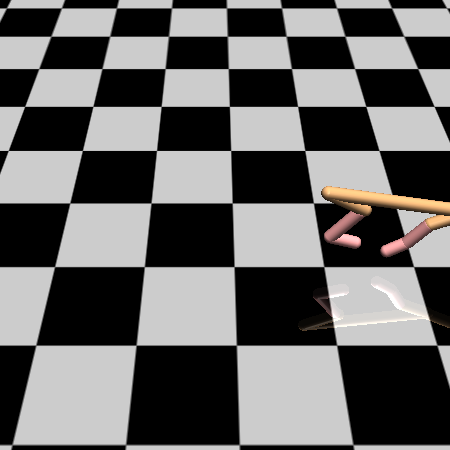}
    \includegraphics[width=0.13\textwidth]{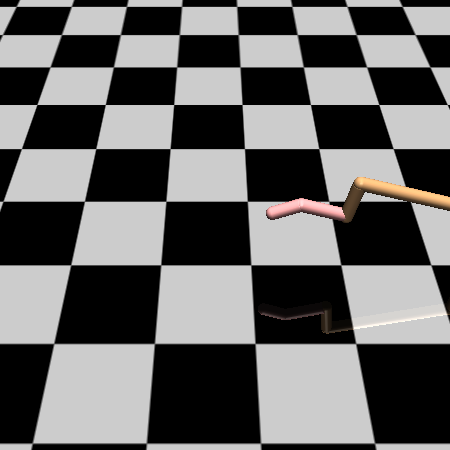}
    \caption{Agent's behavior under normal dynamics}
    \end{subfigure}
    \begin{subfigure}{\linewidth}
    \centering
    \includegraphics[width=0.13\textwidth]{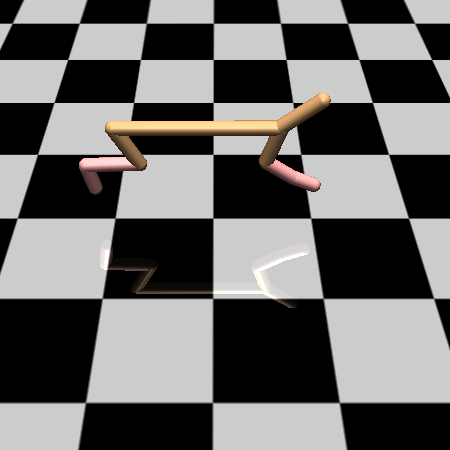}
    \includegraphics[width=0.13\textwidth]{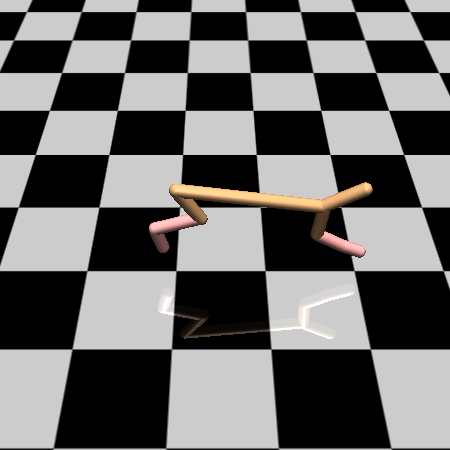}
    \includegraphics[width=0.13\textwidth]{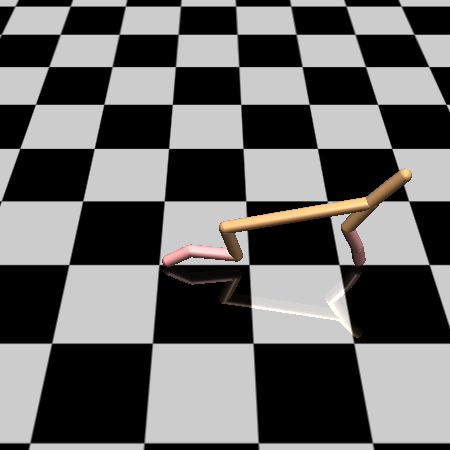}
    \includegraphics[width=0.13\textwidth]{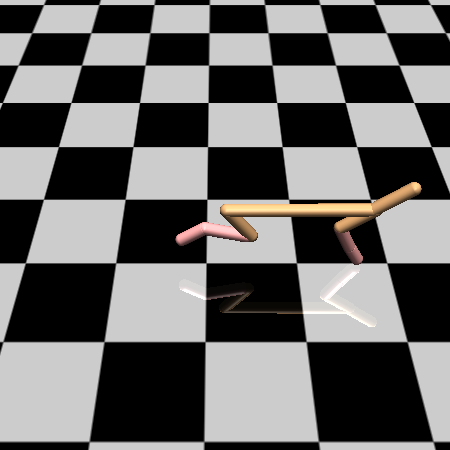}
    \includegraphics[width=0.13\textwidth]{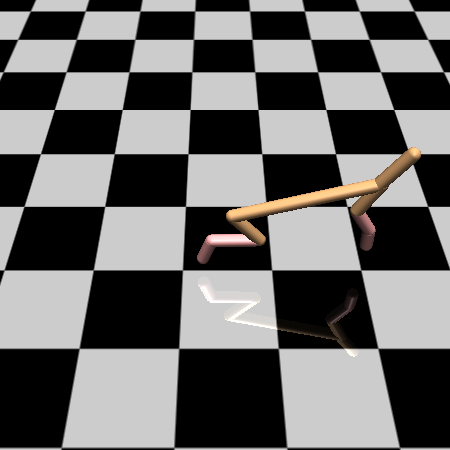}
    \includegraphics[width=0.13\textwidth]{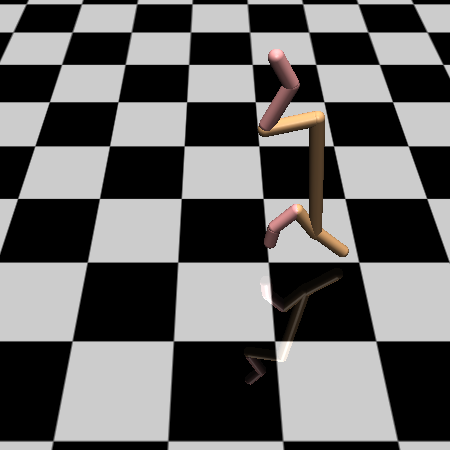}
    \caption{Agent's behavior under abnormal dynamics}
    \end{subfigure}
    \begin{subfigure}{\linewidth}
    \centering
    \includegraphics[width=0.13\textwidth]{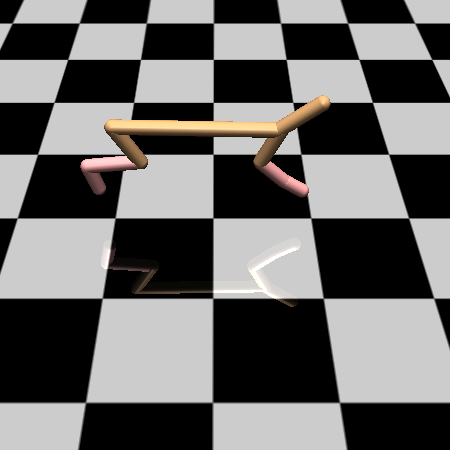}
    \includegraphics[width=0.13\textwidth]{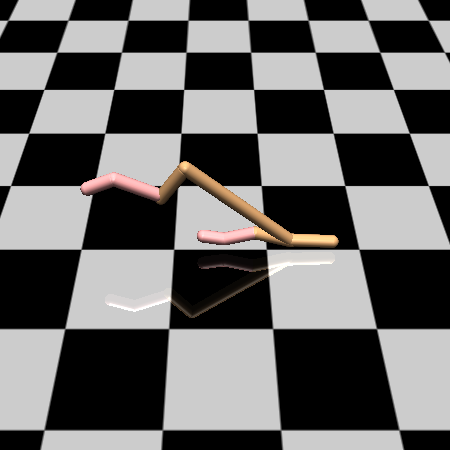}
    \includegraphics[width=0.13\textwidth]{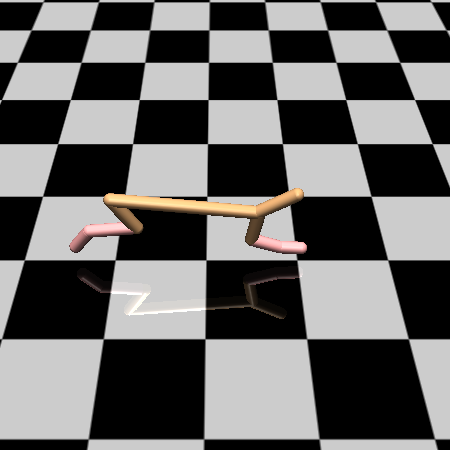}
    \includegraphics[width=0.13\textwidth]{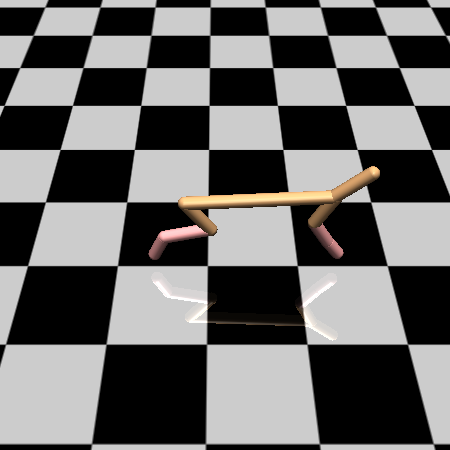}
    \includegraphics[width=0.13\textwidth]{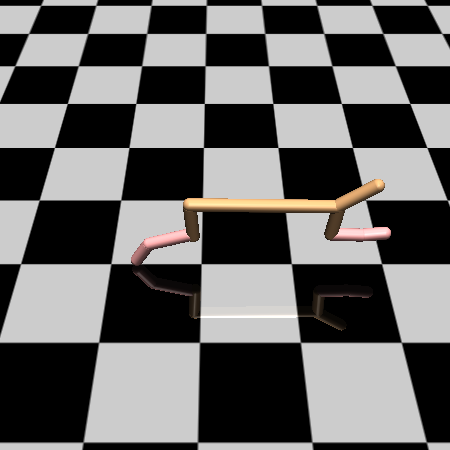}
    \includegraphics[width=0.13\textwidth]{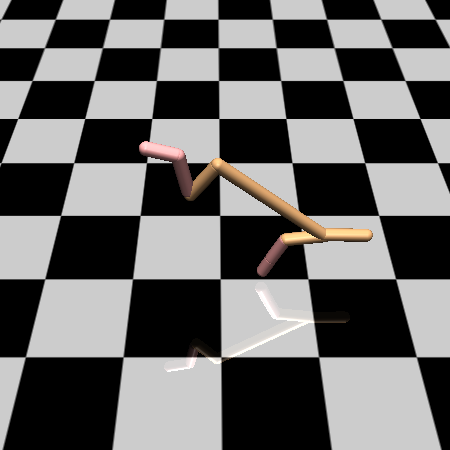}
    \caption{Agent's behavior under attacked dynamics using RL}
    \end{subfigure}
    \begin{subfigure}{\linewidth}
    \centering
    \includegraphics[width=0.13\textwidth]{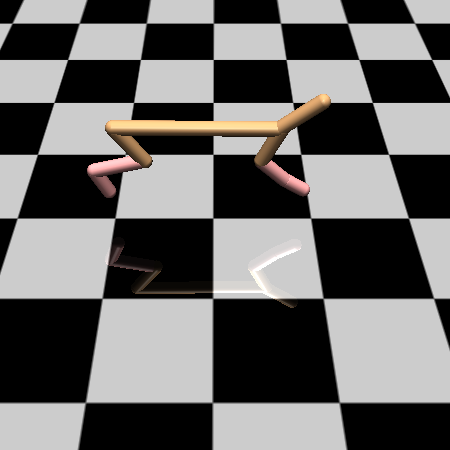}
    \includegraphics[width=0.13\textwidth]{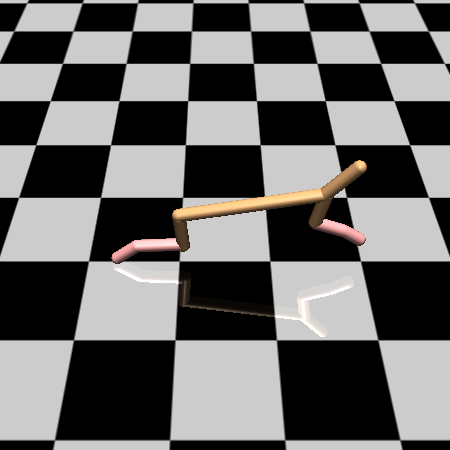}
    \includegraphics[width=0.13\textwidth]{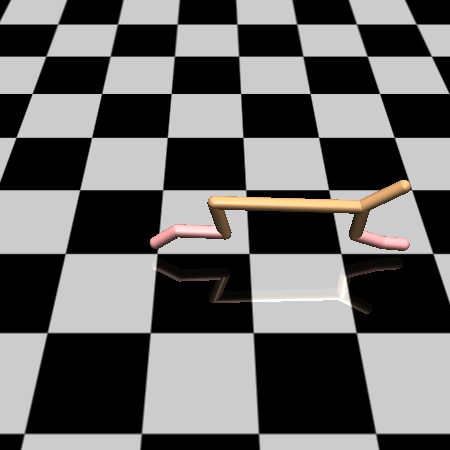}
    \includegraphics[width=0.13\textwidth]{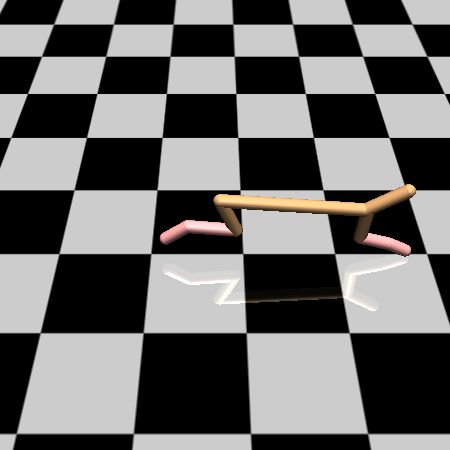}
    \includegraphics[width=0.13\textwidth]{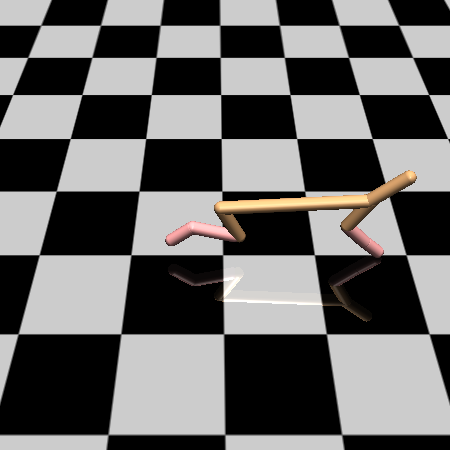}
    \includegraphics[width=0.13\textwidth]{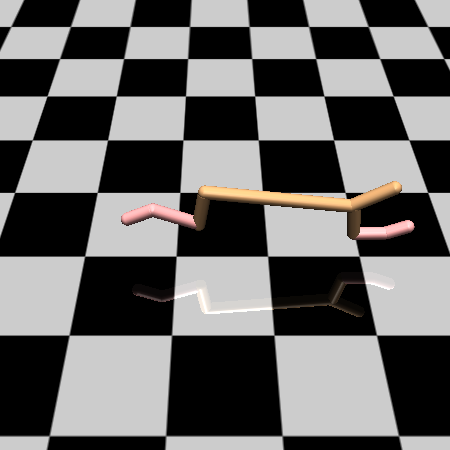}
    \caption{Agent's behavior under attacked dynamics using random search}
    \end{subfigure}
    \caption{Results for Dynamics Attack on HalfCheetah}
    \label{fig:env-half}
\end{figure}

\begin{figure}[h!]
\label{hopper}
    \centering
    \begin{subfigure}{\linewidth}
    \centering
    \includegraphics[width=0.12\textwidth]{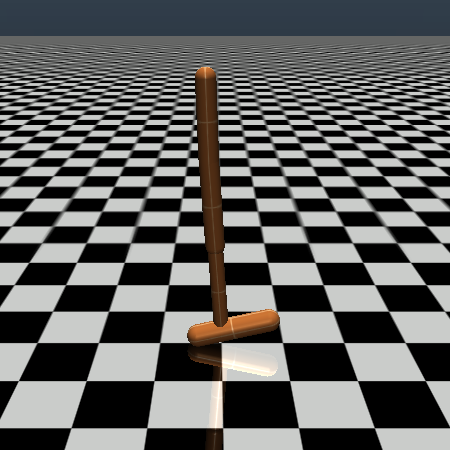}
    \includegraphics[width=0.12\textwidth]{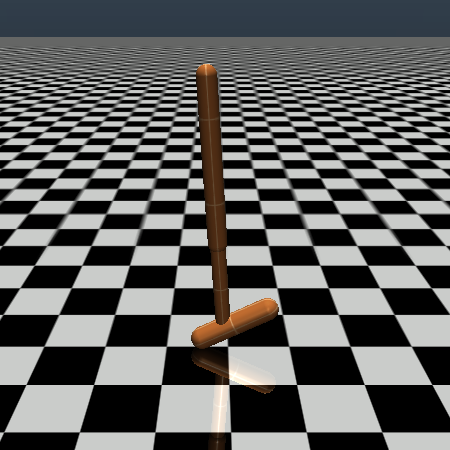}
    \includegraphics[width=0.12\textwidth]{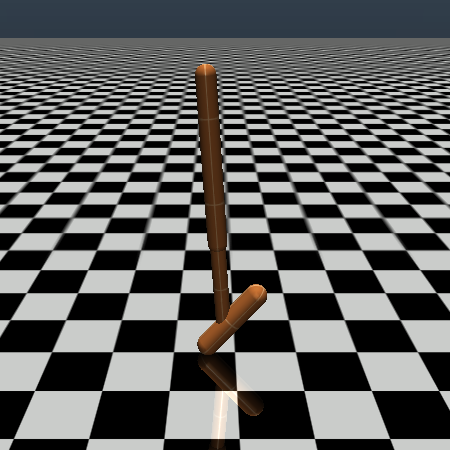}
    \includegraphics[width=0.12\textwidth]{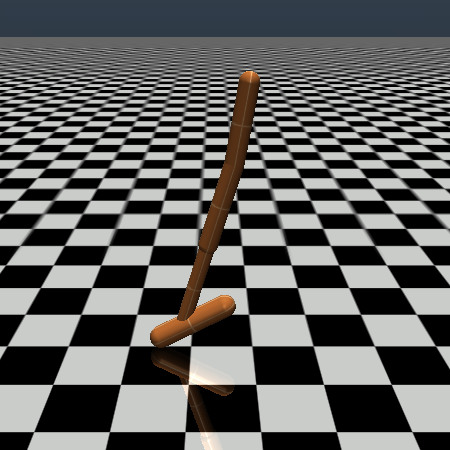}
    \includegraphics[width=0.12\textwidth]{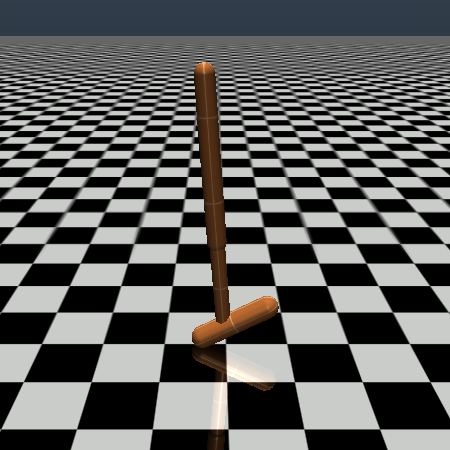}
    \includegraphics[width=0.12\textwidth]{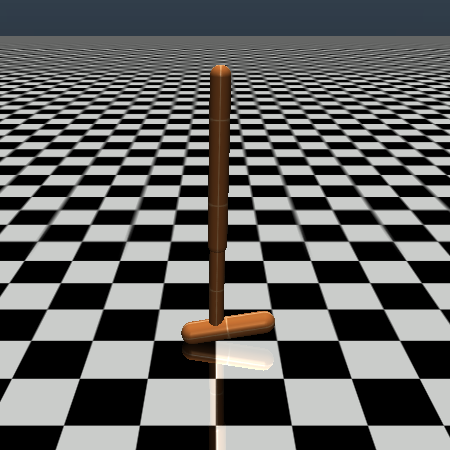}
    \includegraphics[width=0.12\textwidth]{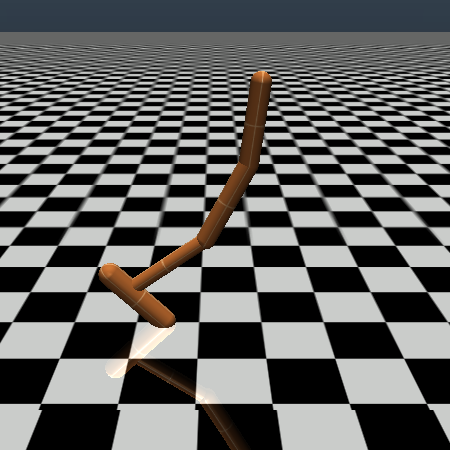}
    \caption{Agent's behavior under normal dynamics}
    \end{subfigure}
    \begin{subfigure}{\linewidth}
    \centering
    \includegraphics[width=0.12\textwidth]{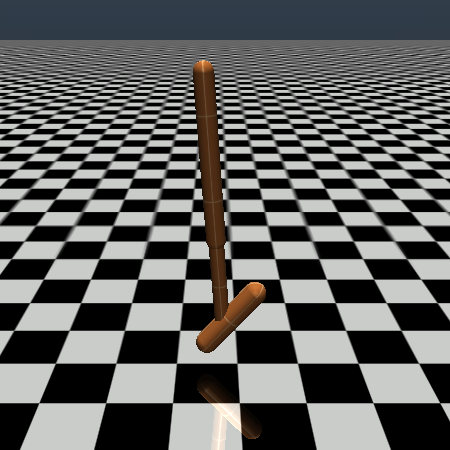}
    \includegraphics[width=0.12\textwidth]{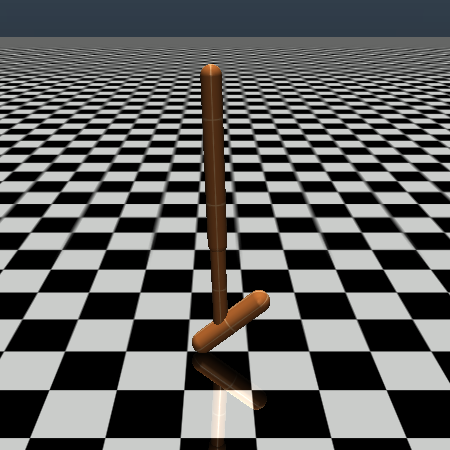}
    \includegraphics[width=0.12\textwidth]{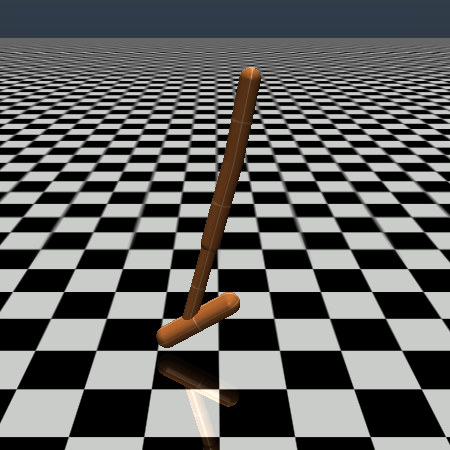}
    \includegraphics[width=0.12\textwidth]{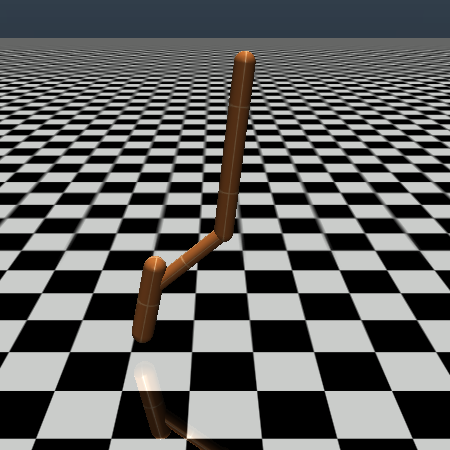}
    \includegraphics[width=0.12\textwidth]{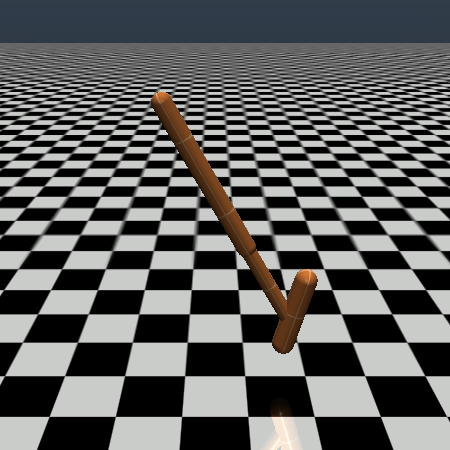}
    \includegraphics[width=0.12\textwidth]{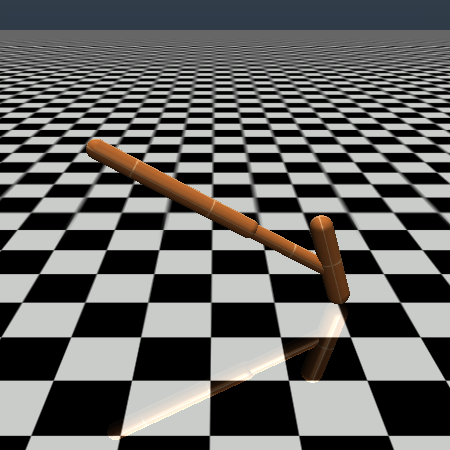}
    \includegraphics[width=0.12\textwidth]{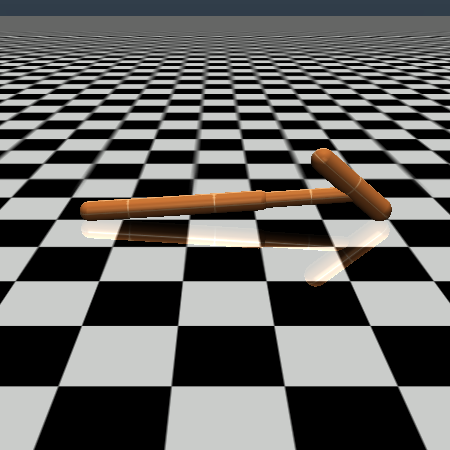}
    \caption{Agent's behavior under abnormal dynamics}
    \end{subfigure}
    \begin{subfigure}{\linewidth}
    \centering
    \includegraphics[width=0.12\textwidth]{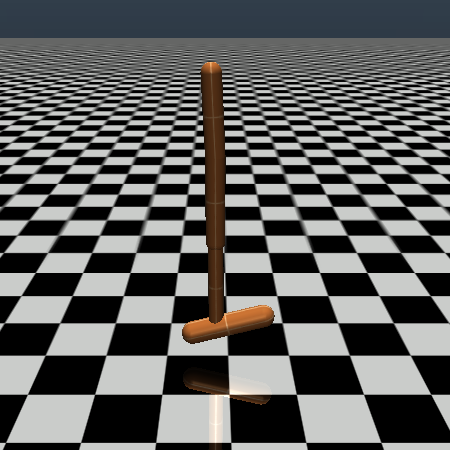}
    \includegraphics[width=0.12\textwidth]{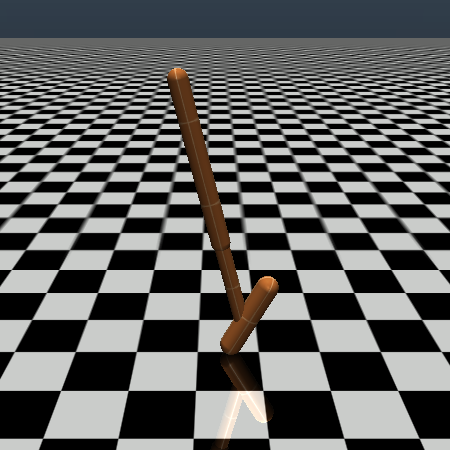}
    \includegraphics[width=0.12\textwidth]{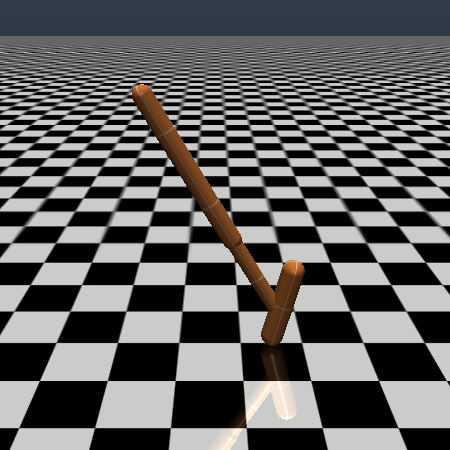}
    \includegraphics[width=0.12\textwidth]{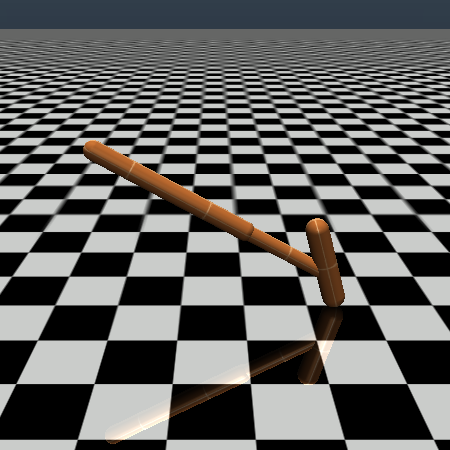}
    \includegraphics[width=0.12\textwidth]{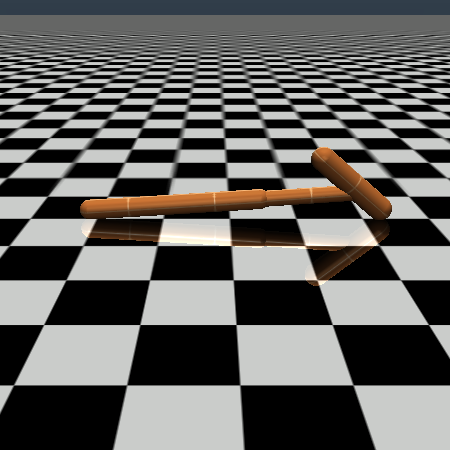}
    \includegraphics[width=0.12\textwidth]{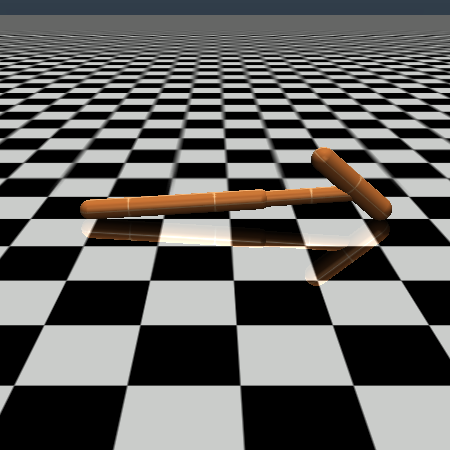}
    \includegraphics[width=0.12\textwidth]{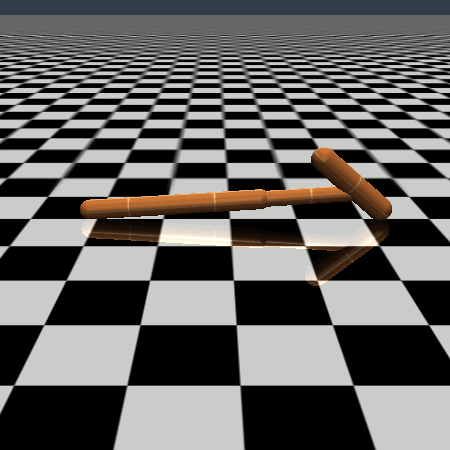}
    \caption{Agent's behavior under attacked dynamics using RL}
    \end{subfigure}
    \begin{subfigure}{\linewidth}
    \centering
    \includegraphics[width=0.12\textwidth]{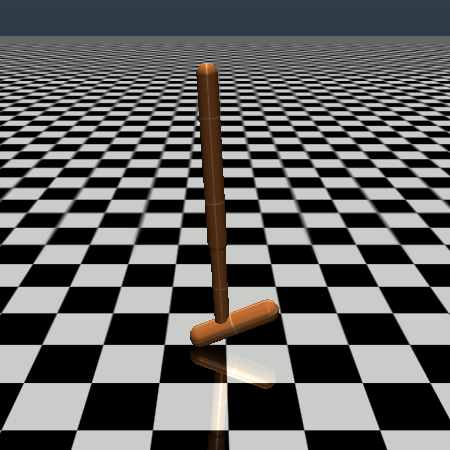}
    \includegraphics[width=0.12\textwidth]{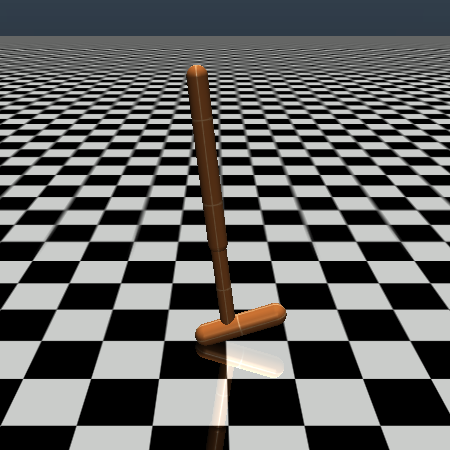}
    \includegraphics[width=0.12\textwidth]{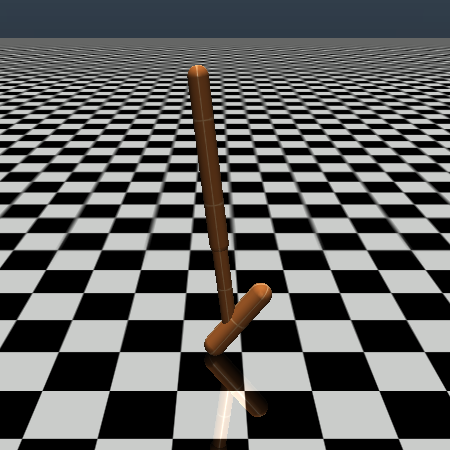}
    \includegraphics[width=0.12\textwidth]{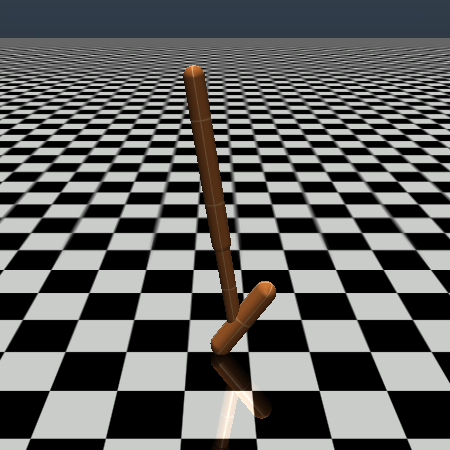}
     \includegraphics[width=0.12\textwidth]{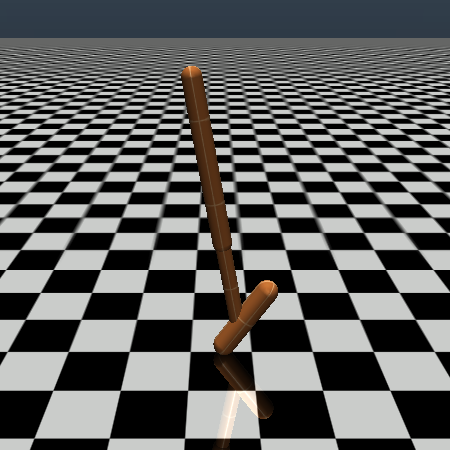}
    \includegraphics[width=0.12\textwidth]{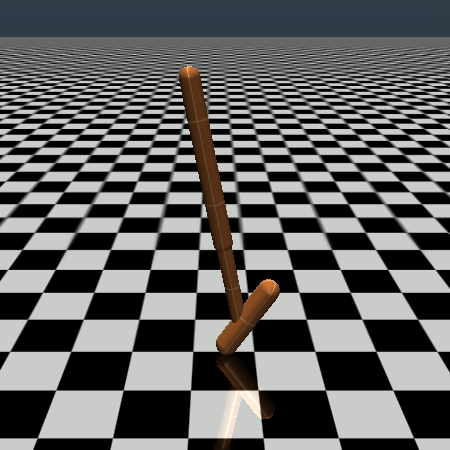}
    \includegraphics[width=0.12\textwidth]{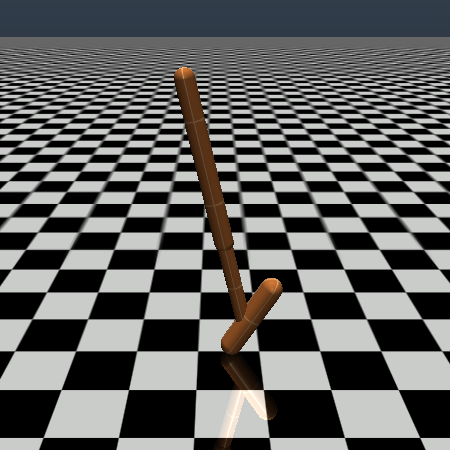}
    \caption{Agent's behavior under attacked dynamics using random search}
    \end{subfigure}
    \caption{Results for Dynamics Attack on Hopper}
    \label{fig:env-hopper}
\end{figure}

\begin{figure*}
    \centering
    \includegraphics[width=0.14\linewidth, height=0.14\linewidth]{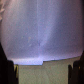}
    \includegraphics[width=0.14\linewidth, height=0.14\linewidth]{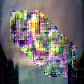}
    \includegraphics[width=0.14\linewidth, height=0.14\linewidth]{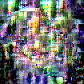}
    \includegraphics[width=0.14\linewidth, height=0.14\linewidth]{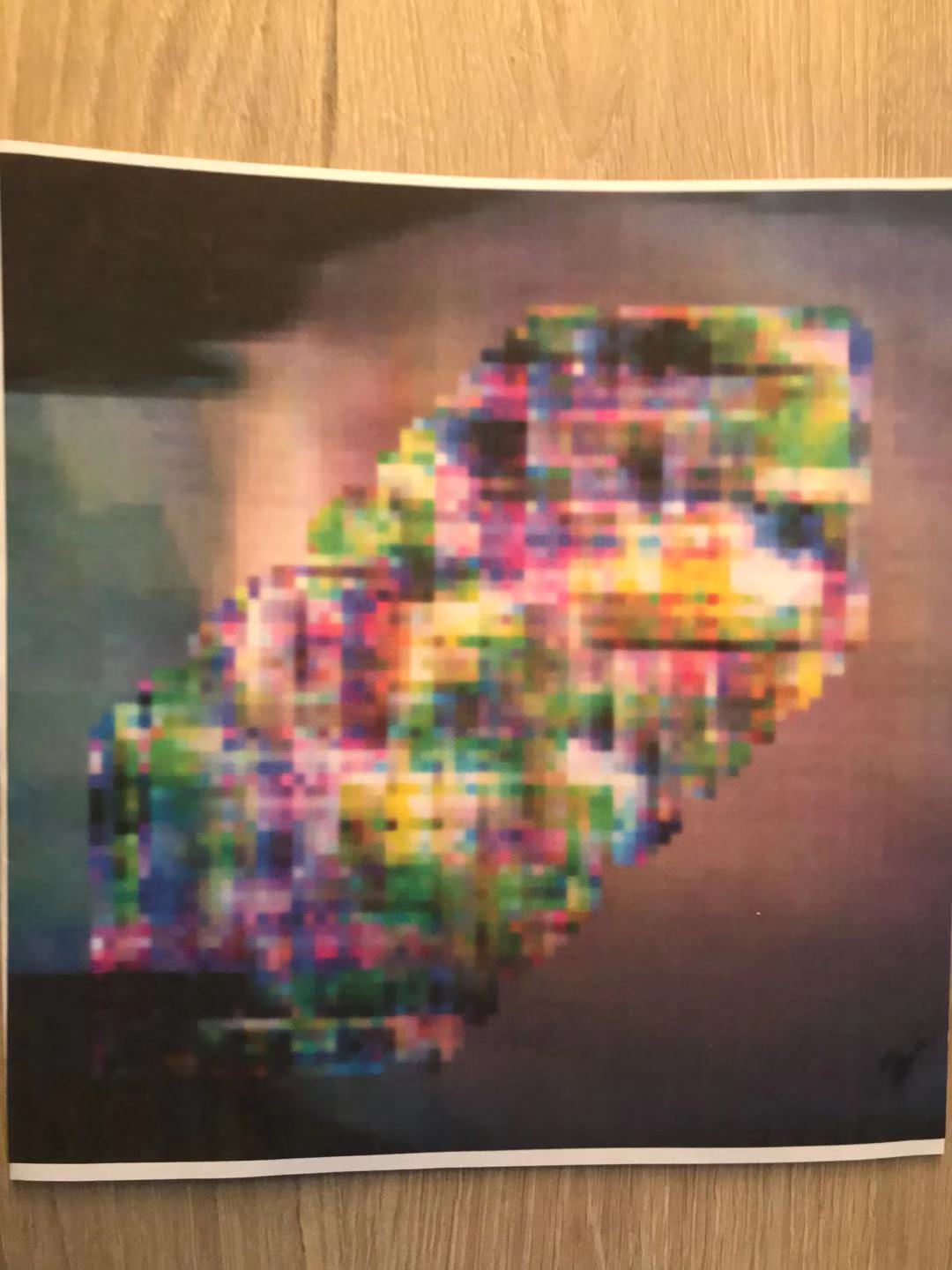}
    \includegraphics[width=0.14\linewidth, height=0.14\linewidth]{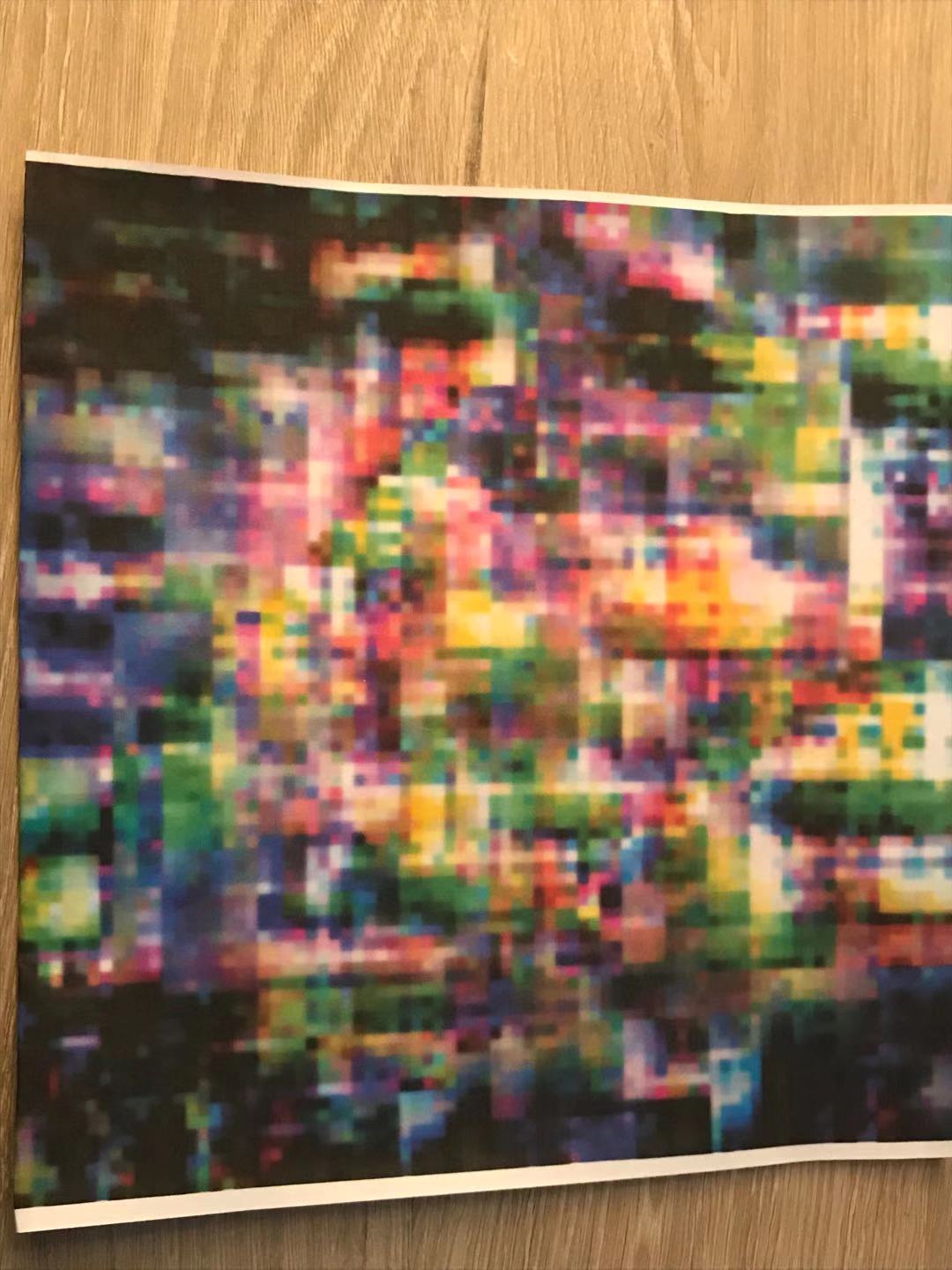}
    \includegraphics[width=0.14\linewidth, height=0.14\linewidth]{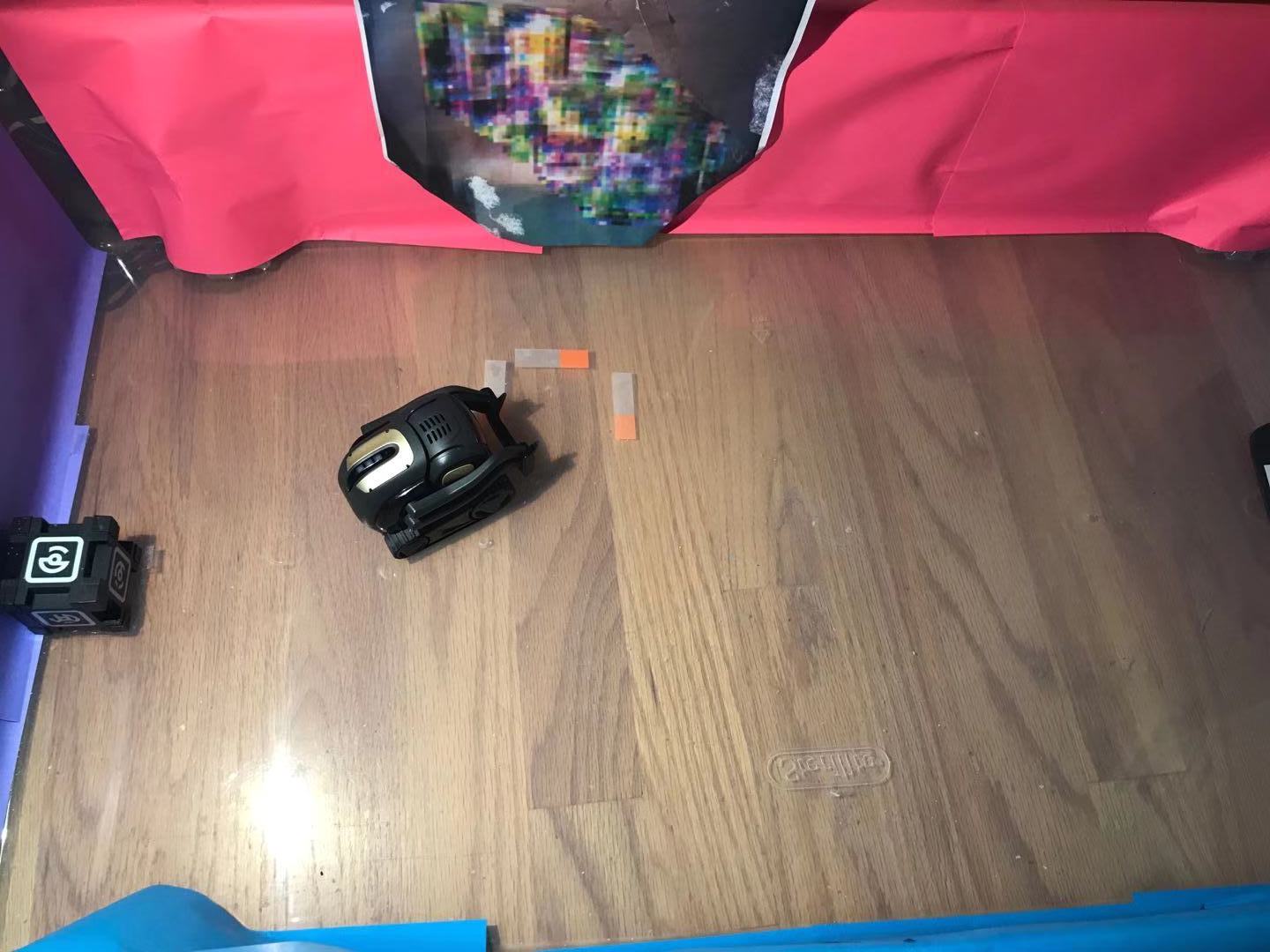}
    \caption{Original robot observation (first), optimized perturbed image with white-box approach (second), optimized perturbed image with black-box approach (third), printed perturbation patch with white-box approach (forth), printed perturbation patch with black-box approach (fifth) physical environment with the playground, adversarial patch and the robot (sixth). }
    \label{fig:perturbed_image}
\end{figure*}

\section{Experimental Setup}
\label{sec:expsetup}
We trained DQN models on Pong, Enduro, and TORCS,
and trained DDPG models on HalfCheetah and Hopper.
The DQN model for training Pong and Enduro consists of 3 
convolutional layers and 2 fully connected layers. 
The two network architectures differ in their number of 
filters. Specifically, the first network structure is
$C(4,32,8,4)-C(32,64,4,2)-C(64,64,3,1)-F(3136,512)-F(512,na)$,
where $C(c_1, c_2, k, s)$ denotes a convolutional layer
of input channel number $c_1$, output channel number $c_2$,
kernel size $k$, and stride $s$. $F(h_1, h_2)$ denotes a
fully connected layer with input dimension $h_1$ and output 
dimension $h_2$, and $na$ is the number of actions in
the environment. The DQN model for training TORCS consists
of 3 convultional layers and 2 or 3 fully connected layers.
The convultional layers' structure is $C(12,32,8,4)-C(32,64,4,2)-
C(64,64,3,1)$, and the fully connected layer structure is
$F(3136, 512)-F(512, 9)$ for one model and $F(3136,512)-F(512,128)-F(128,9)$
for the other model. 

The DDPG model for training HalfCheetah and Hopper 
consists of several fully connected layers. We trained 
two different policy network structures on all MuJoCo environments.
The first model's actor is a network of size $F(dim_{in},64)-F(
64, 64)-F(64, na)$ and the critic is a network of size
$F(dim_{in}, 64)-F(64, 64)-F(64, 1)$. The second model's
actor is a network of size $F(dim_{in}, 64)-F(64,64)-F(64,64)-F(64,64)-F(64, na)$, and the critic
is a network of size $F(dim_{in}, 64)-F(64, 64)-F(64, 64)-
F(64, 64)-F(64, 1)$. For both models,
we added ReLU activation layers between these 
fully connected layers.  

For the network structure of \textsf{obs-nn-wb} attack, we use C(12,8,7,1)-C(8,16, 3, 2)-C(16, 32, 3, 2) -R(32,32,3,1)- R(32,32,3,1)-R(32,32,3,1)- R(32,32,3,1)-R(32,32,3,1)-R(32,32,3,1)-C(32,16,3,2)--C(16,8,3,2)-C(8,12, 7,7) for torcs and C(4,8,7,1)-C(8,16, 3, 2)-C(16, 32, 3, 2) -R(32,32,3,1)- R(32,32,3,1)-R(32,32,3,1)- R(32,32,3,1)-R(32,32,3,1)- R(32,32,3,1)-C(32,16,3,2)--C(16,8,3,2)-C(8,4, 7,7) for atari games. R(c1,c2,k,s) denotes a residual block layer of input channl c1, output channel c2, kernel size k, and stride s. 

The TORCS autonomous driving environment is a discrete
action space control environment with 9 actions, they 
are turn left, turn right, keep going, turn left and accelerate,
turn right and accelerate, accelerate, turn left and decelerate,
turn right and decelerate and decelerate. The other
4 games, Pong, Enduro, HalfCheetah, and Hopper are 
standard OpenAI gym environment. 

The trained model's performance when tested
without any attack is included in the following
Table~\ref{tab:tab1}.

\begin{table}[h]
    \centering
    \caption{ Model performance among different environments}
    \label{tab:tab1}
    \begin{tabular}{cccccc}
         \toprule
        & Torcs & Enduro & Pong & HalfCheetah & Hopper \\ 
        \midrule
        Reward & 1720.8 & 1308 & 21 & 8257 & 3061  \\ 
        \# of steps &1351 &16634 & 1654 &1000&1000\\
         \bottomrule
    \end{tabular}
\end{table}

The DDPG neural network used for \textsf{env-search-bb} is the same as the first model (3-layer fully connected network) used for training the policy for HalfCheetah, except that the input dimension $dim_{in}$ is of the perturbation
parameters' dimension, and output dimension is also of the perturbation parameters' dimension. For HalfCheetah, Hopper and TORCS, these input and output dimensions
are 32, 20, and 10, respectively.

\section{Additional Experiment Results}

\paragraph{Example Trajectory of Agents Under Dynamics Attack}
In Figure~\ref{fig:env-half}, Figure~\ref{fig:env-hopper},
we show the sequences of states when the agents
are under attack with the random search or reinforcement
learning based search method. 
The last image in each 
sequence denotes the state at same step $t$. The
last image in each abnormal dynamics rollout sequence
corresponds to the target state, the last image
in the attacked dynamics using RL search denotes the
attacked results using \textsf{env-search-bb}, and the last image in the attacked dynamics using random search denotes the attacked results using \textsf{env-rand-bb}. It can be seen
from these figures that \textsf{env-search-bb} method is very effective
at achieving targeted attack while using random search,
it is relatively harder to achieve this.

\paragraph{Real world attack additional results} We include in Figure~\ref{fig:perturbed_image} the original robot observation, the optimized perturbed image, the actual printed perturbed image patch and the bird-view of the robot environment after the patch has been mounted. 


\end{document}